\newcommand*{\centernot}{%
	\mathpalette\@centernot
}
\def\@centernot#1#2{%
	\mathrel{%
		\rlap{%
			\settowidth\dimen@{$\m@th#1{#2}$}%
			\kern.5\dimen@
			\settowidth\dimen@{$\m@th#1=$}%
			\kern-.5\dimen@
			$\m@th#1\not$%
		}%
		{#2}%
	}%
}
\NewDocumentCommand{\independent}{}{\perp\mkern-9.5mu\perp}
\NewDocumentCommand{\dependent}{}{\centernot{\independent}}
\NewDocumentCommand{\notindependent}{}{\dependent}
\NewDocumentCommand{\onehalf}{}{\sfrac{1}{2}}
\newcommand{\BreakingSmallSpace}{\hspace{.16667em}} 
\newcommand{\Slash}{\,/\BreakingSmallSpace}
\newcommand{\asswlog}{w.\,l.\,o.\,g.\ }
\NewDocumentCommand{\eg}{}{e.\,g.\ }
\NewDocumentCommand{\Eg}{}{E.\,g.\ }
\NewDocumentCommand{\ie}{}{i.\,e.\ }
\NewDocumentCommand{\Ie}{}{I.\,e.\ }
\NewDocumentCommand{\txt}{m}{\text{\normalfont{#1}}}
\NewDocumentCommand{\val}{m}{\mathcal{#1}}
\NewDocumentCommand{\gPhys}{m}{#1^{\txt{phys}}}
\NewDocumentCommand{\gDescr}{m}{#1^{\txt{descr}}}
\NewDocumentCommand{\gTransf}{m}{#1^{\txt{phys}}}
\NewDocumentCommand{\gVisible}{m}{#1^{\txt{visible}}}
\NewDocumentCommand{\gMask}{m}{#1^{\txt{mask}}}
\NewDocumentCommand{\gUnion}{m}{#1^{\txt{union}}}
\NewDocumentCommand{\gUnionId}{m}{#1^{\txt{union}}_{\txt{detect}}}
\NewDocumentCommand{\gCF}{m}{#1^{\txt{CF}}}
\NewDocumentCommand{\gIdent}{m}{#1^{\txt{ident}}}
\NewDocumentCommand{\gDetect}{m}{#1^{\txt{detect}}}
\NewDocumentCommand{\FixedR}{}{_{R=r}}
\definecolor{boldcolor}{gray}{0.18}
\NewDocumentCommand{\defName}{m}{\textbf{\textcolor{boldcolor}{#1}}}
\DeclareMathOperator{\PearlDo}{do}
\DeclareMathOperator{\Pa}{Pa}
\DeclareMathOperator{\pa}{pa}
\DeclareMathOperator{\Anc}{Anc}
\DeclareMathOperator{\Adj}{Adj}
\DeclareMathOperator{\Acycl}{Acycl}
\DeclareMathOperator{\supp}{supp}
\DeclareMathOperator{\const}{const}
\DeclareMathOperator{\xor}{xor}
\DeclareMathOperator{\boolAnd}{and}
\DeclareMathOperator{\biasSource}{Source}
\theoremstyle{plain} 
\newtheorem{lemma}{Lemma}[section]
\newtheorem{prop}[lemma]{Proposition}
\newtheorem{cor}[lemma]{Corollary}
\theoremstyle{definition} 
\newtheorem{definition}[lemma]{Definition}
\newtheorem{assumption}[lemma]{Assumption}
\newtheorem{example}[lemma]{Example}
\newtheorem{rmk}[lemma]{Remark}
\colorlet{color_f}{black}
\colorlet{color_g}{black!10!orange}
\colorlet{color_p0}{gray!30!blue}
\colorlet{color_p1}{black!30!green}
\newcommand{\plotlinewidth}[0]{0.15em}
\title{
Causal Modeling in Multi-Context Systems: Distinguishing Multiple Context-Specific Causal Graphs which Account for Observational Support}
\author{%
    Martin Rabel$^{1,2,3}$,
    Wiebke Günther$^{1,4}$,\\
    Jakob Runge$^{*,2,3,4,1}$,
    Andreas Gerhardus$^{*,1}$\\
    \small
    $^*$equal supervision\\
    \small
    $^1$German Aerospace Center (DLR), Institute of Data Science, Jena, Germany\\
    \small$^2$Center for Scalable Data Analytics and Artificial Intelligence\\[-0.3em]
    \small (ScaDS.AI) Dresden\Slash{}Leipzig, Germany \\
    \small$^3$Technische Universität Dresden, Faculty of Computer Science, Dresden, Germany \\
    \small$^4$Technische Universität Berlin, Institute of Computer Engineering and Microelectronics,\\[-0.3em]
    \small Berlin, Germany%
}
\begin{document}

\maketitle

\begin{abstract}
    Causal structure learning with data from multiple contexts carries both opportunities and challenges. Opportunities arise from considering shared and context-specific causal graphs enabling to generalize and transfer causal knowledge across contexts. However, a challenge that is currently understudied in the literature is the impact of differing observational support between contexts on the identifiability of causal graphs. Here we study in detail recently introduced \citep{MethodPaper} causal graph objects that capture both causal mechanisms and data support, allowing for the analysis of a larger class of context-specific changes, characterizing distribution shifts more precisely. We thereby extend results on the identifiability of context-specific causal structures and propose a framework to model context-specific independence (CSI) within structural causal models (SCMs) in a refined way that allows to explore scenarios where these graph objects differ. We demonstrate how this framework can help explaining phenomena like anomalies or extreme events, where causal mechanisms change or appear to change under different conditions. Our results contribute to the theoretical foundations for understanding causal relations in multi-context systems, with implications for generalization, transfer learning, and anomaly detection. Future work may extend this approach to more complex data types, such as time-series.
\end{abstract}

\section{Introduction}\label{sec:intro}
The combination of data from multiple datasets obtained from similar generating processes (contexts) is an important topic of study. An example are climate variables such as temperature and precipitation in different contexts such as humid and dry climate regions.
A key task then is to understand which changes in the underlying model may explain observed differences.
Such understanding is relevant to reason about the validity  of transfer of knowledge between contexts; this in turn is especially relevant to causal models \citep{SelectionVars, JCI} that aim to be robust \citep{RojasCarulla2018,AnchorRegression} across contexts.
Data from multiple contexts has both shared (between contexts) and individual
(per context) properties. The physical mechanisms in humid and dry climates are often the same, but the observable range of values differs, \eg, in that no freezing temperatures may be observed in a (hot and) dry region.

In order to capture as much
information about the underlying system(s) as possible,
it seems natural to consider understanding
qualitative aspects, for example, causal graphs, of both
the shared and the individual contexts \citep{LDAG_logical}.
We focus on representing qualitative information about the individual contexts, enriched with information from the joint model.
The relevance of available observations indicated above becomes apparent from following problem:
One cannot infer properties of mechanisms outside the range of values
that are actually observed (observational support), without prior knowledge about extrapolation.
But when combining data from multiple contexts,
the other contexts do provide knowledge about extrapolation
for an individual context.
Indeed it turns out that
combining support-properties and causal dependencies in a
single graphical object \citep{MethodPaper} allows for qualitative statements
by tracking few qualitative properties; a quantitative tracking of support
is usually impractical on continuous data.
Distinctions based on availability of data per context,
have interesting implications \eg for understanding
anomalies or extreme events.
It provides a possible explanation why
it seems to often be the case that (presumably robust) causal mechanisms
apparently change under extreme conditions (§\ref{sec:applications}). 

Intuitively, per-context information, from the
causal model perspective,
should be a qualitative change.
For example $Y = \mathbbm{1}(R) \times X + \eta_Y$.
Such a structure induces a
context-specific independence (CSI), \eg $X \independent Y | R=0$.
One may be tempted presume this implication to be an equivalence, and thus to attribute the occurrence of CSI to such a changing mechanism in the model.
Intriguingly, this direction -- drawing conclusions from CSI-structure
\citep{LDAG_definition, LDAG_logical, LDAG_learning}
about SCM-structure -- is more subtle, as the example below
(extending on observations of \citep{MethodPaper}) illustrates%
\footnote{We do not discuss finite-sample properties,
but these effects also occur,
\eg if observational support becomes narrow on the source compared to the derivative
of the mechanism and noise on the target (Rmk. \ref{rmk:replace_support_by_finitesample}).}:

\begin{example}\label{example:intro}
	Context specific independence from non-observation:\\
	\begin{minipage}{0.4\textwidth}
		\begin{tikzpicture}[domain=-2:2]
		\draw[->] (-2,0) -- (2.2,0) node[right] {$T$};
		\draw[->] (0,-0.2) -- (0,2);
		
		\draw[color=color_p1, line width=\plotlinewidth, smooth]   plot (\x,{0.5 * (1-(\x-0.3)*(\x-0.3) + sqrt((1-(\x-0.3)*(\x-0.3))*(1-(\x-0.3)*(\x-0.3))))});
		\draw[color=color_p1, line width=\plotlinewidth]	(1.3,0.6) node[anchor=west] {$p(T|R=1)$};
		\draw[color=color_g, line width=\plotlinewidth]    plot (\x,{0.5*(\x + sqrt(\x*\x))}) node[anchor=north west] {$f_Y(T)$};
		\draw[color=color_p0, line width=\plotlinewidth, smooth]   plot (\x,{0.5*(\x - sqrt(\x*\x))*sin(90*\x)});
		\draw[color=color_p0, line width=\plotlinewidth]	(-2, 1.2)node[anchor=south west] {$p(T|R=0)$};
		\end{tikzpicture}
	\end{minipage}
	\begin{minipage}{0.6\textwidth}
		Consider the following model with dependencies
		$R \rightarrow T \rightarrow Y$,
		think e.\,g.\ of $R=0$ as indicating samples taken in winter,
		$R=1$ samples taken in summer, $T$ is the temperature and
		$Y$ depends on temperature but only if $T>0$°C (above freezing).
	\end{minipage}
\end{example}

In this example we notice multiple unexpected properties:
\begin{enumerate}[label=(\alph*)]
	\item
		$f_Y$ depends on $T$, but this dependence becomes
		\emph{within our observations} invisible for $R=0$ ("system states" with
		$T>0$ where also $R=0$ are never reached).
	\item
		$f_Y$ itself does not actually change (it does not even depend on $R$ directly\Slash{}causally).
	\item
		Given any sort of independence model represented by a graph (\eg an LDAG \citep{LDAG_definition}),
		does it agree with absence (a) or presence (b) of the edge $T \rightarrow Y$
		for $R=0$?
\end{enumerate}

The point of view (a) prefers a "simpler" model for regime $R=0$, 
in an Occam's razor sense for \emph{this regime}, \ie 
following the philosophy
that a model for this regime should only be as complicated as it has to be
to describe this regime relative to no prior knowledge.
We will call this concept the "descriptive" graph; for the example above, it should
\emph{not} include the edge $T \rightarrow Y$. For example there would be no
reason to fit a regressor of $Y$ to $T$ in this regime.

The point of view (b) prefers a "simpler" model for regime $R=0$,
in an Occam's razor sense relative to \emph{all} the data.
It follows the philosophy that assuming causal models are robust,
we should consider validity of transfer the norm; hence we should only claim a
regime-specific model to be different from the union-model,
if there is evidence for this difference;
a model should only be as complicated as it has to be
to describe this regime relative to prior knowledge of the union-model.
We will call this concept the "transfer" (§\ref{sec:ConnectToTransfer})
or "physical" graph.
For the example above, it should
include the edge $T \rightarrow Y$.

Finally, point (c) is intimately linked to the possibility
of constraint-based discovery of these graphical objects.
We find that the SCM-centered perspective here
includes slightly different
information than many commonly used independence models (see §\ref{apdx:LDAGs}).
We will discuss the identifiability of these structures from data in detail in §\ref{sec:ConnectToCSI} and §\ref{sec:ConnectToJCI}.

Finally abstracting the underlying problem in the example \ref{example:intro} we can better understand, how it is deeply connected to generalization and transfer: From the illustration in Fig.\ \ref{fig:intro} we find the following.
For the examples on the left hand side a single mechanism is observed across multiple contexts. This means it is in principle save to learn \eg a regression on all data, transfer it between contexts, or extrapolate beyond observations of one context using the observations of the other contexts.
For the examples on the right hand side in contrast, clearly no such argument should be made.
Distinguishing the two types of changes seems thus quite relevant;
it is clearly possible in some cases: If observational support
overlaps and mechanisms change this leads to a testable discrepancy.
Yet,  at first sight, this appears to be a rather special case, and inferences beyond it seem difficult, if not impossible.

An important qualitative aspect of such changes is
the presence of dependency in a causal sense; this aspect is formally captured by
the difference between $\gDescr{G}$ and $\gPhys{G}$ (to be defined below).
This formalization of the problem allows for a systematic approach to
distinguish between these types of changes in §\ref{sec:ConnectToJCI}.
Interestingly, the properties of the surrounding causal graph can
sometimes be leveraged to draw conclusions about a particular example:
A simple example is
that changing mechanisms depend explicitly on the context $C$,
thus occur in a causal graph among the children of $C$;
if a link appears to change and
neither of its end-points could possibly be a child of $C$ (for example because
neither endpoint is adjacent to $C$), one may exclude a physical change of mechanism.
Fig.\ \ref{fig:intro} focuses on extreme cases concerning the identifiability of the type of change; intermediate cases and cases with (possibly much) more than two contexts depend strongly on assumptions about prior probabilities of different cases.

\newcommand{\scale}[0]{1.4}

\begin{figure}[ht!]
	\begin{tabular}{p{0.1\textwidth}|p{0.4\textwidth}|p{0.42\textwidth}}
        \multicolumn{3}{c}{
            \textbf{Changes in context-specific independencies (CSI)}
        }\\\hline
		 & no change of mechanism & changing mechanism:
		\textcolor{color_f}{$C=0$} vs.\ \textcolor{color_g}{$C=1$}\\
		  & $Y := f_Y(X, \eta_Y)$
        &
        {$\begin{aligned}
            Y &:= 
            (1-C) \times \bm{\textcolor{color_f}{f(X, \eta_Y)}}\\[-0.2em]
            &+ C \times \bm{\textcolor{color_g}{g(x, \eta_Y)}}
        \end{aligned}$}\\
		\hline
		$\gUnion{G}$ & $X \longrightarrow Y$ & $X \longrightarrow Y$\\
		$\gPhys{G}$ & $X \longrightarrow Y$ & $X \dashrightarrow Y$\\
		$\gDescr{G}$ & $X \dashrightarrow Y$ & $X \dashrightarrow Y$\\
		\hline
        indepen\-dencies
        &
        \multicolumn{2}{c}{
            \centering
            In \emph{all} cases:
            $X \notindependent Y$,\quad
            $X \notindependent Y | C$,\quad
            $\bm{X \independent Y | C=0}$,\quad
            $X \notindependent Y | C=1$
        }
        \\\hline
        Densities &
		\multicolumn{2}{c}{\textcolor{color_p0}{$P(X|C=0)$}, \textcolor{color_p1}{$P(X|C=1)$}}\\[0.5em]
		\hline
		overlap
		&
		\begin{tikzpicture}[domain=-2:2, scale=\scale]
			\draw[-, line width=\plotlinewidth, color=color_f] (-2,0) -- (0,0) -- (2,1);
			\draw[color=color_p0, line width=\plotlinewidth, smooth, dashed]   plot (\x,{exp(-(\x+1)*(\x+1)*3)});
			\draw[color=color_p1, line width=\plotlinewidth, smooth, dashed]   plot (\x,{exp(-(\x)*(\x)});
		\end{tikzpicture}
		&	
		\begin{tikzpicture}[domain=-2:2, scale=\scale]
			\draw[-, line width=\plotlinewidth, color=color_f] (-2,0) -- (2,0);
			\draw[-, line width=\plotlinewidth, color=color_g] (-2,0) -- (2,1);
			\draw[color=color_p0, line width=\plotlinewidth, smooth, dashed]   plot (\x,{exp(-(\x+1)*(\x+1)*3)});
			\draw[color=color_p1, line width=\plotlinewidth, smooth, dashed]   plot (\x,{exp(-(\x)*(\x)});
		\end{tikzpicture}
		\\\hline
		negligible overlap
		&
		\begin{tikzpicture}[domain=-2:2, scale=\scale]
			\draw[-, line width=\plotlinewidth, color=color_f] (-2,0) -- (0,0) -- (2,1);
			\draw[color=color_p0, line width=\plotlinewidth, smooth, dashed]   plot (\x,{exp(-(\x+1)*(\x+1)*3)});
			\draw[color=color_p1, line width=\plotlinewidth, smooth, dashed]   plot (\x,{exp(-(\x-1)*(\x-1)*3)});
		\end{tikzpicture}
		&	
		\begin{tikzpicture}[domain=-2:2, scale=\scale]
			\draw[-, line width=\plotlinewidth, color=color_f] (-2,0) -- (2,0);
			\draw[-, line width=\plotlinewidth, color=color_g] (-2,0) -- (2,1);
			\draw[color=color_p0, line width=\plotlinewidth, smooth, dashed]   plot (\x,{exp(-(\x+1)*(\x+1)*3)});
			\draw[color=color_p1, line width=\plotlinewidth, smooth, dashed]   plot (\x,{exp(-(\x-1)*(\x-1)*3)});
		\end{tikzpicture}
		\\\hline
	\end{tabular}
	\caption{In each plot, the horizontal axis displays $X$, the vertical axis displays (rescaled) probability-density for the densities (blue and green), and $Y$ for the mechanisms (black and orange).
    In the graphs at the top, dashed edges indicate edges that are only present for $C=1$ but not for $C=0$.
    See main text for further explanations.}\label{fig:intro}
\end{figure}
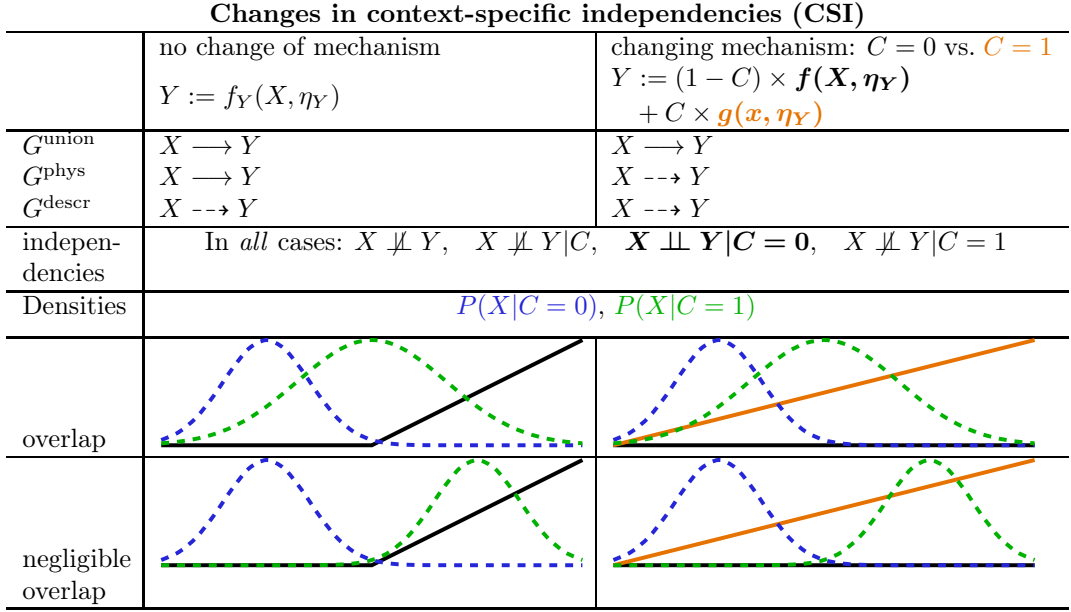

\paragraph{Contributions}
	We further study graphical objects, introduced in \citep{MethodPaper} that
	capture qualitative information about the
	causal structure plus availability of observations.
	We focus on the graphs' skeleta (that is, on their adjacencies only).
\begin{itemize}
	\item
	We extend identifiablitly results (in particular the Markov property) of \citep{MethodPaper} to the
    case where context-specific graphs do not have to agree, but may differ.
	\item
	Thus, we show that these objects are empirical, \ie can be identified as well-separated objects
    from data at least in part.
	\item
	We provide a mathematical framework based on solution-functions,
	that captures implications of CSI in terms of SCM-properties.
	We focus on a single context-indicator and skeleta, but the framework
	should allow for the derivation of more general results.
\end{itemize}

This paper is primarily concerned with the theoretical understanding of the problem at hand.
Nevertheless, much of the usefulness of the results is most easily illustrated by examples.
Therefore we collected some toy-examples in §\ref{apdx:practical_examples} to illustrate how and what conclusions can be drawn via the material introduced in the paper to solve practical problems,
a more systematic and automated approach\Slash{}method to draw such conclusions is left to future work.
In particular we continue
the example \ref{example:intro} in §\ref{apdx:practical_example_intro}.

\section{Related Literature}

A more structured overview together with additional details on some aspects like independence models can be found in §\ref{apdx:ConnectionsToLit}.

Combining datasets for causal modeling, in particular using a
context-indicator variable, has been studied extensively
to gain insights (\eg orientations) about the joint-\Slash"union"-model \citep{bareinboim2012transportability,SelectionVars, CD-NOD, JCI}.
E\,g.\ \citep{bareinboim2012transportability,SelectionVars} in particular
discuss transportability between contexts, but concerning
identifiability (structure of hidden confounders),
not available observational support.
Per-context causal models have been considered \eg by \citep{Saeed2020}, but without the descriptive vs.\ physical
distinction made here and without the connection to context-specific independence (CSI).
Graphical models for CSI in turn have been studied \eg by \citep{stratified_graphs},
or as labeled directed acyclic graphs (LDAGs) \citep{LDAG_definition, LDAG_logical},
but from the independence-model perspective, \ie without the connection to SCMs (and thus causal modeling).
Our approach opens interesting possibilities of connecting the causal and the independence-model world (§\ref{apdx:LDAGs}).
We can treat certain types of cyclic models. These are less general than those discussed in \citep{BongersCyclic}.
But we show that for these specific cyclic models (away from $R$)
the causal graph of the union-model -- not just its acyclification -- can be recovered (lemma \ref{lemma:UnionIdentifiable})
by using CSI-information.
Causal discovery with cyclic union-graphs
is also discussed \eg in \citep{hyttinen2012learning,strobl2023causal}.
The lack of observational support we study has certainly been noticed in
effect-estimation, where statements can only be made where the fit has support
-- at least for single-step adjustment \citep{Shpitser2011,Perkovic2018},
for multi-step procedures \eg the ID-Algo \citep{IDAlgo,IDAlgoMultivar} or
counter-factual queries like natural direct effects \citep{Shpitser2011}
the issue might be more subtle.
For counter-factuals more generally
similar issues have been observed \citep[§5.1]{robins2010alternative},
but no treatment from the perspective given here seems to be available.
There is also a close connection to minimality conditions and their effect
on graph-definitions.
Minimality conditions formulated in different way, for exampble:
On parent-sets in structural equations directly \citep[Def.\ 2.6]{BongersCyclic};
indirectly by requiring a minimal edge-set for which a Markov-condition holds
(dubbed "causally minimal" in \citep[§6.5.3]{Elements})
replacing a faithfulness condition (see §\ref{sec:Faithfulness});
related is also the relaxation from faithfulness to adjacency-faithfulness \citep{ramsey2012adjacency}.
Finally, a lack of observational support may be considered a missing data problem.
The literature combining missing data with causal models typically
considers latent variables \citep{spirtes2001causation,Zhang2006},
missing datasets for certain interventions (\ie interventional data is
available, but incomplete with respect to possible interventions) \citep{triantafillou2014constraint,tillman2009structure},
or robustness of causal models \citep{RojasCarulla2018, AnchorRegression}.
These topcis are quite different from the viewpoint of the problem we study.

Our novel contribution beyond \citep{MethodPaper} (see also \ref{apdx:MethodPaper} for details) is that we give a Markov property (and thus identifiability results) beyond the
case where all context-specific graphs agree.
This is relevant philosophically, to demonstrate that their distinction is indeed empirically.
It is also relevant in practise, as it allows for novel characterizations
of distribution shifts that take into account this difference
between descriptive and physical changes.
This information can be meaningful to interpret observations, see §\ref{apdx:practical_examples}.
Finally, the more general relation to independence-structure (cf.\ §\ref{apdx:LDAGs}) while capturing qualitative and relevant aspects of the observational support, seems generally interessting. Making properties of the observational support tractable is typically difficult in practice,
the qualtiative nature the information concerning the support studied here
helps to remedy what is practicly feasible with what is  interesting to understand.

\section{Causal Graphical Models}\label{sec:graph_defs}

This section contains definitions designed to formally capture the intuitions gained from example \ref{example:intro} in the introduction. These graphs have been first introduced for this
purpose in \citep{MethodPaper}, with particular
regard to endogenous context-variables.
As example \ref{example:intro} shows (see §\ref{apdx:practical_example_intro}), their usefullness
extends to exogenous context-variables as well.
It seems helpful to keep the following challenges in mind:
\begin{itemize}
	\item 
	How will we relate these definitions -- and thus the SCM and
	interventional\Slash{}causal properties
	-- to what we observe?
	Constraint-based causal discovery traditionally uses the independence-structure
	to draw conclusions about the causal structure. Including context-specific
	independence (CSI) means using observations of the un-intervened
	(\emph{not} context specific) model
	to gain context-specific information.
	We define a descriptive graph $G^{\text{descr}}_{R=r}[M]$
	for each context (or "regime") $R=r$,
	which describes \emph{from qualitative model properties}
	what CSI to expect in such observations.
	The connection to CSI will be made in §4.
	\item 
	For further inferences, we usually do not only want to know what we see,
	but also what can be concluded about the model.
	Extrapolation beyond what we observe would be difficult.
	However if we observe multiple contexts, in places where they "overlap",
	we can ask if the observations are mutually consistent (or put differently:
	If they are consistent with a shared model).
	For graphical properties, this allows for a conservative description,
	where the encoded information is only about verifiable \emph{differences}
	within a particular context relative to the full data-set.
	We define a physical graph $G^{\text{phys}}_{R=r}[M]$ to capture
	this formally. The identifiability from data
	of this object is studied in §5. It starts from the descriptive
	context-specific information by combining the $G^{\text{descr}}_{R=r}[M]$
	for different $r$.
\end{itemize}

\subsection{Structural Causal Models (SCM)}

We work within the framework of "structural causal models" (SCM) \citep{PearlBook,Elements}:
We fix a set of \defName{endogenous variables} (observable)
$\{X_i\}_{i\in I}$, for some finite $I$,
taking values in $\val{X}_i$,
and \defName{exogenous noises} (hidden) $\{\eta_i\}_{i\in I}$,
taking values in $\val{N}_i$.
We write $V := \{X_i | i\in I\}$ for the set of all endogenous variables,
$U := \{\eta_i | i\in I\}$ for the set of all exogenous noises,
and for $A \subset V$, let $\val{X}_A := \prod_{j\in A} \val{X}_j$,
further $\val{X} := \val{X}_V$ and $\val{N} := \val{N}_U$.

\begin{definition}
	A set of \defName{structural equations} (mechanisms) $\mathcal{F} := \{f_i | i\in I \}$ is an assignment
	of parent-sets $\Pa(i) \subset V$ together with mappings
	$f_i : \val{X}_{\Pa(i)} \times \val{N}_i \rightarrow \val{X}_i$ for all $i$.
	An \defName{intervention} $\mathcal{F}_{\PearlDo(A=g)}$ on a subset $A \subset V$
	is a replacement of $f_j \mapsto g_j$ for $j\in A$.
	We will only consider "hard" interventions $g_j \equiv x_j = \const$.
\end{definition}

Given a distribution $P_\eta$ of the noises $U$,
if a set of random variables $V$ solving the equations in $\mathcal{F}$
exists, we call their distribution $P_{\mathcal{F},P_\eta}(V)$
an \defName{observable distribution}. For the models we consider,
solutions are always unique and are solutions in terms of the noise-values
in the intuitive sense, §\ref{sec:solvability}.

\begin{definition}\label{def:SEM}
	An \defName{SCM} $M$ is a triple $M = ( V, U, \mathcal{F} )$,
	with $V$ distributed according to an observable distribution $P_{\mathcal{F},P(U)}(V)$.
	The \defName{intervened model} $M_{\PearlDo(A=g)}$ is an SCM with 
	$M_{\PearlDo(A=g)} = (V', U,$ $\mathcal{F}_{\PearlDo(A=g)})$
	\ie $U$ is distributed according to the same $P_\eta$ as for $M$
	and the structural equations are replaced according to the intervention.
\end{definition}

\subsection{Induced Graphical Objects}\label{sec:graphs}

An important concept in causal inference is to capture qualitative relations
between variables as described by (suitably minimal, see next sections)
parent-sets $\Pa_i \subset V$ in a \defName{causal graph},
constructed with nodes $V$ and a directed edge from each $p\in\Pa_i$ to
$X_i$.
In multi-context settings, there is
additional qualitative information available "per context", but
as explained in the introduction, multiple meaningful definitions of
parent-sets (hence graphs) exist.
The simplest way to describe qualitative properties of an SCM is
via the mechanisms only:
\begin{definition}
	Given mechanisms $\mathcal{F}$,
	the mechanism-graph $G[\mathcal{F}]$ is constructed with parent-sets $\Pa$
	such that:
	\begin{equation*}
		X \in \Pa(Y)
		\quad\Leftrightarrow\quad
		f_Y \text{ is a non-constant (in $X$) function of $X$.}
	\end{equation*}
\end{definition}
If one actually knows the underlying SCM, this is well-defined. However,
in most applications, one has limited knowledge about the "real" SCM
(approximating) a physical system and, thus, uses observed data generated by the SCM
to draw conclusions.
The choice of suitable (empirically accessible) graphical objects is intimately linked to minimality
and faithfulness assumptions (§\ref{sec:Faithfulness}, §\ref{apdx:Faithfulness}).
To capture "accessible states" we need to build
information about observational support into our
graphical objects.
\begin{definition}\label{def:obs_graph}
	Given a set of mechanisms $\mathcal{F}$,
	and a (as of now arbitrary\Slash{}unrelated to $M$ or $\mathcal{F}$)
	distribution $Q(V)$ of the observables $V$,	
	the \defName{observable graph} $G[\mathcal{F},Q]$ is constructed by defining
	parent-sets $\Pa' \subset \Pa$ such that:
	\begin{equation*}
		X \in \Pa'(Y)
		\quad\Leftrightarrow\quad
		f_Y|_{\supp(Q(\Pa(Y)))}
		\text{ is non-constant (§\ref{apdx:NonConst}) in $X$}
		\text{.}
	\end{equation*}
	Note that this depends qualitatively on $Q$, in the sense that
	it \emph{only} depends on the
    support $\supp(Q_A)$ of
	marginalizations of $Q$ to $A \subset V$.
\end{definition}

\begin{rmk}\label{rmk:replace_support_by_finitesample}
	One may replace the above definition by
	one that also includes a dependence-measure $d$
	(or rather its estimator) used to test independences,
	see §\ref{apdx:FiniteSample}.
	This seems to feature the same distinction of
	descriptive\Slash{}"detectable" vs.\ physical changes.
	But it inherently depends on finite-sample-properties; note that similar conclusions are drawn
    by \citet{janzing2023reinterpreting} from a
    completely different perspective.
	We focus on the support instead and leave finite-sample-properties to future work.
\end{rmk}

For illustratation of how the not yet fixed parameters
$\mathcal{F}$ and $Q$ can be related to a model,
we briefly excurse to causal faithfulness properties
(see also §\ref{sec:Faithfulness}).
First define an observable graph,
by fixing $\mathcal{F}$ and $Q$,
from model properties only.
\begin{definition}\label{def:visible}
	Given an SCM $M= ( V, U, \mathcal{F} )$,
	with observable variables distributed by $P_M(V)$,	
	then the \defName{visible graph}\footnote{This graph is called a union graph in \citep{MethodPaper}, and indeed it is a union graph in the sense of lemma \ref{lemma:union_is_union}.} $\gVisible{G}[M]$
	is the observable graph $G[\mathcal{F}, P_M]$.
\end{definition}
This graph moves the problem of observational support
from the faithfulness assumption
into the graph-definition (§\ref{apdx:Faithfulness}) in the following sense:
If the model $M$ is \emph{not} faithful to its visible graph $\gVisible{G}[M]$, then this failure of faithfulness must arise from a property \emph{other than observational support} (\eg from fine-tuned mechanisms,
multi-variable-synergy or deterministic relationships).
This is, in the single context case,
the same as the mechanism graph after enforcing a suitable minimality condition (like \citep[Def.\ 2.6]{BongersCyclic}) on $\mathcal{F}$.
$\gVisible{G}[M]$ is thus what would commonly be defined as "the" causal graph; in the single context
case, anything outside the support in that one context is unempirical.

The visible graph was explicitly constructed as a graph "$G[M]$" associated to an SCM.
Other than before (for $G[\mathcal{F}]$ and $G[\mathcal{F},Q]$),
there is more than one meaningful choice here!
We fix a (finite, with $P(R=r)>0$) categorical context\Slash{}"regime-indicator" variable $R$
and want to understand qualitative changes in the model between different values of $R$
(cf.\ also \citep{stratified_graphs,LDAG_definition}).
\begin{definition}\label{def:g_descr}
	Given an SCM $M= ( V, U, \mathcal{F} )$
	and $R\in V$,	
	the \defName{descriptive graph} (see \citep{MethodPaper}) is
	\begin{equation*}
		\gDescr{\bar{G}}\FixedR[M] := G[\mathcal{F}_{\PearlDo(R=r)},P_M(V|R=r)] \txt.
	\end{equation*}	
	Fixing $R$ to a value, removes dependencies involving $R$, so we add this information back in
	by defining $\gDescr{G}\FixedR[M]$ as $\gDescr{\bar{G}}\FixedR[M]$
	plus edges involving $R$ in $\gVisible{G}[M]$.
\end{definition}
This object describes the qualitative
relations between variables of the regime-"enforced" model $M_{\PearlDo(R=r)}$
that can be learned from the observed distribution $P_M$
(via conditioning) and contains the "descriptive" information
about (in)dependences we want to learn (see §\ref{sec:intro}, point (a)).

\begin{rmk}
	This graph is \emph{very} different from a "conditioned" model:
	For example there are no spurious links from selection-bias. This is, because
	this graph describes \emph{properties of the underlying SCM}
	under constraints on observable "system-states",
	and makes no reference to \eg independencies.
	It is however closely connected to independence properties
	(cf.\ §\ref{sec:ConnectToCSI}).
\end{rmk}

For edges not involving $R$, and thus for $\gDescr{\bar{G}}\FixedR[M]$,
we could replace $\mathcal{F}_{\PearlDo(R=r)}$ by $\mathcal{F}$ in definition \ref{def:g_descr},
which underlines the idea of describing an object that can be inferred
from observations, but contains information about the interventional model.
To capture §\ref{sec:intro}, point (b), we use:
\begin{definition}	
	Given an SCM $M= ( V, U, \mathcal{F} )$,
	and $R\in V$,	
	the \defName{transfer\Slash{}physical graph} is
	$\gTransf{\bar{G}}\FixedR[M]:=G[\mathcal{F}_{\PearlDo(R=r)},P_M(V)]$.
	and again $\gTransf{G}\FixedR[M]$ adds edges involving $R$.
\end{definition}
As illustrated in the introduction, this keeps links that vanish through
changing state-accessibility of the system (it keeps information available
on the pool), but deletes those that "explicitly" change via $\PearlDo(R=r)$,
\eg if $Y=\mathbbm{1}(R) \times X + \eta_Y$ (so it captures a very intuitive notion
of "per-regime" changes).
Finally interventional models -- note that Def.\,\ref{def:SEM}
keeps the exogenous noises in the definition of the intervened model, hence
it has a "counter-factual" character (§\ref{apdx:CF}) -- correspond to
\begin{definition}	Given an SCM $M= ( V, U, \mathcal{F} )$,
	and $R\in V$,	
	the \defName{counter-factual graph} is
	$\gCF{G}\FixedR[M]:=G[\mathcal{F}_{\PearlDo(R=r)},P_M(V|\PearlDo(R=r))]
	= \gVisible{G}[M_{\PearlDo(R=r)}]$.
\end{definition}
See also §\ref{apdx:CF}, where it is quickly explained why $\gCF{G}\FixedR$
seems more relevant with experimental data, we focus on observational data here.
Finally, some properties of these graphs (proofs are in
§\ref{apdx:GraphRelations}):
\begin{lemma}\label{lemma:union_is_union}	
	Union Properties, for $\gUnion{G}[M]:=\gVisible{G}[M]$:
	\begin{enumerate}[label=(\roman*)]
		\item
		$\gUnion{G}[M]$ is the "union graph" in the sense of
		\citep{Saeed2020}
		\item
		$\gUnion{G}[M]
		= \cup_r \gTransf{G}\FixedR[M]$
		\item
		$\gUnion{G}[M]
		= \cup_r \gDescr{G}\FixedR[M]$,
		if $M$ is strongly $R$-faithful (Def.\ \ref{def:strong_ff})
	\end{enumerate}	
\end{lemma}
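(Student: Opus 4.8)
The plan is to prove the three claims essentially independently, using the definitions of the various graphs and the fact (Def.\ \ref{def:obs_graph}) that an observable graph $G[\mathcal{F},Q]$ depends on $Q$ only through the supports of its marginals. For (i), I would unwind the definition of ``union graph'' in the sense of \citep{Saeed2020}: it is the graph whose edge set is the union, over all contexts $R=r$, of the edge sets of the context-specific causal graphs of $M_{\PearlDo(R=r)}$. Since $\gCF{G}\FixedR[M] = \gVisible{G}[M_{\PearlDo(R=r)}]$ is precisely the causal graph in context $r$, I need to check that $\gVisible{G}[M] = \cup_r \gVisible{G}[M_{\PearlDo(R=r)}]$, together with the edges into $R$ (which the intervened models drop but which are present in $\gVisible{G}[M]$). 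This reduces to: for each mechanism $f_Y$ with $Y\neq R$, $f_Y$ restricted to $\supp(P_M(\Pa(Y)))$ is non-constant in $X$ iff it is non-constant in $X$ on $\supp(P_M(\Pa(Y)\mid R=r))$ for some $r$. The ``$\Leftarrow$'' direction is immediate since the conditional support is contained in the marginal support; the ``$\Rightarrow$'' direction follows because $\supp(P_M(\Pa(Y))) = \overline{\bigcup_r \supp(P_M(\Pa(Y)\mid R=r))}$ (as $P(R=r)>0$ for the finitely many values of $r$), so non-constancy somewhere in the marginal support forces non-constancy, in the appropriate sense of §\ref{apdx:NonConst}, on at least one of the conditional supports.

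For (ii) the argument is the same support computation, but now comparing $\gVisible{G}[M] = G[\mathcal{F}, P_M(V)]$ with $\gTransf{\bar G}\FixedR[M] = G[\mathcal{F}_{\PearlDo(R=r)}, P_M(V)]$: the distribution $Q = P_M(V)$ is \emph{the same} in both, and $\mathcal{F}_{\PearlDo(R=r)}$ differs from $\mathcal{F}$ only in the equation for $R$, so for every $Y\neq R$ the mechanism and the relevant support agree, giving $\gTransf{\bar G}\FixedR[M]$ the same edges among $V\setminus\{R\}$ as $\gVisible{G}[M]$ for \emph{every} $r$. Taking the union over $r$ changes nothing among those edges, and the edges involving $R$ are added back in by definition of $\gTransf{G}\FixedR$ exactly as they appear in $\gVisible{G}[M]$. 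Hence the union is $\gVisible{G}[M]$ on the nose; here there is no real content beyond bookkeeping, and no faithfulness hypothesis is needed, which matches the statement.

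For (iii) I would again reduce to edges not involving $R$, since those involving $R$ are copied from $\gVisible{G}[M]$ into each $\gDescr{G}\FixedR[M]$ by definition. So I must show $\gVisible{G}[M]$ and $\cup_r \gDescr{\bar G}\FixedR[M] = \cup_r G[\mathcal{F}_{\PearlDo(R=r)}, P_M(V\mid R=r)]$ have the same adjacencies among $V\setminus\{R\}$. The inclusion $\supseteq$ is the easy direction: each $\gDescr{\bar G}\FixedR[M]$ is obtained from $\mathcal{F}$ (on the non-$R$ part) restricted to the \emph{smaller} support $\supp(P_M(\Pa(Y)\mid R=r)) \subseteq \supp(P_M(\Pa(Y)))$, so its edge set is contained in that of $\gVisible{G}[M]$, and the union of subsets is a subset. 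The reverse inclusion $\subseteq$ is where the work and the hypothesis live: an edge $X\to Y$ in $\gVisible{G}[M]$ means $f_Y$ is non-constant in $X$ on the \emph{full} support $\supp(P_M(\Pa(Y)))$, but it could in principle happen that for \emph{every} individual $r$ the restriction to $\supp(P_M(\Pa(Y)\mid R=r))$ is constant in $X$ while the union of these supports sees the variation — this is exactly the Example \ref{example:intro} phenomenon, and it is what ``strongly $R$-faithful'' (Def.\ \ref{def:strong_ff}) must be designed to rule out. So the main obstacle is purely to check that the strong $R$-faithfulness assumption, once we look up its statement, says precisely: whenever $f_Y$ is non-constant in $X$ on $\supp(P_M(\Pa(Y)))$, it is already non-constant in $X$ on $\supp(P_M(\Pa(Y)\mid R=r))$ for at least one $r$ (equivalently, the descriptive edge $X\to Y$ survives in some context). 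Granting that, $\subseteq$ follows edgewise and we conclude equality. I would expect the proof in the appendix to be short precisely because the definition of strong $R$-faithfulness is tailored to make (iii) go through; the only genuine care needed is the handling of the §\ref{apdx:NonConst} notion of non-constancy on a support (so that ``non-constant on a union of supports'' really does decompose over the pieces, using finiteness of the range of $R$), and making sure the closure/union identity for supports is valid for the distributions at hand.
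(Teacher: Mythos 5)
There are genuine gaps in (i) and (ii), and (iii) is left essentially unproved. In (i), your ``$\Rightarrow$'' step asserts that non-constancy of $f_Y$ on $\supp(P_M(\Pa(Y)))$ forces non-constancy on $\supp(P_M(\Pa(Y)\mid R=r))$ for some $r$, justified by the identity $\supp(P_M(\Pa(Y)))=\overline{\cup_r\supp(P_M(\Pa(Y)\mid R=r))}$. But non-constancy on a union does not decompose over the pieces: a function that is constant on each conditional support but takes different constant values on different supports is non-constant on the union while constant on every piece --- this is exactly the example in §\ref{apdx:not_strongly_faithful}, and you yourself name this failure mode when discussing (iii), where it is precisely why strong $R$-faithfulness is needed. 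So your (i) silently uses the hypothesis of (iii), which (i) does not assume; the paper instead treats (i) as a definitional identification of $\gVisible{G}[M]$ with the standard minimal causal graph of the pooled model (and your further identification with $\cup_r\gVisible{G}[M_{\PearlDo(R=r)}]$ clashes with Example \ref{example:CFgraph}, where that union can contain extra edges). In (ii), your premise that $\mathcal{F}_{\PearlDo(R=r)}$ ``differs from $\mathcal{F}$ only in the equation for $R$'', hence that $\gTransf{\bar{G}}\FixedR[M]$ has the same non-$R$ edges as $\gVisible{G}[M]$ for \emph{every} $r$, is false: the intervention also fixes the $R$-argument in the mechanisms of $R$-children, and $f_Y(\cdot,R{=}r,\cdot)$ can be constant in $X$ even when $f_Y$ is not (e.g.\ $Y=\mathbbm{1}(R)\times X+\eta_Y$ at $r=0$). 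Your argument would yield $\gTransf{G}\FixedR=\gUnion{G}$ for each individual $r$, contradicting Fig.\ \ref{fig:intro} and the whole point of Lemma \ref{lemma:edge_inclusions}. The paper's proof of (ii) first reduces to $R$-children via Lemma \ref{lemma:PhysChangesAreRegimeChildren}, then takes a witnessing pair $\pa,\pa'\in\supp(P(\gUnion\Pa(Y)))$ differing only in $X$; since both contain the same value $r_1$ of $R$, the edge survives in $\gTransf{G}_{R=r_1}$ --- only the union over $r$ is recovered, not each summand.

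For (iii) you correctly isolate the obstruction and the easy inclusion, but you defer all the content to the hope that Def.\ \ref{def:strong_ff} ``says precisely'' that non-constancy on the full support implies non-constancy on some conditional support. It does not: strong $R$-faithfulness is a non-determinism condition (no alternative mechanism set $\mathcal{F}'$ reproducing the observations almost surely with different minimal parent sets). The needed implication therefore has to be \emph{derived} by contradiction: assuming $X\in\gUnion\Pa(Y)$ but $X\notin\gDescr\Pa\FixedR(Y)$ for all $r$, one defines for each $r$ a function $g^r_Y$ of $\gUnion\Pa(Y)-\{X\}$ agreeing with $f_Y$ almost surely given $R=r$, glues these into $f'_Y(\ldots,R):=g^R_Y(\ldots)$ over the almost-sure partition by $F_R^{-1}(\{r\})$, and observes that the resulting $\mathcal{F}'$ violates Def.\ \ref{def:strong_ff}. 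That construction is the substance of part (iii) and is absent from your proposal.
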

Point (ii) is of course the motivation of (i) in \citep{Saeed2020},
but here we can explicitly see that in this case (for the union),
the specific choice of graph ($\gTransf{G}\FixedR$ vs.\ $\gDescr{G}\FixedR$) is (mostly) unimportant.
\begin{lemma}\label{lemma:edge_inclusions}
	Relations of edge-sets:
	\begin{equation*}
		\gDescr{G}\FixedR[M]
		\quad\subset\quad
		\gTransf{G}\FixedR[M]
		\quad\subset\quad
		\gUnion{G}[M]
	\end{equation*}
	writing "$G' \subset G$" if both $G$ and $G'$ are defined on the same nodes, and
	the subset-relation holds for the edge-sets.
\end{lemma}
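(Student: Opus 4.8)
The plan is to obtain both inclusions from two elementary monotonicity properties of the observable--graph construction $G[\mathcal{F},Q]$ of Definition~\ref{def:obs_graph}, after first disposing of the $R$-incident edges. By construction, $\gDescr{G}\FixedR[M]$ and $\gTransf{G}\FixedR[M]$ arise from $\gDescr{\bar{G}}\FixedR[M]$ and $\gTransf{\bar{G}}\FixedR[M]$ by adjoining \emph{the same} edge set, namely the $R$-incident edges of $\gVisible{G}[M]=\gUnion{G}[M]$; and $\gDescr{\bar{G}}\FixedR[M]$, $\gTransf{\bar{G}}\FixedR[M]$ have no $R$-incident edges at all, since in $\mathcal{F}_{\PearlDo(R=r)}$ the variable $R$ has no parents and is a parent of nothing. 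Hence it suffices to prove the two inclusions for the subgraphs induced on $V\setminus\{R\}$, \ie to compare $G[\mathcal{F}_{\PearlDo(R=r)},P_M(V|R=r)]$, $G[\mathcal{F}_{\PearlDo(R=r)},P_M(V)]$, and $G[\mathcal{F},P_M(V)]$; reinserting the common $R$-edges then upgrades the first comparison to $\gDescr{G}\FixedR[M]\subset\gTransf{G}\FixedR[M]$, and reinserting the (by definition identical) $R$-edges of $\gTransf{G}\FixedR[M]$ and $\gUnion{G}[M]$ upgrades the second to $\gTransf{G}\FixedR[M]\subset\gUnion{G}[M]$.

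The two principles I would establish are: (i) \emph{monotonicity in the support} -- if $Q,Q'$ are distributions on $V$ with $\supp(Q'(A))\subset\supp(Q(A))$ for every $A\subset V$, then $G[\mathcal{F},Q']\subset G[\mathcal{F},Q]$ for any fixed $\mathcal{F}$, because restricting $f_Y$ from $\supp(Q(\Pa(Y)))$ to the smaller set $\supp(Q'(\Pa(Y)))$ can only turn a non-constant dependence on some $X$ into a constant one, never the reverse, so every parent kept on the smaller support is kept on the larger one (throughout, \emph{non-constant} is read in the sense of §\ref{apdx:NonConst}, under which, as the remark after Definition~\ref{def:obs_graph} records, $G[\mathcal{F},\cdot]$ depends on $Q$ only through the marginal supports); and (ii) \emph{monotonicity under fixing an input} -- for $Y\neq R$, the mechanism $f_Y$ after the replacement $R\mapsto r$ is the partial function $f_Y(\cdot,r,\cdot)$, which is non-constant in an argument $X_k\neq R$ only if $f_Y$ is, so $\Pa^{\mathcal{F}_{\PearlDo(R=r)}}(Y)\subset\Pa^{\mathcal{F}}(Y)\setminus\{R\}$, and a witness for non-constancy of $f_Y(\cdot,r,\cdot)$ over $\supp(P_M(\Pa^{\mathcal{F}_{\PearlDo(R=r)}}(Y)))$ yields, upon pinning the coordinate $R=r$ (which lies in the joint support because $P(R=r)>0$) and forgetting the coordinates outside $\Pa^{\mathcal{F}_{\PearlDo(R=r)}}(Y)$, a witness for non-constancy of $f_Y$ over $\supp(P_M(\Pa^{\mathcal{F}}(Y)))$, so $G[\mathcal{F}_{\PearlDo(R=r)},Q]\subset G[\mathcal{F},Q]$ on $V\setminus\{R\}$ for any $Q$.

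Combining these finishes the proof. Applying (i) with $\mathcal{F}$ replaced by $\mathcal{F}_{\PearlDo(R=r)}$, $Q=P_M(V)$ and $Q'=P_M(V|R=r)$ -- using the elementary fact that $\supp(P_M(A|R=r))\subset\supp(P_M(A))$ for every $A$, since an open set of positive conditional mass has positive joint mass (as $P(R=r)>0$) and hence positive marginal mass -- gives $\gDescr{\bar{G}}\FixedR[M]\subset\gTransf{\bar{G}}\FixedR[M]$ and thus the first inclusion. Applying (ii) with $Q=P_M(V)$ gives $\gTransf{\bar{G}}\FixedR[M]\subset G[\mathcal{F},P_M(V)]=\gVisible{G}[M]=\gUnion{G}[M]$ on $V\setminus\{R\}$, and thus the second inclusion.

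The step I expect to be the main obstacle is the last move in (ii): the marginal support $\supp(Q(A))$ is in general only the \emph{closure} of the coordinate projection of the joint support $\supp(Q(V))$, so pinning $R=r$ and projecting back cannot be done naively on raw support sets and must instead be carried out at the level of the non-constancy notion of §\ref{apdx:NonConst} -- which is precisely why that notion is engineered to depend on $Q$ only through marginal supports. Once that interface is handled, the remaining ingredients (the parent-set containments, the support containment under conditioning, and the bookkeeping of $R$-incident edges) are all immediate from the definitions in §\ref{sec:graphs}.
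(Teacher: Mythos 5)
Your decomposition into the $R$-incident bookkeeping plus two monotonicity principles is sensible, and your principle (i) is exactly the paper's entire argument: the paper proves this lemma in one line, by noting $\supp(P(\Pa(Y)\mid R=r))\subset\supp(P(\Pa(Y)))$, which only addresses the first inclusion $\gDescr{G}\FixedR\subset\gTransf{G}\FixedR$ (same mechanisms, shrinking support). For that half you agree with, and are more explicit than, the paper.

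The gap is in step (ii), at exactly the point you flag as the main obstacle -- but the obstacle is not the closure technicality you describe, and the parenthetical you offer to dispose of it is wrong. You take a witness pair $(\pa^-,x),(\pa^-,x')$ in $\supp\bigl(P_M(\Pa^{\mathcal{F}_{\PearlDo(R=r)}}(Y))\bigr)$ and claim that pinning the coordinate $R=r$ lands you in $\supp\bigl(P_M(\Pa^{\mathcal{F}}(Y))\bigr)$ ``because $P(R=r)>0$''. That inference fails: the joint support of $\bigl(\Pa(Y)\setminus\{R\},\,R\bigr)$ is in general a strict subset of the product of the marginal supports, so the attainability of $(\pa^-,x)$ and of $R=r$ separately does not make $(\pa^-,x,r)$ attainable. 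This is not a degenerate worry; it is precisely the support configuration of the paper's own Example~\ref{example:CFgraph}, where $f_Y(X,R)=\mathbbm{1}(R)\,g(X)+R+\eta_Y$ and the values $X>0$ never co-occur with $R=1$: there $g(\cdot)$ is non-constant on the marginal support $\supp(P_M(X))$ yet $f_Y$ is constant in $X$ on the joint support $\supp(P_M(X,R))$, so the lifted witness your argument needs does not exist, and the reasoning as written would let a physical-graph edge at $R=1$ escape the union graph. What the second inclusion actually requires is an argument that any physical-graph witness can be realized \emph{inside} the joint support $\supp(P_M(\Pa(Y)))$ carrying its genuine $R$-coordinate, rather than an artificially substituted one (compare how the proof of Lemma~\ref{lemma:union_is_union}(ii) always works with witness pairs drawn from the joint support and reads off the $R$-value they already contain). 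As it stands, your step (ii) does not establish $\gTransf{G}\FixedR\subset\gUnion{G}$; the paper's one-line proof does not supply this step either, but that does not repair your argument.
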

\begin{lemma}\label{lemma:PhysChangesAreRegimeChildren}
	Physical changes are in regime-children:\\
	If $Y \neq R$ with $R\notin \gUnion\Pa(Y)$,
	then $\gPhys\Pa_{R=r}(Y) = \gUnion\Pa(Y)$.
\end{lemma}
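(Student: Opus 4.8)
The plan is to argue directly from the definitions; the only real content is bookkeeping which ingredients of the construction the intervention $\PearlDo(R=r)$ leaves untouched. First I would unfold the physical graph. Since $Y\neq R$, the intervention $\PearlDo(R=r)$ replaces only $f_R$, so in $\mathcal{F}_{\PearlDo(R=r)}$ the mechanism of $Y$ is \emph{literally} the same map $f_Y$ on the same domain $\val{X}_{\Pa(Y)}\times\val{N}_Y$, with the same declared parent set $\Pa(Y)$ as in $\mathcal{F}$. Moreover $\gPhys{\bar{G}}\FixedR[M]=G[\mathcal{F}_{\PearlDo(R=r)},P_M(V)]$ uses the \emph{un}conditioned pool distribution $P_M(V)$, hence the restriction set $\supp(P_M(\Pa(Y)))$ that enters Definition~\ref{def:obs_graph} is exactly the one that also enters the construction of $\gVisible{G}[M]=G[\mathcal{F},P_M]$.

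Next I would read off parent sets. Plugging $\mathcal{F}_{\PearlDo(R=r)}$ and $Q=P_M(V)$ into Definition~\ref{def:obs_graph}, and using the previous observation, $X\in\gPhys{\bar{\Pa}}\FixedR(Y)$ iff $X\in\Pa(Y)$ and $f_Y|_{\supp(P_M(\Pa(Y)))}$ is non-constant in $X$ --- which is word-for-word the condition defining $X\in\gUnion{\Pa}(Y)$ in $\gVisible{G}[M]=\gUnion{G}[M]$. Hence $\gPhys{\bar{\Pa}}\FixedR(Y)=\gUnion{\Pa}(Y)$ unconditionally; in particular, since $R\notin\gUnion{\Pa}(Y)$ by hypothesis, also $R\notin\gPhys{\bar{\Pa}}\FixedR(Y)$, regardless of whether $R$ happened to be a declared parent of $Y$ in $\mathcal{F}$. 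Finally I would account for the $R$-edges: by construction $\gPhys{G}\FixedR[M]$ is obtained from $\gPhys{\bar{G}}\FixedR[M]$ by adding the edges of $\gVisible{G}[M]$ incident to $R$, and the only such edge that could enlarge the parent set of $Y$ is $R\to Y$, which lies in $\gVisible{G}[M]=\gUnion{G}[M]$ iff $R\in\gUnion{\Pa}(Y)$ --- ruled out. Therefore $\gPhys{\Pa}\FixedR(Y)=\gPhys{\bar{\Pa}}\FixedR(Y)=\gUnion{\Pa}(Y)$.

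The step to be careful about --- the ``main obstacle'', such as it is --- is the first paragraph: one must not confuse $\gPhys{\bar{G}}\FixedR$ (pool distribution, support unchanged) with $\gDescr{\bar{G}}\FixedR$ (distribution conditioned on $R=r$, support possibly strictly smaller). The support-invariance is the whole reason the argument closes, and it is exactly what fails for the descriptive graph: the analogue of this lemma for $\gDescr{G}$ is false, as Example~\ref{example:intro} shows (there $R\notin\gUnion{\Pa}(Y)$, yet $T\in\gUnion{\Pa}(Y)\setminus\gDescr{\Pa}_{R=0}(Y)$). Everything else is unwinding Definitions~\ref{def:obs_graph}, \ref{def:visible} and the two displayed graph definitions, together with the trivial remark that a hard intervention on $R$ does not alter $f_Y$ when $Y\neq R$.
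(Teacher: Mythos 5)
Your proof is correct and follows essentially the same route as the paper's: unfold the definitions $\gUnion{G}[M]=G[\mathcal{F},P_M]$ and $\gPhys{\bar{G}}\FixedR[M]=G[\mathcal{F}_{\PearlDo(R=r)},P_M]$, observe that the hypotheses guarantee $f_Y$ is (effectively) untouched by the intervention, and note that both graphs restrict to the same support $\supp(P_M)$, so the parent sets coincide. You are in fact slightly more explicit than the paper on two minor points --- the case where $R$ is a declared mechanism-parent of $Y$ but not a union-parent, and the re-addition of $R$-incident edges when passing from $\gPhys{\bar{G}}\FixedR$ to $\gPhys{G}\FixedR$ --- both of which the paper's proof leaves implicit.
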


\subsection{Potential Applications}\label{sec:applications}

\paragraph{Where are these graphs relevant?}

For applications like earth-sciences, the problem of restricted support seems to exist in practise.
Further many important applications here involve the
study of extreme events, where a restriction to small regions of the state-space
is believed to occur
\citep{MelindaExtremeValueTheoryClimate,LargeDeviationReview}
-- one possible intuition is that extremes occur from the
coincidence\Slash{}synergy of different pathways,
for example many time-steps with little precipitation
followed by high temperatures putting drought extremes in
a "corner" of the state-space.
It is often somewhat unclear why (presumably robust) causal mechanisms
seem to change under extreme conditions. Our approach provides a possible
explanation, as causal discovery (e.\,g.\ with masking, rmk.\ \ref{rmk:CD})
should typically (see §\ref{sec:ConnectToCSI}) recover $\gDescr{G}\FixedR$
at best, thus is very sensitive to observational support.
Extreme states (like droughts) are often endogenous, \ie themselves
driven by system-variables (\eg by soil-moisture feed-backs).

The setup also relates naturally to "technological" data like
satellite-telemetry or IT-safety applications, where systems behave much more like
state-machines (or actors) with many actions only available in certain states.
Note that here the state typically changes in response to sensory input, so
when modeling data about system \emph{and} environment (\eg by including data for sensory input),
the resulting contexts are typically endogenous.
While our approach is still very far from systematically recovering
a state-machine as a causal model, an understanding of the observations-support
properties studied here seems to be an important building-block when approaching this problem.
It seems noteworthy that also a causal perspective on explainable AI (XAI),
treating neural network (layers) and their inputs as SCMs, typically
have such qualitative structure, \eg from ReLU activation-functions (§\ref{apdx:practical_example_xai}).

\paragraph{What are these graphs good for?}

A common problem in practice is, given two or morecontexts
-- \eg normal data and anomalous data -- to "explain" the difference; meaningful defintions of explain become clearer in §\ref{apdx:practical_examples}.
If, between the two contexts, a mechanism $f_i$ changes its parents \emph{physically},
then this change at $f_i$ probably should be part of the explanation for changed observations.
If, on the other hand, the changes (addition, removal or combinations)
of the parents in $f_i$ do not require any explanation beyond the change in support,
\ie if they are purely descriptive (non-physical), then the explanation
for the changing observations should be found in the ancestors, not at $f_i$.
\Eg for example \ref{example:intro} in the introduction,
assuming we observe additional nodes that provide orientation-information
(or if there is a mediator $R \rightarrow M \rightarrow T$),
we note that $T \rightarrow Y$ cannot be a physical change
because $R \notin \gUnion\Pa(Y)$ (see corollary \ref{cor:R1} below).
So, instead of claiming the two contexts to differ by a change in $f_Y$
(which is indeed \emph{not} the case), we should look further upstream in the graph,
which, here, leaves $f_T$. Given $R$ the only "real" remaining degree of freedom
is $P(T|R=0) \neq P(T|R=1)$, which is a surprisingly accurate diagnosis.

Further, also interventional and counterfactual queries happen in a different
(non-union) context in terms of knowledge about mechanisms in certain value-ranges.
We consider this to be a known problem and especially for counterfactuals it has
been discussed from slightly different points of view (see §\ref{apdx:CF}).
Our treatment certainly does not suffice to "solve" this problem,
but we show that including information about knowledge
and observability into causal inference -- for multi-context data --
can (and by the motivations above, maybe should) be systematically approached.

\section{Context-Specific Independence}\label{sec:ConnectToCSI}

Note that while changes in $\gTransf\Pa(Y)$ from mechanism-changes
only occur if $R \in \gUnion\Pa(Y)$
(lemma \ref{lemma:PhysChangesAreRegimeChildren}),
"state-access" induced changes can even (also if $R$ is \emph{not} on any cycle in $\gUnion{G}$),
remove links in $\gDescr{G}$ between ancestors of $R$.
This can be undetectable even from context-specific independencies,
if the same link "should" be removed for a specific regime $r$
-- requiring us to conditioning on $R=r$ --
but gets "reinstated" by selection-bias -- because we are conditioning on $R$.
For a concrete example, see \ref{example:NonMarkov}.
This section shows that -- up to this issue (links vanishing in ground-truth
between ancestors of $R$ due to state-access restrictions) -- the descriptive
graph $\gDescr{G}\FixedR$ can be recovered from combining context-specific
and non-context-specific independence-tests, if a suitable faithfulness property 
is satisfied (§\ref{sec:Faithfulness}).
The general relation to independence structure (see point (c) in the introduction) is a bit more complicated as discussed further in §\ref{apdx:CSI}: By the counterexample \ref{example:NonMarkov} mentioned above, 
the full recovery of $\gDescr{G}$ from (context specific plus pooled) independencies alone is not possible. Further, there are limitations to our theoretical approach when studying links between ancestors of $R$ (also discussed in §\ref{apdx:CSI}).
These differences seems to be small and often can be corrected for in further analysis; for example for the identification of $\gUnion{G}$ or $\gPhys{G}$
in §\ref{sec:ConnectToJCI}, it turns out to be insubstantial.
So for most practical purposes the combined (pooled and CSI) independence-structure ($\gDetect{G}$ in §\ref{apdx:CSI}) and the theoretical guarrantees
of our Markov property ($\gIdent{G}$ below)
are very good approximations of $\gDescr{G}$.

\subsection{Markov Properties}\label{sec:Markov}

Here we study the following question:
When can the absence of an edge in the graphical objects
defined above, be detected directly from independence constraints?
Hence this section discusses Markov properties in the sense of factorization properties of the joint
probability density and the question of completeness of constraint-based causal discovery.
For DAGs these factorization properties coincide with the question "does d-separation in the
graph imply independence?", and sometimes this is taken as the definition
of a "Markov property" (\eg \citep{BongersCyclic}).

Our results considerably extend those of \citep{MethodPaper} as follows: In \citep{MethodPaper},
the primary idea to resolve the issues encountered \eg in example \ref{example:intro} by formally capturing them and giving reasonable assumptions under which they are excluded. This assumption essentially consists of enforcing equality of all context-specific graphs.
Thus giving a Markov property that applies only if all context-specific graphs agree made sense for the purposes of \citep{MethodPaper}; under such assumption the proof dramatically simplifies, as classical d-separation techniques apply at least indirectly yielding (counterfactual) independence statements. Here we are interested in how to distinguish different context-specific graphs, both to validate that these defintions are empiricial and to use those differences to solve practical problems (§\ref{apdx:practical_examples}).
As explained below, this requires new ideas and a novel framework to skip the intermediate d-separation argument typically employed to derive a global Markov-property from a local one.

\paragraph{Formalizing the Result:}
As illustrated in the introduction to this section §\ref{sec:ConnectToCSI},
we will have to exclude relations between ancestors (beyond the union-graph)
from our formal claims (see example \ref{example:NonMarkov}),
as they are not generally accessible:
\begin{definition}\label{def:anc_anc_corr}
	Define the (identifiable) ancestor--ancestor correction
	$\gIdent{G}\FixedR$ as follows:
	Start with $\gIdent{G}\FixedR = \gDescr{G}\FixedR$, then add all edges
	in $\gUnion{G}$, between any two ancestors in $\gUnion{G}$ of $R$
	to $\gIdent{G}\FixedR$. 
\end{definition}
\begin{lemma}\label{lemma:no_phys_anc_anc}
	There are no physical ancestor--ancestor problems (proof in  §\ref{apdx:GraphRelations}):\\
	$\gDescr{G}\FixedR \subset \gIdent{G}\FixedR \subset \gUnion{G}$ and
	if $M$ is strongly regime-acyclic (see below), then
	$\gIdent{G}\FixedR \subset \gPhys{G}\FixedR$.
\end{lemma}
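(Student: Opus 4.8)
The plan is to handle the three claimed edge-set inclusions (all on the common node set) in turn. The chain $\gDescr{G}\FixedR \subset \gIdent{G}\FixedR \subset \gUnion{G}$ holds unconditionally and is essentially bookkeeping, whereas $\gIdent{G}\FixedR \subset \gPhys{G}\FixedR$ under strong regime-acyclicity is the substantive part, which I would reduce to Lemma~\ref{lemma:PhysChangesAreRegimeChildren} together with the edge-inclusions of Lemma~\ref{lemma:edge_inclusions}.

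First the bookkeeping. $\gDescr{G}\FixedR \subset \gIdent{G}\FixedR$ is immediate, since by Definition~\ref{def:anc_anc_corr} the graph $\gIdent{G}\FixedR$ arises from $\gDescr{G}\FixedR$ only by \emph{adding} edges. For $\gIdent{G}\FixedR \subset \gUnion{G}$ I would observe that every edge of $\gIdent{G}\FixedR$ is either already an edge of $\gDescr{G}\FixedR$ -- and $\gDescr{G}\FixedR \subset \gPhys{G}\FixedR \subset \gUnion{G}$ by Lemma~\ref{lemma:edge_inclusions} -- or one of the edges added in Definition~\ref{def:anc_anc_corr}, which by construction are edges of $\gUnion{G}$; in either case it is an edge of $\gUnion{G}$.

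For the main inclusion: since $\gDescr{G}\FixedR \subset \gPhys{G}\FixedR$ already, it suffices to show that each edge added in Definition~\ref{def:anc_anc_corr} lies in $\gPhys{G}\FixedR$. Fix such an edge $X \to Z$; by construction it is an edge of $\gUnion{G}$ with both $X$ and $Z$ ancestors of $R$ in $\gUnion{G}$. If $X = R$ or $Z = R$, the edge is incident to $R$, and $\gPhys{G}\FixedR$ contains all $R$-incident edges of $\gVisible{G}[M] = \gUnion{G}$ by the (analogous) construction of the physical graph, so we are done. Otherwise $X \neq R \neq Z$, and it remains to show $X \in \gPhys\Pa\FixedR(Z)$. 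Here I would invoke strong regime-acyclicity, which in particular should yield that $\gUnion{G}$ has no directed cycle through $R$: since $Z \neq R$ is an ancestor of $R$ in $\gUnion{G}$, there is a directed path of length at least one from $Z$ to $R$, so if $R$ were a union-parent of $Z$ the edge $R \to Z$ would close a directed cycle through $R$ -- a contradiction. Hence $R \notin \gUnion\Pa(Z)$, and Lemma~\ref{lemma:PhysChangesAreRegimeChildren} (applied to $Z$) gives $\gPhys\Pa\FixedR(Z) = \gUnion\Pa(Z)$, and $X \in \gUnion\Pa(Z)$ since $X \to Z$ is an edge of $\gUnion{G}$. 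Thus $X \to Z$ is an edge of $\gPhys{G}\FixedR$, completing the proof.

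The step I expect to be the main obstacle is not any calculation but the careful matching against definitions: I would need to confirm that whatever exact form "strongly regime-acyclic" takes (it is only defined below) really does rule out the cycle $R \to Z \to \dots \to R$ invoked above -- equivalently, that it implies no proper ancestor of $R$ in $\gUnion{G}$ carries $R$ among its union-parents -- and to get the boundary cases right, namely edges incident to $R$ and the convention on whether the ancestor set of $R$ in $\gUnion{G}$ is taken to include $R$ itself. All delicate support-restriction issues (the physical graph uses the pooled support $P_M(V)$ with mechanisms $\mathcal{F}_{\PearlDo(R=r)}$) are already packaged inside Lemma~\ref{lemma:PhysChangesAreRegimeChildren}, so they do not need to be revisited here.
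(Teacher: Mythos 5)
Your proposal is correct and follows essentially the same route as the paper's proof: the first two inclusions are definitional bookkeeping via Lemma~\ref{lemma:edge_inclusions}, and the substantive inclusion is reduced to the edges added in Definition~\ref{def:anc_anc_corr}, where strong regime-acyclicity rules out the target being a union-child of $R$ (no directed cycle through $R$) so that Lemma~\ref{lemma:PhysChangesAreRegimeChildren} applies. The paper phrases the acyclicity step as ``children of $R$ are not union-ancestors of $R$,'' which is exactly your no-cycle-$R \to Z \to \dots \to R$ argument, and your extra care about $R$-incident edges and the $\Anc(R)\ni R$ convention is harmless either way.
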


We need some comparatively weak acyclicity assumption
called strong regime-acyclicity
\citep[Def.\ C.6]{MethodPaper},
ensuring the absense of cycles \emph{per regime}
or involving ancestors of the indicator $R$; see also §\ref{sec:solvability} for details on the definition \ref{def:acyclicity} and a connection to solvability.
Finally, the Markov-property we obtain reads:

\begin{prop}\label{prop:markov}
	Assume the model is strongly regime-acyclic
	and causally sufficient.
	If $X$ and $Y$ are non-adjacent in $\gIdent{G}\FixedR$ and both $X, Y \neq R$, then
	either
	\begin{enumerate}[label=(\alph*)]
		\item
		for $Z = \gUnion\Pa(X)$ or
		$Z = \gUnion\Pa(Y)$
		it holds $X \independent Y | Z$, or
		\item		
		for $Z = \gDescr\Pa\FixedR(X) - \{R\}$ or 
		$Z = \gDescr\Pa\FixedR(Y) - \{R\}$		
		it holds $X \independent Y | Z, R=r$.
	\end{enumerate}	
	Further, if either $X \notin \gUnion\Anc(R)$ or $Y \notin \gUnion\Anc(R)$,
	then (b) applies, otherwise (a) applies.
\end{prop}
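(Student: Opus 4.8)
The plan is to reduce the statement to a claim about \emph{solution functions} of the SCM restricted to appropriate subsystems, since the non-adjacency hypotheses in $\gIdent{G}\FixedR$ are, by Definitions \ref{def:g_descr} and \ref{def:obs_graph}, statements about the (non-)constancy of mechanisms on certain supports. First I would fix notation: write the solution of the structural equations as a measurable function of the noises, $X_i = S_i(U)$, and likewise the solution of the regime-enforced system $\mathcal{F}_{\PearlDo(R=r)}$, which by strong regime-acyclicity exists and is unique (cf.\ §\ref{sec:solvability}). The key structural observation is that, for any node $W$, the value $S_W(U)$ depends only on the noises of the ancestors of $W$, and the restriction of this dependence to the relevant support is governed exactly by the parent-sets defining $\gDescr{G}\FixedR$ resp.\ $\gVisible{G}=\gUnion{G}$. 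So non-adjacency of $X$ and $Y$ should translate into: on the relevant support, $S_X$ and $S_Y$ can be written as functions of disjoint-enough noise-blocks once we condition on the separating set $Z$ (which contains the direct parents that could carry the shared dependence).

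The core of the argument is then a case split exactly matching the two regimes of the statement. \emph{Case (b): $X\notin\gUnion\Anc(R)$ or $Y\notin\gUnion\Anc(R)$.} \Wlog say $Y$ is not an ancestor of $R$. Then conditioning on $R=r$ does not induce selection-type dependence into $Y$'s ancestral subsystem, and within the event $\{R=r\}$ the value of $Y$ is computed by the regime-enforced mechanisms from noises of $\gDescr\Anc\FixedR(Y)$; setting $Z=\gDescr\Pa\FixedR(Y)-\{R\}$, non-adjacency in $\gIdent{G}\FixedR$ (hence in $\gDescr{G}\FixedR$, using that the ancestor--ancestor correction only adds edges among ancestors of $R$, which $Y$ is not) gives that $f_Y$ restricted to $\supp(P_M(\gUnion\Pa(Y)\mid R=r))$ is constant in $X$; so given $Z$ and $R=r$, the value of $Y$ is a function of noises disjoint from those determining $X$, yielding $X\independent Y\mid Z,R=r$. \emph{Case (a): both $X,Y\in\gUnion\Anc(R)$.} Here conditioning on $R=r$ is dangerous (this is the content of Example \ref{example:NonMarkov}), so instead we do \emph{not} condition on the regime: we use that by Definition \ref{def:anc_anc_corr} all edges among ancestors of $R$ present in $\gUnion{G}$ have been \emph{added} to $\gIdent{G}\FixedR$, so non-adjacency of $X$ and $Y$ in $\gIdent{G}\FixedR$ forces non-adjacency already in $\gUnion{G}=\gVisible{G}$; then the ordinary (global) Markov property of the visible/union graph — which for the acyclic-enough union-subsystem among ancestors of $R$ follows from classical d-separation as in \citep{MethodPaper} — gives $X\independent Y\mid \gUnion\Pa(X)$ (or symmetrically with $Y$), i.e.\ option (a) with $Z=\gUnion\Pa(X)$.

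The remaining bookkeeping is to check the two "$Z=\dots(X)$ or $Z=\dots(Y)$" disjunctions are genuinely available, i.e.\ that at least one of the two candidate separating sets works; this is where one uses that the missing edge is between $X$ and $Y$ and that the relevant solution function factors through \emph{one} side's parents — the standard ancestral argument, choosing whichever of $X,Y$ is "lower" in a topological order of the relevant acyclic subsystem. I would also verify the edge-inclusions needed to transfer non-adjacency between $\gIdent{G}\FixedR$, $\gDescr{G}\FixedR$ and $\gUnion{G}$ are exactly those supplied by Lemma \ref{lemma:no_phys_anc_anc} and Lemma \ref{lemma:edge_inclusions}, so no extra hypotheses creep in.

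I expect the main obstacle to be Case (a) done carefully: conditioning is absent there, but one must still argue that the value of $X$ (an ancestor of $R$) together with $\gUnion\Pa(X)$ screens off $Y$, even though the \emph{full} model contains $R$ and its descendants — the point being that $X,Y$ and the chosen parent set all lie in the ancestral-of-$R$ sub-SCM, which is acyclic by strong regime-acyclicity, so one may legitimately invoke a d-separation/Markov argument on that sub-SCM and then marginalize. Making precise that this sub-SCM inherits a usable Markov property (it is causally sufficient and acyclic, hence the classical result applies) and that marginalizing the rest of the model cannot create dependence between $X$ and $Y$ given $\gUnion\Pa(X)$ is the delicate step; the solution-function framework is exactly what lets us bypass the intermediate counterfactual d-separation argument used in \citep{MethodPaper} and argue directly at the level of the noises.
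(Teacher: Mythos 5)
Your proposal is correct and follows essentially the same route as the paper: the same case split (both endpoints union-ancestors of $R$ versus not), the same choice of conditioning sets, and the same solution-function\Slash{}noise-level argument in place of path-blocking (the paper packages it as "barriers", and also handles Case (a) this way rather than via classical d-separation on the ancestral sub-SCM, but that is a cosmetic difference). The one step you assert rather than establish --- that conditioning on $(Z, R=r)$ does not couple $\eta_Y$ to the noises driving $X$ --- is exactly the paper's Lemma \ref{lemma:split_P_eta}(b), which rests on the event $\{Z=z,R=r\}$ being measurable with respect to the noises of $\gUnion\Anc(R)\cup\gDescr\Anc\FixedR(Z)$ only, together with the wlog reduction ensuring $Y\notin\gDescr\Anc\FixedR(X)$ that your "topological order" bookkeeping gestures at.
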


Note the restriction on where to search for $Z$,
which is relevant for causal discovery
in practice, is a bit subtle here
(see cor.\ \ref{cor:markov}, rmk.\ \ref{rmk:CD}).
Links involving $R$ are a bit special:

\begin{rmk}\label{rmk:EdgesFromR}
	If one of the variables is $R$
	then (for univariate $R$) no regime-specific tests are available
	and we have to fall back to the "standard" result (see \eg \cite{BongersCyclic}):	
	Assume the model is causally sufficient.
	If $R$ and $Y$ are non-adjacent in $\Acycl(\gUnion{G})$,
	then there is $Z = \gUnion\Pa(R)$ or $Z = \gUnion\Pa(Y)$
	with $R \independent Y | Z$.	
	If $Y$ is an ancestor of $R$ this does not change the result
	if the model is strongly regime acyclic.
	However, if $Y$ is part of a directed cycle involving at least one child of $R$,
	then the edge $R \rightarrow Y$ in $\Acycl(\gUnion{G})$ cannot be deleted
	from our independence-constraints, even if it is not in $\gUnion{G}$.
	By the above, together with prop.\ \ref{prop:markov}, 
	this is the only such issue that can occur.
\end{rmk}

\paragraph{The Proof-Idea:}
A more detailed description and proofs can be found in §\ref{apdx:Markov}.
Here we only sketch the proof idea.
Commonly one starts from the \emph{local} Markov-property:
The idea is that
knowing $Z$ containing the parents of $Y$ renders $Y$ independent of all noises other than $\eta_Y$,
because $Y = f_Y(\Pa_y = \pa_Y, \eta_Y)$.
The subtle problem here is to (a) understand this not only for union-parents, but also for
parents in $\gDescr{G}\FixedR$ and (b) to then combine both.
The issue is that $\gDescr{G}\FixedR$ is not a causal graph associated to a SCM in the standard sense%
\footnote{In \citep{MethodPaper},
	it is shown that there are meaningful "sufficiency" assumptions,
	s.\,t.\ $\gDetect{G}\FixedR = \gPhys{G}\FixedR = \gCF{G}\FixedR = \gVisible{G}[M_{\PearlDo(R=r)}]$,
	in which case the problem (mostly) is reduced to (b).
	Here we are interested in the differences between those
	graphical objects in particular, so we need identifiability-results
	that hold more generally.}
(\ie there need not exist an SCM $M'$ with $\gVisible{G}[M'] = \gDescr{G}\FixedR$),
so the local Markov-property is not obvious anymore. We come back to this momentarily.
Seemingly this problem (a) can be resolved by considering a "conditioned" SCM (replace
$P(\eta)$ by $P(\eta|R=r)$ and keep $\mathcal{F}$ to obtain $M'$,
which confounds ancestors of $R$, see lemma \ref{lemma:split_P_eta}),
but than point (b) becomes even harder -- intuitively, since information in causal discovery
is in the missing links, one wants to combine information of link-removals from CSI
(the "conditional" graph) with link-removals from the union-model, so one is inclined to 
intersect the respective edge-sets. But problem (b) essentially asks about the connection of the resulting
object to the underlying SCM (and the regime-specific SCM $M_{\PearlDo(R=r)}$).
An important contribution of our proof is that it shows, how this information
("intersect two graphs")
is related to the SCM via $\gDescr{G}\FixedR$.
This connection allows then for further inferences in §\ref{sec:ConnectToJCI} and §\ref{sec:ConnectToTransfer}.

The way we approach the problem, is by first facing yet another subtlety:
The "propagation" of the local information from the local Markov-property
to obtain global statements about the graph is normally done via a separation-criterion,
that analyzes individual paths in the graph. But what does blocking a path in $\gDescr{G}\FixedR$
mean? The idea we propose is to go from graphical properties to conditional independencies
not via a separation-criterion (when blocking $Z$) and paths as an intermediate step,
but via changes in the noise-distribution (when conditioning on $Z$) and the form of solution-functions.
Here the "form of solution-functions" captures graphical properties,
because the system of structural equations can be solved "downstream" starting from
root-nodes, successively working down their descendants -- as follows (see §\ref{sec:solvability}):
\newtheorem*{RecallAncOnly}{Cor.\,\ref{cor:solution_depends_on_regime_anc_only}}
\begin{RecallAncOnly}
	Given a solvable, weakly regime-acyclic model,
	then, for any set of variables $X$:	
	\begin{enumerate}[label=(\alph*)]
		\item
		$F_X$ depends only on noise-terms of ancestors of $X$ in $\gUnion{G}$.
		\item
		$F^{R=r}_X := F_X|_{F_R^{-1}(\{r\})}$ depends only  on noise-terms of
		ancestors of $X$ in $\gDescr{G}\FixedR$.
	\end{enumerate}
\end{RecallAncOnly}
As the reader may have noticed we phrased the local Markov-property above via dependence of noise-terms
(rather than independence of non-descendants). Next, consider a conditioning set $Z\supset \Pa(Y)$.
The essential trick is that
since knowledge via $Z$ of the parents of $Y$ renders $Y$ independent of all noises other than $\eta_Y$,
another variable $X = F_X(\eta_A)$ is independent of $Y$ given $Z$ as long as
$\eta_A \independent \eta_Y | Z$. But again by the form of solution-functions, this time of $F_Z$,
we know which $\eta_i$ will be "mixed" (become dependent, see lemma \ref{lemma:split_P_eta})
when conditioning on $Z$.

Formulating the local Markov-property directly through "dependence on $\eta_Y$ only"
works for our setup immediately. Further it makes
problem (a) solvable since Cor.\,\ref{cor:solution_depends_on_regime_anc_only}b
is applicable for $\gDescr{G}\FixedR$!
Finally, these constraints obtained through solution-functions are uniform, in the sense
that it does not matter if we used Cor.\,\ref{cor:solution_depends_on_regime_anc_only}a or Cor.\,\ref{cor:solution_depends_on_regime_anc_only}b to obtain an intermediate result.
The obtained statements can thus be easily combined, which eventually allows to resolve problem (b).

\subsection{Faithfulness Properties}\label{sec:Faithfulness}

As is shown in \citep{MethodPaper}
(and repeated in \ref{apdx:Faithfulness})
standard faithfulness assumptions by a (short)
argument justify the following

\begin{assumption}\label{ass:faithful}
	We assume the model to be $R$-adjacency-faithful
	in the sense that for all $r$:
	\begin{equation*}
		\exists Z \txt{ s.\,t.\ }
		\left\{\quad\begin{aligned}
		X &\independent Y | Z \quad\txt{or}\\
		X, Y \neq R \txt{ and }X &\independent Y | Z, R=r
		\end{aligned}\quad\right\}
		\quad\Rightarrow\quad
		\txt{$X$ and $Y$ are not adjacent in $\gDescr{G}_{R=r}$}
	\end{equation*}
\end{assumption}

The from a theoretical point of view potentially more interesting observations is:
The CSI-Markov-property (§\ref{sec:Markov}) guarantees independences for edges
not in $\gIdent{G}\FixedR$, while the $R$-faithfulness argument only 
provides dependence-guarantees for edges in $\gDescr{G}\FixedR$.
As the counter-example \ref{example:NonMarkov} shows,
the Markov-property in general cannot hold for $\gDescr{G}\FixedR$,
but it might of course still hold for a graph $G^\txt{CSI}$ "in-between"
$\gDescr{G}\FixedR \subset G^\txt{CSI}\FixedR \subset \gIdent{G}\FixedR$.
It is unclear if such a $G^\txt{CSI}\FixedR$
for which both meaningful faithfulness- and Markov-properties hold
exists (see §\ref{apdx:Faithfulness}, §\ref{apdx:CSI}).
For the moment, we are primarily interested in relating CSI-information
to SCM-information, so we leave details on the CSI-independence-structure
of distributions induced by (\eg regime-acyclic) SCMs to future research.

Further, to recover a union-graph, we will need (see §\ref{apdx:not_strongly_faithful}
for an example, or §\ref{apdx:GraphRelations} for the proof of correctness of the union-graph
under this assumption, Lemma \ref{lemma:union_is_union} (iii)):
\begin{definition}\label{def:strong_ff}
	We say $M$ is strongly $R$-faithful, if it is $R$-faithful and the mechanisms of
	the union-model are non-deterministic, in the sense that there is no
	set of mechanisms $\mathcal{F'}$ which almost always produces the same
	observations as $\mathcal{F}$, but has different minimal parent-sets.
\end{definition}

\section{Proof of the Markov-Property}\label{apdx:Markov}

Here, the detailed proof of the Markov-property (Prop.\ \ref{prop:markov}) is presented.
See §\ref{sec:ConnectToCSI}
for a high-level overview.

We start from restrictions induced by the graphs on the form of solution-functions.
Recall from §\ref{sec:solvability} that
because the system of structural equations can be solved "downstream" starting from
root-nodes, successively working down their descendants, they depend only on noise-terms of
ancestors within the respective graph:
\newtheorem*{RecallAncOnly2}{Cor.\,\ref{cor:solution_depends_on_regime_anc_only}}
\begin{RecallAncOnly2}
	Given a solvable, weakly regime-acyclic model,
	then, for any set of variables $X$:	
	\begin{enumerate}[label=(\alph*)]
		\item
		$F_X$ depends only on noise-terms of ancestors of $X$ in $\gUnion{G}$.
		\item
		$F^{R=r}_X := F_X|_{F_R^{-1}(\{r\})}$ depends only  on noise-terms of
		ancestors of $X$ in $\gDescr{G}\FixedR$.
	\end{enumerate}
\end{RecallAncOnly2}

\subsection{Graphical Properties Reflected in the Joint Distribution} 

Next, recall that (generally) such restrictions on functional dependence translate
to product-structures on distributions as follows:

\begin{lemma}\label{lemma:split_P_indep}
	Given $A \independent B$ and a mapping $f(A)$ of $A$ only, then
	\begin{equation*}
		P(A,B| f(A)) = P(A|f(A)) \times P(B)
	\end{equation*}
\end{lemma}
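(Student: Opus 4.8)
The plan is to reduce the claimed factorization to two elementary facts: that $B$ is independent of $f(A)$, and that $A$ and $B$ are conditionally independent given $f(A)$. Throughout I write $C := f(A)$ and use that, since $f$ depends on $A$ only, the event $\{A \in S,\ C \in U\}$ coincides with $\{A \in S \cap f^{-1}(U)\}$ and hence lies in $\sigma(A)$ for every pair of measurable sets $S, U$; more generally every $\sigma(C)$-measurable random variable is $\sigma(A)$-measurable.

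First I would record $B \independent C$. This is immediate from $A \independent B$: any $\sigma(A)$-measurable random variable is independent of $B$, and $C = f(A)$ is such. In particular $P(B \mid C) = P(B)$. Next I would establish $A \independent B \mid C$. The cleanest route is the tower property: for bounded measurable $g, h$ one has $\mathbb{E}[\,g(A)h(B) \mid A\,] = g(A)\,\mathbb{E}[h(B)]$ by $A \independent B$, so conditioning further on $C$ (which is coarser than $A$) gives $\mathbb{E}[\,g(A)h(B)\mid C\,] = \mathbb{E}[h(B)]\,\mathbb{E}[\,g(A)\mid C\,] = \mathbb{E}[\,h(B)\mid C\,]\,\mathbb{E}[\,g(A)\mid C\,]$, where the last step reuses $\mathbb{E}[h(B)] = \mathbb{E}[h(B)\mid C]$. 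This is exactly conditional independence of $A$ and $B$ given $C$. Equivalently, and avoiding conditional expectations altogether, one checks directly that for measurable $S, T, U$ with $P(C \in U) > 0$,
\[
P(A \in S,\, B \in T,\, C \in U) = P(A \in S,\, C \in U)\,P(B \in T),
\]
since the first event on the right is $\sigma(A)$-measurable and $A \independent B$; dividing by $P(C \in U)$ identifies the regular conditional distribution of $(A,B)$ given $C$ as the product $P(A \mid C) \times P(B)$.

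Combining the two steps yields $P(A, B \mid C) = P(A \mid C)\,P(B \mid C) = P(A \mid C)\,P(B)$, which is the assertion. I do not expect a genuine obstacle here; the only point requiring a little care is the measure-theoretic bookkeeping around conditioning on $C = f(A)$, which is degenerate given $A$. For that reason I would phrase the argument via $\sigma$-algebras and regular conditional distributions rather than via a density manipulation $p(a,b\mid c)$ with $c = f(a)$, where the naive computation is awkward; if the paper prefers to stay at the level of densities one can instead state and use the identity in the displayed "for all $S,T,U$" form above, which is what downstream applications (e.g.\ Lemma \ref{lemma:split_P_eta}) actually need.
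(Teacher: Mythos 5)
Your proof is correct and follows essentially the same route as the paper: the paper's proof is the quick chain-rule manipulation $P(A,B\mid f(A)) = P(B\mid A, f(A))\,P(A\mid f(A)) = P(B\mid A)\,P(A\mid f(A)) = P(B)\,P(A\mid f(A))$, whose key step (conditioning on $f(A)$ is redundant given $A$) is exactly the $\sigma(f(A))\subset\sigma(A)$ observation you formalize via the tower property. Your version is just a more careful, measure-theoretic rendering of the same argument.
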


\begin{proof}
	$P(A,B| f(A)) = P(B,A|f(A)) = P(B|A, f(A)) \times P(A|f(A))
	= P(B|A) \times P(A|f(A)) = P(B) \times P(A|f(A))$,
	where the last equality is by $A \independent B$
	$\Leftrightarrow$ $P(B|A) = P(B)$.
\end{proof}

We can use this, to see which part of the "noise-space" is affected by conditioning.
Note that the real power of this approach is hidden in the knowledge
about ancestral relations via Cor.\,\ref{cor:solution_depends_on_regime_anc_only}
\emph{combining} information about the two different graphs $\gUnion{G}$ and $\gDescr{G}\FixedR$.
We write "$P(\{\eta_i\})$" for $P(\eta_1, \ldots, \eta_N )$ for the $N$ noise-terms of the $N$ observables
$X_i$. We then use set-notation to make restrictions more explicit (\eg $\{\eta_i| i \in A\}$ instead of $\eta_A$).

\begin{lemma}\label{lemma:split_P_eta}
	Given a solvable, weakly regime-acyclic, causally sufficient model,
	and a set $Z$ of variables, 
	then,	
	\begin{enumerate}[label=(\alph*)]
		\item
		Using $A := \gUnion\Anc(Z)$:
		\begin{equation*}
			P(\{\eta_i\}|Z)
			= 
			P(\{\eta_i| i \in A\}|Z)
			\times
			\prod_{j \notin A}
			P(\eta_j)
		\end{equation*}
		In particular:
		\begin{equation*}
			k\notin A\quad\Rightarrow\quad
			P(\eta_k|Z)
			=
			P(\eta_k)
		\end{equation*}
		\item
		If $R \notin Z$ and fixing a value $R=r$,
		using $A_r := \gUnion\Anc(R) \cup \gDescr\Anc\FixedR(Z)$:
		\begin{equation*}
			P(\{\eta_i\}|Z, R=r)
			= 
			P(\{\eta_i| i \in A_r\}|Z, R=r)
			\times
			\prod_{j \notin A_r}
			P(\eta_j)
		\end{equation*}
		In particular:
		\begin{equation*}
			k\notin A_r\quad\Rightarrow\quad
			P(\eta_k|Z, R=r)
			=
			P(\eta_k)
		\end{equation*}
	\end{enumerate}
\end{lemma}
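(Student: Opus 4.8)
The plan is to read off both factorizations from two ingredients that are already in hand: the restricted functional form of the solution-functions (Cor.\ \ref{cor:solution_depends_on_regime_anc_only}) and the elementary splitting Lemma \ref{lemma:split_P_indep}. Causal sufficiency enters through the fact that the noise family $\{\eta_i\}$ is jointly independent, so any subvector of noises is independent of its complement and the complement factorizes coordinatewise; this is what will turn the ``$P(\eta_{\bar A})$''-type factors produced by Lemma \ref{lemma:split_P_indep} into the advertised products $\prod_{j\notin A}P(\eta_j)$.

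For part (a), I would first apply Cor.\ \ref{cor:solution_depends_on_regime_anc_only}a to the set $Z$, writing $Z=F_Z(\eta_A)$ with $A=\gUnion\Anc(Z)$, so that $Z$ is almost surely a function of $\eta_A:=\{\eta_i\mid i\in A\}$ alone. Since $\eta_A$ and $\eta_{\bar A}:=\{\eta_j\mid j\notin A\}$ are independent, Lemma \ref{lemma:split_P_indep} with $(A,B,f(A)):=(\eta_A,\eta_{\bar A},Z)$ gives $P(\eta_A,\eta_{\bar A}\mid Z)=P(\eta_A\mid Z)\,P(\eta_{\bar A})$, and one more use of joint independence of the noises rewrites $P(\eta_{\bar A})=\prod_{j\notin A}P(\eta_j)$, which is the claim. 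The ``in particular'' statement then follows by integrating both sides over every $\eta_i$ with $i\ne k$, since $P(\eta_A\mid Z)$ and all marginals $P(\eta_j)$ with $j\ne k$ integrate to one.

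For part (b), the idea is to condition in two stages — first on the event $\{R=r\}$, then on $Z$ — keeping track of which noises are ``touched'' at each stage. By Cor.\ \ref{cor:solution_depends_on_regime_anc_only}a applied to $\{R\}$, the variable $R$ is a function of $\eta_B$ with $B:=\gUnion\Anc(R)$; so, exactly as in part (a), Lemma \ref{lemma:split_P_indep} shows that under the conditional law $P':=P(\cdot\,|\,R=r)$ the block $\eta_{\bar B}$ remains independent of $\eta_B$ with its original (mutually independent) marginals, i.e.\ $P'(\eta_{\bar B})=\prod_{j\notin B}P(\eta_j)$. Next I would use Cor.\ \ref{cor:solution_depends_on_regime_anc_only}b: on the event $\{R=r\}$ one has $Z=F^{R=r}_Z(\eta_C)$ with $C:=\gDescr\Anc\FixedR(Z)$, and since $A_r=B\cup C$ we may regard $Z$ as a function of $\eta_{A_r}$ under $P'$. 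Because $\eta_{\bar{A_r}}$ is a subvector of $\eta_{\bar B}$ it is $P'$-independent of $\eta_{A_r}$ with unchanged product marginals, so a final application of Lemma \ref{lemma:split_P_indep} under $P'$ — using $P'(\cdot\,|\,Z)=P(\cdot\,|\,Z,R=r)$ — produces $P(\{\eta_i\}\mid Z,R=r)=P(\eta_{A_r}\mid Z,R=r)\,\prod_{j\notin A_r}P(\eta_j)$, and the ``in particular'' part again follows by marginalization.

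I expect the only real obstacle to be the bookkeeping in part (b), where one must be careful that Cor.\ \ref{cor:solution_depends_on_regime_anc_only}b describes $F_Z$ only after restriction to $F_R^{-1}(\{r\})$, so the statement ``$Z$ is a function of $\eta_{A_r}$'' is valid only $P$-almost-everywhere on $\{R=r\}$ — which is, however, precisely the measure $P'$ under which the splitting lemma is being invoked, so this matches — and that conditioning on $R=r$ genuinely leaves the marginals of all $\eta_j$ with $j\notin\gUnion\Anc(R)$ intact (this is where solvability and weak regime-acyclicity are used, via Cor.\ \ref{cor:solution_depends_on_regime_anc_only}). Everything else is a routine interplay of independence and Lemma \ref{lemma:split_P_indep}.
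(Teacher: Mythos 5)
Your proof is correct and follows essentially the same route as the paper: both parts rest on Cor.\ \ref{cor:solution_depends_on_regime_anc_only} to write the conditioning variables as functions of the relevant noise block, followed by Lemma \ref{lemma:split_P_indep} and joint independence of the noises to peel off the product $\prod_{j\notin A}P(\eta_j)$. The only (immaterial) difference is in part (b), where you condition sequentially — first on $R=r$, then on $Z$ under $P(\cdot\mid R=r)$ — whereas the paper conditions on both events at once by viewing $F_R$ and $F^{R=r}_Z$ simultaneously as functions of $\eta_{A_r}$; the two bookkeeping schemes yield the same factorization.
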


\begin{proof}
	(a)
	By corollary\,\ref{cor:solution_depends_on_regime_anc_only}a,	
	$F_Z$ depends only on noise-terms of ancestors $A$ of $Z$ in $\gUnion{G}$.
	In particular we can write $Z=F_Z(\{\eta_i| i \in A\})$.
	Using this and lemma \ref{lemma:split_P_indep}, which applies by causal sufficiency:
	\begin{align*}
		&P(\{\eta_i\}|Z=z)\\
		&= P\big(\{\eta_i | i\in A \}, \{\eta_j | j\notin A \}
		\quad|\quad F_Z(\{\eta_i| i \in A\}) = z\big)\\
		&= P\big(\{\eta_i | i\in A \}
		\quad|\quad F_Z(\{\eta_i| i \in A\}) = z\big)
		\times P\big(\{\eta_j | j\notin A \}\big)
	\end{align*}
	The first term is indeed just $P(\{\eta_i | i\in A \} | Z=z)$,
	while the second term is a product by causal sufficiency.
	The second claim (of part a) follows by marginalizing this.
	
	(b)	
	By corollary\,\ref{cor:solution_depends_on_regime_anc_only}a,
	$F_R$ depends only on noise-terms of ancestors of $R$ in $\gUnion{G}$.
	In particular we can write $R=F_R(\{\eta_i| i \in A_r\})$
	(with trivial dependence on elements in $A_r$ not in $\gUnion\Anc(R)$).
	
	By corollary\,\ref{cor:solution_depends_on_regime_anc_only}b,
	$F_Z^{R=r} = F_Z|_{F_R^{-1}(\{r\})}$
	depends only on noise-terms of ancestors of $Z$ in $\gDescr{G}\FixedR$.
	In particular we can write $Z=F_Z(\{\eta_i| i \in A_r\})$
	(with trivial dependence on elements in $A_r$ not in $\gDescr\Anc\FixedR(Z)$).	
	Using this and lemma \ref{lemma:split_P_indep}, which applies by causal sufficiency:
	\begin{align*}
		&P(\{\eta_i\}|Z=z, R=r)\\		
		&= P\big(\{\eta_i | i\in A_r \}, \{\eta_j | j\notin A_r \}
		\quad|\quad F_R(\{\eta_i| i \in A_r\}) = r,\quad
		F_Z(\{\eta_i| i \in A_r\}) = z \big)\\
		&= P\big(\{\eta_i | i\in A_r \}, \{\eta_j | j\notin A_r \}
		\quad|\quad F_R(\{\eta_i| i \in A_r\}) = r,\quad
		F^{R=r}_Z(\{\eta_i| i \in A_r\}) = z \big)\\		
		&= P\big(\{\eta_i | i\in A_r \}
		\quad|\quad F_R(\{\eta_i| i \in A_r\}) = r,\quad
		F^{R=r}_Z(\{\eta_i| i \in A_r\}) = z \big)
		\times P\big(\{\eta_j | j\notin A_r \}\big)
	\end{align*}
	Again, the first term is just $P(\{\eta_i | i\in A_r \} | R=r, Z=z)$,
	while the second term is a product by causal sufficiency.
	The second claim (of part b) follows by marginalizing this.
\end{proof}

\Ie on the "noise-space", selection-bias from conditioning is confined to "sources" $\eta_i$
from $A$ (or $A_r$ respectively).
The idea is now, to separate two variables, not by explicitly blocking all paths,
but by building a "barrier" $B$ to divide the system  (by conditioning)
into two regions of noise-terms affecting one
variable vs.\ those affecting the other, and using
the observation above (lemma \ref{lemma:split_P_eta}), to choose $B$ such that selection-bias
also does not mix those two regions.

Many ideas of the "standard" setup carry over, for example the "local Markov-Property"
formalizes the observation that, given its parents, a variable $X_k$,
depends \emph{only} on its "own" noise-term $\eta_k$. Hence the parents
separate the "region" containing only $\eta_k$ from all other noises (thus from upstream variables)
and if $X_k$ is not included in a directed cycle conditioning on the parents will
not induce selection-bias ($\eta_k \independent \eta_i | \Pa_k$). Here this can be formulated
as a "barrier" against all other noise-terms (lemma \ref{lemma:LocalMarkov}).

\subsection{Definitions and their Properties}

Immediately from the solution-properties Cor.\,\ref{cor:solution_depends_on_regime_anc_only},
we can relate variables to the sources of their randomness:

\begin{definition}
	Noise-sources of observations:
	\begin{enumerate}[label=(\alph*)]
		\item
		The source of a set of variables $X$ is
		$\biasSource(X) = \gUnion\Anc(X)$.
		\item
		The $r$-source is
		$\biasSource_r(X) = \gDescr\Anc\FixedR(X)$.
	\end{enumerate}	
\end{definition}

If we do not block paths, we need some other notion of separation, following
the idea of studying the changes to the noise-space:

\begin{definition}\label{def:barrier_noise}
	Separation from noise-sources:
	\begin{enumerate}[label=(\alph*)]
		\item
		A barrier $B$ separating a set of variables $Y$ from the noise-sources of
		another set of variables $C$ is
		a set of variables disjoint from $Y$ (\ie $B\cap Y = \emptyset$; but \emph{not}
		necessarily from $C$)
		such that $Y \independent \eta_C | B$.
		\item
		An $r$-barrier $B$ separating $Y$ from the noise-sources of $C$ is
		a set of variables disjoint from $Y$ with $R\in B$
		such that $Y \independent \eta_C | B', R=r$
		(where $B' = B - \{R\}$).
	\end{enumerate}	
\end{definition}

Such "barriers" exist:
The "local" Markov property,
essentially says that parent-sets (from a suitable graph),
block out all other (exogenous) noise-terms,
it can be formulated in this language as:

\begin{lemma}\label{lemma:LocalMarkov}
	Local Markov Property for Barriers (assuming causal sufficiency):
	\begin{enumerate}[label=(\alph*)]
		\item
		For any variable $Y$ which is not part of a directed cycle
		in $\gUnion{G}$,
		the set $B = \gUnion\Pa(Y)$ is
		a barrier separating $Y$ from the noise-sources of any set $C$ not containing $Y$.
		\item
		For any variable $Y$ with $R \neq Y$,
		which is not part of a directed cycle in $\gDescr{G}\FixedR$,
		and with $Y \notin \gUnion\Anc(R)$,
		the set $B = \gDescr\Pa\FixedR(Y) \cup \{R\}$ is
		an $r$-barrier separating $Y$ from the noise-sources of
		any set $C$ not containing $Y$.
	\end{enumerate}	
\end{lemma}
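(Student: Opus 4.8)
The plan is to prove both parts by the same two-step argument. Fix (the conditioning part of) the proposed barrier. \emph{Step 1:} show that after this conditioning $Y$ is a deterministic function of its own noise $\eta_Y$ alone, almost surely. \emph{Step 2:} show that this conditioning does not couple $\eta_Y$ to the remaining noise terms, so that in particular $\eta_Y \independent \eta_C$ continues to hold (using $\eta_Y \notin \eta_C$, which holds because $Y \notin C$). Pushing the resulting conditional product law of $(\eta_Y, \eta_C)$ forward through the map of Step~1 then factorizes the conditional law of $(Y, \eta_C)$, which is exactly the asserted barrier property.

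For part (a): condition on $B = \gUnion\Pa(Y) = b$. By Def.~\ref{def:obs_graph} (see §\ref{apdx:NonConst}), $f_Y$ restricted to $\supp(P_M(\Pa(Y)))$ is constant in every structural parent outside $\gUnion\Pa(Y)$, so $P_M$-almost surely $Y = \tilde f_Y(\gUnion\Pa(Y), \eta_Y)$ for a suitable $\tilde f_Y$; restricting further to the event $\{B = b\}$ stays inside that support, so given $B = b$ we have $Y = \tilde f_Y(b, \eta_Y)$, a function of $\eta_Y$ (Step~1). For Step~2, apply Lemma~\ref{lemma:split_P_eta}(a) with $Z = B$ and $A = \gUnion\Anc(B)$: since $Y$ lies on no directed cycle of $\gUnion{G}$, there is no directed path from $Y$ to a parent $p \in \gUnion\Pa(Y)$ (such a path, closed by the edge $p \to Y$, would be a directed cycle through $Y$), hence $Y \notin A$. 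Therefore $\eta_Y$ appears in the product factor $\prod_{j \notin A} P(\eta_j)$ of $P(\{\eta_i\} \mid B)$, so $\eta_Y$ is independent of all other noise terms given $B$, and in particular $\eta_Y \independent \eta_C \mid B$. Combining the two steps gives $Y \independent \eta_C \mid B$, as claimed.

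For part (b): condition on $R = r$ and $B' = \gDescr\Pa\FixedR(Y) \setminus \{R\}$. Since $\PearlDo(R=r)$ replaces only $f_R$, and the $R$-marginal of $P_M(V \mid R=r)$ is the point mass at $r$, the parent set of $Y$ in $\gDescr{\bar{G}}\FixedR$ equals $B'$, and $f_Y$ restricted to $\supp(P_M(\Pa(Y) \mid R=r))$ is constant in all structural parents other than those in $B'$, other than $R$, and other than $\eta_Y$; fixing $R = r$ and $B' = b'$ thus makes $Y$ a deterministic function of $\eta_Y$, almost surely under $P_M(\cdot \mid R=r)$ (Step~1). For Step~2, apply Lemma~\ref{lemma:split_P_eta}(b) with $Z = B'$ and $A_r = \gUnion\Anc(R) \cup \gDescr\Anc\FixedR(B')$: by hypothesis $Y \notin \gUnion\Anc(R)$, and since $Y$ lies on no directed cycle of $\gDescr{G}\FixedR$ there is no directed path in $\gDescr{G}\FixedR$ from $Y$ to a parent in $B'$, so $Y \notin \gDescr\Anc\FixedR(B')$; hence $Y \notin A_r$, $\eta_Y$ sits in the product factor of $P(\{\eta_i\} \mid B', R=r)$, and $\eta_Y \independent \eta_C \mid B', R=r$. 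The same pushforward step as in (a) then yields $Y \independent \eta_C \mid B', R=r$, i.e.\ $B = B' \cup \{R\}$ is an $r$-barrier.

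I expect the only non-routine point to be Step~1: passing from ``non-constant on the observational support'' (the graph definitions) to the almost-sure functional representation $Y = \tilde f_Y(\gUnion\Pa(Y), \eta_Y)$, and checking that further conditioning on $\{B = b\}$ (resp.\ $\{B' = b', R=r\}$) keeps one inside the support on which the dropped structural parents are inert. Everything else --- the two cycle-exclusion arguments and the pushforward of a conditional product measure --- is bookkeeping on top of Cor.~\ref{cor:solution_depends_on_regime_anc_only} and Lemma~\ref{lemma:split_P_eta}.
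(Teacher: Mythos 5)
Your proposal is correct and follows essentially the same route as the paper's proof: represent $Y$ as a function of its (descriptive/union) parents and $\eta_Y$ on the relevant support, use the no-directed-cycle hypothesis (plus $Y\notin\gUnion\Anc(R)$ in part (b)) to conclude $Y$ is not an ancestor of the conditioning set, invoke Lemma \ref{lemma:split_P_eta} to keep $\eta_Y$ decoupled under conditioning, and push forward. Your Step 2 is in fact slightly more explicit than the paper's (which cites only the marginal statement $P(\eta_Y|B)=P(\eta_Y)$ plus causal sufficiency where you correctly point to $\eta_Y$ sitting in the product factor), but this is the same argument.
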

\begin{proof}
	(a)
	Let $B = \gUnion\Pa(Y)$, then $Y = f_Y(B=b, \eta_y)$,
	we write (for fixed $b$) $f_Y(b, -)$ for the mapping $n_Y \mapsto f_Y(b, n_Y)$
	in particular for measurable $U_Y$ and almost all $b$
	\begin{equation*}
		P( y\in U_Y | B=b ) = P( n_Y \in f_Y(b,-)^{-1}(U_Y) | B=b )\txt,
	\end{equation*}
	or written as a pushforward $P(Y|B=b) = f_Y(b,-)_*P(\eta_Y|B=b)$,
	which is determined by $P(\eta_Y|B)$.
	Since, by hypothesis, $Y$ is not part of a directed cycle
	$Y \notin \gUnion\Anc(B)$,
	thus by lemma \ref{lemma:split_P_eta}a
	(second part), $P(\eta_Y|B)=P(\eta_Y)$.
	By causal sufficiency thus $Y \independent \eta_C | B$ if $Y \notin C$.
	
	(b)
	Let $B = \gDescr\Pa\FixedR(Y) \cup \{R\}$, $B' = B - \{R\}$,
	then if $R=r$ we have almost surely $Y = f_Y(B'=b', \eta_y)$:
	By definition of $\gDescr{G}\FixedR$,
	if $R=r$ then $f_Y$ almost surely depends only on $B$ (potentially trivially on $R$)
	and $\eta_Y$.
	Thus again, for measurable $U_Y$ almost always (with $b=(b',r)$)
	\begin{equation*}
		P( y\in U_Y | B'=b', R=r ) = P( n_Y \in f_Y(b,-)^{-1}(U_Y) | B'=b', R=r )\txt,
	\end{equation*}
	or written as a pushforward $P(Y|B'=b', R=r) = f_Y(b,-)_*P(\eta_Y|B=b)$,
	which is determined by $P(\eta_Y|B)$.
	Since, by hypothesis, $Y$ is not part of a directed cycle
	$Y \notin \gDescr\Anc\FixedR(B)$.
	Further, by hypothesis, $Y \notin \gUnion\Anc(R)$,
	thus by lemma \ref{lemma:split_P_eta}b
	(second part), $P(\eta_Y|B=b)=P(\eta_Y)$.
	By causal sufficiency thus $Y \independent \eta_C | B', R=r$ if $Y \notin C$.
\end{proof}

Most importantly, "any set $C$ not containing $Y$"
in the previous lemma includes $\biasSource(X)$ if $Y \notin \Anc_{\txt{union}}(X)$
(similarly for (b)), so we will be able to relate noise-space properties back to
properties of observables.

\begin{definition}\label{def:SepOfObs}
	Separation of observables:
	\begin{enumerate}[label=(\alph*)]
		\item
		A barrier $B$ separating two sets of variables $X$
		from $Y$ is
		a barrier separating $Y$ from the noise-sources of $\biasSource(X)$,
		with $B \cap X = \emptyset$.
		(Thus $B \cap (X\cup Y) = \emptyset$, by def.\ \ref{def:barrier_noise}.)
		\item
		A $r$-barrier $B$ separating two sets of variables $X$ from $Y$ is
		a $r$-barrier separating $Y$ from the noise-sources of $\biasSource_r(X)$,
		with $B \cap X = \emptyset$.
		(Thus $B \cap (X\cup Y) = \emptyset$, by def.\ \ref{def:barrier_noise}.)
	\end{enumerate}	
\end{definition}

Note that the noise-barriers provided by the local Markov condition automatically
"qualify" to separate $X\neq R$ and $Y$ if $X$ is not a (direct) parent
(in the respective graph) of $Y$.
Further, these (def.\ \ref{def:SepOfObs})
indeed relate to independences on the observables:

\begin{lemma}\label{lemma:GeneralizedSep}
	Separation implies independence:
	\begin{enumerate}[label=(\alph*)]
		\item
		If $B$ is a barrier separating $X$ from $Y$,
		then
		$X \independent Y | B$.
		\item		
		If $B$ is a $r$-barrier separating $X$ from $Y$,
		then
		$X \independent Y | B', R=r$, with $B' = B - \{R\}$.
	\end{enumerate}	
\end{lemma}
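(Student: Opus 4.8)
The plan is to reduce both parts to one elementary measure-theoretic fact — conditional independence is preserved under measurable functions of one of the arguments — applied to the explicit functional form of solution-functions given by Cor.~\ref{cor:solution_depends_on_regime_anc_only}. Concretely, I will use: if $Y \independent W \mid Z$ and $g$ is measurable, then $Y \independent g(W) \mid Z$. (This holds because $P(Y\in\cdot\mid W,Z)$ is a.s.\ $\sigma(Z)$-measurable, hence equal to its conditional expectation given $(g(W),Z)$, which is $P(Y\in\cdot\mid g(W),Z)$.) Everything else is unwinding the barrier definitions.

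For part (a): by Cor.~\ref{cor:solution_depends_on_regime_anc_only}a the solution-function $F_X$ depends only on the noise-terms indexed by $\gUnion\Anc(X) = \biasSource(X)$, so almost surely $X = h\big(\{\eta_i \mid i\in\biasSource(X)\}\big)$ for the measurable map $h$ obtained by reading $F_X$ as a function of those coordinates only. By Def.~\ref{def:SepOfObs}a together with Def.~\ref{def:barrier_noise}a, the hypothesis that $B$ is a barrier separating $X$ from $Y$ says exactly that $Y \independent \{\eta_i \mid i\in\biasSource(X)\} \mid B$, with $B\cap(X\cup Y)=\emptyset$ so that the conditioning is non-degenerate. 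Applying the measurable-function fact with $W = \{\eta_i \mid i\in\biasSource(X)\}$ and $g = h$ then gives $Y \independent X \mid B$, which is the claim.

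For part (b): fix $r$ (recall $P(R=r)>0$) and work throughout under the conditional law given $R=r$, i.e.\ on the event $\{F_R(\{\eta_i\}) = r\}$. On this event, Cor.~\ref{cor:solution_depends_on_regime_anc_only}b gives $X = F_X^{R=r}(\{\eta_i\}) = h_r\big(\{\eta_i \mid i\in\biasSource_r(X)\}\big)$ almost surely, where $\biasSource_r(X) = \gDescr\Anc\FixedR(X)$ and $h_r$ is $F_X^{R=r}$ read off those coordinates. By Def.~\ref{def:SepOfObs}b, $B$ being an $r$-barrier separating $X$ from $Y$ means $Y \independent \{\eta_i \mid i\in\biasSource_r(X)\} \mid B', R=r$ with $B' = B - \{R\}$. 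The same measurable-function argument as in (a), now with conditioning set $B'$ inside the law conditioned on $R=r$, yields $Y \independent X \mid B', R=r$.

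The only step requiring care is part (b): one must pass to the event $\{R=r\}$ and use $F_X^{R=r}$ rather than $F_X$ before invoking functional dependence, since the unconditioned $F_X$ generally depends on more noise-terms than $\{\eta_i \mid i\in\biasSource_r(X)\}$, and it is precisely the restriction to $\{R=r\}$ that licenses the use of $\gDescr{G}\FixedR$-ancestry. Beyond this bookkeeping I expect no real obstacle: the barrier definitions (Def.~\ref{def:barrier_noise}, Def.~\ref{def:SepOfObs}) and the solution-function corollary were arranged so that the conclusion follows immediately once the measurable-function stability of conditional independence is plugged in.
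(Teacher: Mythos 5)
Your proposal is correct and follows essentially the same route as the paper's proof: unwind Definitions \ref{def:barrier_noise} and \ref{def:SepOfObs} to get $Y \independent \eta_{\biasSource(X)} \mid B$ (resp.\ $Y \independent \eta_{\biasSource_r(X)} \mid B', R=r$), then use Cor.\,\ref{cor:solution_depends_on_regime_anc_only} to write $X$ as a measurable function of exactly those noise-terms (with the restriction to $F_R^{-1}(\{r\})$ in case (b)) and invoke stability of conditional independence under measurable functions. The paper applies this last fact implicitly, whereas you state and justify it explicitly, but the argument is the same.
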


\begin{proof}
	(a)
	By definition, a barrier $B$ between $X$ and $Y$ is
	a barrier separating $Y$ from noise of $\biasSource(X)=\Anc_{\txt{union}}(X)$.
	\Ie $Y \independent \eta_{\gUnion\Anc(X)} | B$, but by
	corollary\,\ref{cor:solution_depends_on_regime_anc_only}a,	
	$F_X$ depends only on noise-terms of ancestors of $X$ in $\gUnion{G}$,
	so that also $Y \independent F_X(\eta_{\gUnion\Anc(X)}) | B$,
	with $X = F_X(\eta_{\gUnion\Anc(X)})$ this proves claim (a).
	
	(b)
	By definition, a $r$-barrier $B$ between $X$ and $Y$ is
	an $r$-barrier separating
	$Y$ from the noise of $\biasSource_r(X)=\gDescr\Anc\FixedR(X)$.
	\Ie $Y \independent \eta_{\gDescr\Anc\FixedR(X)} | B', R=r$ (where $B' = B - \{R\}$),
	but by
	corollary\,\ref{cor:solution_depends_on_regime_anc_only}b,	
	$F_X^{R=r}$ depends only on noise-terms of ancestors of $X$ in $\gDescr{G}\FixedR$.
	By conditioning on $R=r$ we restrict ourselves to noise-terms in $F_R^{-1}(\{r\})$,
	thereby considering the restriction $F_X^{R=r} = F_X|_{F_R^{-1}(\{r\})}$
	suffices.
	Thus $Y \independent F^{R=r}_X(\eta_{\gDescr\Anc\FixedR(X)}) | B', R=r$.
	Again the claim follows by $X = F^{R=r}_X(\eta_{\gDescr\Anc\FixedR(X)})$
	(whenever defined, \ie whenever $R=r$).
\end{proof}

Now that we have a framework for replacing path-blocking arguments
in a way suitable to the problem at hand, we can return to the Markov-properties
of our systems.

\subsection{The Markov-Property}

As illustrated in the main text §\ref{sec:Markov} (see also example \ref{example:NonMarkov}),
we will have to exclude relations between ancestors (beyond the union-graph)
from our formal claims,
as they are not generally accessible (see also §\ref{apdx:CSI} however):
\theoremstyle{definition}
\newtheorem*{def_anc_anc_corr}{Definition \ref{def:anc_anc_corr}}
\begin{def_anc_anc_corr}
	Define the (identifiable) ancestor--ancestor correction
	$\gIdent{G}\FixedR$ as follows:
	Start with $\gIdent{G}\FixedR = \gDescr{G}\FixedR$, then add all edges
	in $\gUnion{G}$, between any two ancestors in $\gUnion{G}$ of $R$
	to $\gIdent{G}\FixedR$. 
\end{def_anc_anc_corr}

\begin{rmk}
	$\gUnion{G}$ and $\gTransf{G}\FixedR$ differ only by edges pointing
	into a (union-)child of $R$ (lemma \ref{lemma:PhysChangesAreRegimeChildren}),
	so "$G_{\txt{union}}$" in the definition above may
	be replaced by "$\gTransf{G}\FixedR$" as these always agree on edges between ancestors.
	In particular the "ancestor--ancestor" problem will never be an issue if
	we are interested in $\gTransf{G}\FixedR$ (see §\ref{sec:ConnectToJCI}).
\end{rmk}

Knowing, what separating barriers may look like (by the "local" Markov property
lemma \ref{lemma:LocalMarkov}), and how to use them to obtain
independence-relations on observables (def.\ \ref{def:SepOfObs},
lemma \ref{lemma:GeneralizedSep}), we finally obtain:

\theoremstyle{plain}
\newtheorem*{prop_markov}{Proposition \ref{prop:markov}}
\begin{prop_markov}
	Assume the model is strongly regime-acyclic
	and causally sufficient.
	If $X$ and $Y$ are non-adjacent in $\gIdent{G}\FixedR$ and
	both $X, Y \neq R$, then
	either
	\begin{enumerate}[label=(\alph*)]
		\item
		for $Z = \gUnion\Pa(X)$ or
		$Z = \gUnion\Pa(Y)$
		it holds $X \independent Y | Z$, or
		\item		
		for $Z = \gDescr\Pa\FixedR(X) - \{R\}$ or 
		$Z = \gDescr\Pa\FixedR(Y) - \{R\}$		
		it holds $X \independent Y | Z, R=r$.
	\end{enumerate}	
	Further, if either $X \notin \gUnion\Anc(R)$ or $Y \notin \gUnion\Anc(R)$,
	then (b) applies, otherwise (a) applies.
\end{prop_markov}
\begin{proof}
	Case 1 (both $X$ and $Y$ are (union-)ancestors of $R$):
	By strong regime-acyclically \asswlog $Y \notin \gUnion\Anc(X)$.
	In this case, by construction of $\gIdent{G}\FixedR$, $X$ and $Y$ are (non-)adjacent
	in $\gIdent{G}\FixedR$ if and only if they are (non-)adjacent in $\gUnion{G}$,
	thus $X \notin \gUnion\Pa(Y)$.
	By the local Markov-property lemma \ref{lemma:LocalMarkov}a -- which applies,
	because $Y$ is not part of any union-cycle by strong regime-acyclicity --
	$Z=\Pa_{\txt{union}}(Y)$ is a barrier separating $Y$ from the noise of
	$\gUnion\Anc(X)$. As noted above $X \notin \gUnion\Pa(Y) = Z$, so
	this is a barrier separating $X$ from $Y$.
	Therefore, by lemma \ref{lemma:GeneralizedSep}a, $X \independent Y | Z$ as claimed.
	
	Case 2 (\asswlog $Y \notin \gUnion\Anc(R)$):
	Note that we can further assume \asswlog $Y \notin \gDescr\Anc\FixedR(X)$,
	because, if we had $Y \in \gDescr\Anc\FixedR(X)$:
	\begin{enumerate}
		\item[Case 2a]
		($X \in \gUnion\Anc(R)$):
		Then if it were
		$Y \in \gDescr\Anc\FixedR(X) \subset \gUnion\Anc(X)$ (by lemma \ref{lemma:edge_inclusions}),	
		this would imply $Y \in \gUnion\Anc(R)$ in contradiction to the hypothesis of the case 2.
		\item[Case 2b] ($X \notin \gUnion\Anc(R)$), then by weak regime-acyclicity,
		$X \notin \gDescr\Anc\FixedR(Y)$ and we can swap	the roles of $X$ and $Y$ to satisfy
		the \asswlog assumption of the case
		and $Y \notin \gDescr\Anc\FixedR(X)$.
	\end{enumerate}	
	Thus by lemma \ref{lemma:LocalMarkov}b -- which applies, because
	by weak regime-acyclicity $Y$ is not part of any cycle in $\gDescr{G}\FixedR$
	and $Y \notin \gUnion\Anc(R)$ by hypothesis of the case --
	using $Z = \gDescr\Pa\FixedR(Y)$, we find
	$Z \cup \{R\}$ is a $r$-barrier separating $Y$ from the noise of $X$.
	Again $X \notin \gUnion\Pa(Y)$ and $X \neq R$, so $X \notin Z \cup \{R\}$,
	and this is a barrier separating $X$ from $Y$.
	By lemma \ref{lemma:GeneralizedSep}a, $X \independent Y | Z, R=r$ as claimed.
\end{proof}

\theoremstyle{definition}
\newtheorem*{rmk_edges_from_r}{Remark \ref{rmk:EdgesFromR}}
\begin{rmk_edges_from_r}
	If one of the variables is $R$
	then (for univariate $R$) no regime-specific tests are available
	and we have to fall back to the "standard" result (see \eg \cite{BongersCyclic}):	
	Assume the model is causally sufficient.
	If $R$ and $Y$ are non-adjacent in $\Acycl(\gUnion{G})$,
	then there is $Z = \gUnion\Pa(R)$ or $Z = \gUnion\Pa(Y)$
	with $R \independent Y | Z$.	
	If $Y$ is an ancestor of $R$ this does not change the result
	if the model is strongly regime acyclic.
	However, if $Y$ is part of a directed cycle involving at least one child of $R$,
	then the edge $R \rightarrow Y$ in $\Acycl(\gUnion{G})$ cannot be deleted
	from our independence-constraints, even if it is not in $\gUnion{G}$.
	By the above, together with prop.\ \ref{prop:markov}, 
	this is the only such issue that can occur.
\end{rmk_edges_from_r}

The restriction on where to search for $Z$ is relevant for causal discovery algorithms
in practice, and the following reformulation is helpful to that end:

\begin{cor}\label{cor:markov}
	Given a strongly regime-acyclic, causally sufficient model,
	$X$, $Y$ not adjacent in $\gIdent{G}\FixedR$ and both $X, Y \neq R$,
	then either
	\begin{enumerate}[label=(\alph*)]
		\item
		it exists $Z \subset \gIdent\Adj\FixedR(X)$ or
		$Z \subset \gIdent\Adj\FixedR(Y)$
		with $X \independent Y | Z$, or
		\item		
		it exists $Z \subset \gIdent\Adj\FixedR(X) - \{R\}$ or 
		$Z \subset \gIdent\Adj\FixedR(Y) - \{R\}$		
		with $X \independent Y | Z, R=r$.
	\end{enumerate}
\end{cor}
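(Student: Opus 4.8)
The plan is to read the corollary straight off Proposition~\ref{prop:markov} by checking that the conditioning sets that proposition produces already lie inside the $\gIdent{G}\FixedR$-adjacencies of $X$ or of $Y$. Concretely, I would apply Proposition~\ref{prop:markov} to the non-adjacent pair $X,Y\neq R$: it yields either (a) some $W\in\{X,Y\}$ with $X\independent Y \mid \gUnion\Pa(W)$, or (b) some $W\in\{X,Y\}$ with $X\independent Y \mid \gDescr\Pa\FixedR(W)-\{R\},\,R=r$ (and (b) is the relevant alternative unless both $X,Y\in\gUnion\Anc(R)$, in which case (a) holds). So it suffices to establish the two edge-set inclusions $\gUnion\Pa(W)\subset\gIdent\Adj\FixedR(W)$ in case (a) and $\gDescr\Pa\FixedR(W)-\{R\}\subset\gIdent\Adj\FixedR(W)-\{R\}$ in case (b), and then take $Z$ to be the respective left-hand side.

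Case (b) I expect to be immediate: by Definition~\ref{def:anc_anc_corr} (see also Lemma~\ref{lemma:no_phys_anc_anc}) the graph $\gIdent{G}\FixedR$ is obtained from $\gDescr{G}\FixedR$ only by \emph{adding} edges, so $\gDescr{G}\FixedR\subset\gIdent{G}\FixedR$; hence every $p\in\gDescr\Pa\FixedR(W)$ is adjacent to $W$ in $\gIdent{G}\FixedR$, i.e.\ $\gDescr\Pa\FixedR(W)\subset\gIdent\Adj\FixedR(W)$, and deleting $R$ from both sides preserves the inclusion.

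Case (a) is the one place where the ancestor--ancestor correction is genuinely needed. Here both endpoints lie in $\gUnion\Anc(R)$, in particular $W\in\gUnion\Anc(R)$. For $p\in\gUnion\Pa(W)$, the edge $p\to W$ of $\gUnion{G}$ concatenated with a directed $\gUnion{G}$-path from $W$ to $R$ shows $p\in\gUnion\Anc(R)$; thus $p\to W$ is an edge of $\gUnion{G}$ between two union-ancestors of $R$, and Definition~\ref{def:anc_anc_corr} puts exactly such edges into $\gIdent{G}\FixedR$, giving $p\in\gIdent\Adj\FixedR(W)$. (Strong regime-acyclicity additionally rules out a directed cycle through the ancestor $W$ of $R$, so $R\notin\gUnion\Pa(W)$, though case~(a) of the corollary tolerates $R\in Z$ anyway, so this is a side remark.) This yields $\gUnion\Pa(W)\subset\gIdent\Adj\FixedR(W)$ and closes the argument.

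I do not anticipate a real obstacle: the whole content is edge-set bookkeeping on top of Proposition~\ref{prop:markov}. The single step that needs a moment's care is the observation in case~(a) that a union-parent of a union-ancestor of $R$ is again a union-ancestor of $R$ — precisely the property that makes Definition~\ref{def:anc_anc_corr} include that edge in $\gIdent{G}\FixedR$; everything else is forced by $\gDescr{G}\FixedR\subset\gIdent{G}\FixedR$.
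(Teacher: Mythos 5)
Your proposal is correct and follows essentially the same route as the paper's own proof: case (b) is handled by the inclusion $\gDescr{G}\FixedR \subset \gIdent{G}\FixedR$, and the remaining case (both endpoints in $\gUnion\Anc(R)$) by observing that union-parents of union-ancestors of $R$ are again union-ancestors of $R$, so the ancestor--ancestor correction of Definition~\ref{def:anc_anc_corr} places those edges in $\gIdent{G}\FixedR$. The only cosmetic difference is that the paper asserts the equality $\gUnion\Pa(W) = \gIdent\Pa\FixedR(W)$ in that case, whereas you only use the inclusion into $\gIdent\Adj\FixedR(W)$, which is all the statement requires.
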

\begin{proof}
	We have to show that the conditioning sets in proposition\,\ref{prop:markov}
	are in the adjacencies of $\gIdent{G}\FixedR$.	
	If (b) applies, then either
	$Z \subset \gDescr\Pa\FixedR(X) \subset \gIdent\Pa\FixedR(X) \subset \gIdent\Adj\FixedR(X)$
	or $Z \subset \gDescr\Pa\FixedR(Y) \subset \gIdent\Pa\FixedR(Y) \subset \gIdent\Adj\FixedR(Y)$
	and there is nothing to show.
	If either $X \notin \gUnion\Anc(R)$ or $Y \notin \gUnion\Anc(R)$,
	then (b) applies.
	So the only remaining case is where both $X$ and $Y$ are in $\gUnion\Anc(R)$.
	In this case, since parents of ancestors of $R$ are again ancestors of $R$,
	and edges between nodes in $\gUnion\Anc(R)$ are in $\gUnion{G}$
	if and only if they are in $\gIdent{G}\FixedR$ we have (from part (a))
	$Z \subset \gUnion\Pa(X) = \gIdent\Pa\FixedR(X)\subset \gIdent\Adj\FixedR(X)$
	or $Z \subset \gUnion\Pa(Y) = \gIdent\Pa\FixedR(Y) \subset \gIdent\Adj\FixedR(Y)$.
\end{proof}
\begin{rmk}\label{rmk:CD}
	There is still a subtle difficulty left:
	Generally, there is no reason why a model -- even if it is faithful to $\gDescr{G}\FixedR$ --
	would be faithful to $\gIdent{G}\FixedR$.
	We cannot guarantee links as in example \ref{example:NonMarkov}
	will be deleted, but they \emph{might} be nevertheless (see also §\ref{apdx:CSI}).
	So generally by causal discovery using the proposition,
	one finds a graph $\gDetect{G}\FixedR$, with
	$\gDescr{G}\FixedR \subset \gDetect{G}\FixedR \subset \gIdent{G}\FixedR$,
	but for rule (a) one has to test all conditioning-sets contained
	in the parents in $\gIdent{G}\FixedR$.
	
	An "easy" fix would be to first discover the (acyclification of) the union graph
	with standard methods, and restrict the search for separating-sets 
	by $\gIdent{G}\FixedR \subset \gUnion{G} \subset \Acycl(\gUnion{G})$
	to $\Acycl(\gUnion{G})$. This will do more tests than actually required however.
	
	In practice it might be preferable to either:
	\begin{enumerate}[label=(\roman*)]
		\item
		Learn $\Acycl(\gUnion{G})$, then
		$\gMask{G}\FixedR$ by masking on $R=r$
		(only using rule (b), avoiding the problem discussed above)
		and then consider "intersection graphs"
		$(\gDetect{G})'\FixedR := \Acycl(\gUnion{G}) \cap \gMask{G}\FixedR$,
		which in the end also deletes all edges that can be deleted either by
		(a) or by (b).
		\item
		Find suitable assumptions that allow for more efficient
		(requiring fewer test, on the pooled data when consistent)
		algorithms \citep{MethodPaper}.
	\end{enumerate}
	While the first option sounds simpler and theoretically elegant,
	the issue of state-access induced vanishing of links between
	ancestors of $R$ precluding required tests (by searching the
	wrong adjacencies) in the indicated way
	seems a bit esoteric for most potential applications,
	with stability on finite data being a major concern for causal discovery,
	the second option certainly mandates closer investigation.
\end{rmk}

\subsection{Detectable Graph}\label{apdx:CSI}

As briefly discussed in §\ref{sec:Faithfulness},
there is a gap between links that always can be removed $\gIdent{G}\FixedR$ (prop.\ \ref{prop:markov})
and those that never will be removed $\gDescr{G}\FixedR$ (by faithfulness, ass.\ \ref{ass:faithful}).
In part, this gap is genuine -- counterexamples exist (example \ref{example:NonMarkov})
-- but in cases where $X$ and $Y$ are ancestors of $R$, but not both direct parents of $R$,
there might be more generally applicable result than prop.\ \ref{prop:markov}:
From a path-separation perspective, a path containing $R$ as collider being opened by conditioning on $R$
could still be blocked "elsewhere" along  the path.

We do not know, if single graph encoding all independence constraints while also being consistent
for conclusions drawn via paths exists.
However, there is a practical approach via directly encoding
independence structure (slightly different from LDAGs, see §\ref{apdx:LDAGs})
with the connection to SCMs given by prop.\ \ref{prop:markov} and assumption \ref{ass:faithful}
encoded in lemma \ref{lemma:g_detect}:
\begin{definition}\label{def:g_detect}
	Define the "detectable" (independence-)graph $\gDetect{G}\FixedR$ as
	the "causally minimal" representation \citep[§6.5.3]{Elements}:
	There is an edge between $X$ and $Y$ if there is no $Z$ with
	$X, Y \notin Z$ for which at least one of the independence $X \independent Y | Z$ or
	(if $X,Y \neq R$) the CSI $X \independent Y | Z, R=r$ holds.
	Orient edges not involving $R$ as in $\gUnion{G}$ (this is well-defined, by lemma \ref{lemma:g_detect} and
	lemma \ref{lemma:no_phys_anc_anc} showing $\gDetect{\bar{G}}\FixedR \subset \gUnion{\bar{G}}$)
	and edges out of $R$ not in $\gUnion{G}$, see rmk.\ \ref{rmk:EdgesFromR}, are oriented out of $R$,
	all other edges involving $R$ are also oriented as in $\gUnion{G}$.
\end{definition}
\begin{lemma}\label{lemma:g_detect}:
	Connection of $\gDetect{G}\FixedR$ to SCM:\\
	\begin{equation*}
		\gDescr{\bar{G}}\FixedR \subset \gDetect{\bar{G}}\FixedR \subset \gIdent{\bar{G}}\FixedR
	\end{equation*}
	For edges involving $R$, $\gDetect{G}\FixedR$ contains at least the edges in $\gUnion{G}$,
	but  may additionally contain edges in $\Acycl(\gUnion{G})$ out of $R$.
\end{lemma}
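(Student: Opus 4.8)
The statement splits naturally into two parts: the skeleton away from $R$ (the two inclusions $\gDescr{\bar{G}}\FixedR \subset \gDetect{\bar{G}}\FixedR \subset \gIdent{\bar{G}}\FixedR$), and the edges incident to $R$. The plan for the first part is to read off the lower inclusion from the contrapositive of Assumption \ref{ass:faithful} and the upper inclusion from Proposition \ref{prop:markov}. For the second part the key point is that Definition \ref{def:g_descr} copies every $R$-edge of $\gUnion{G}$ into $\gDescr{G}\FixedR$ for each $r$, so the lower bound again follows from Assumption \ref{ass:faithful}, while the upper bound follows from the ``standard'' Markov statement recalled in Remark \ref{rmk:EdgesFromR}. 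Since all the graphs share a node set, for the skeleton comparison we may assume $X, Y \neq R$; write $\gDetect{\bar{G}}\FixedR$ for $\gDetect{G}\FixedR$ with edges incident to $R$ deleted.

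\textbf{Lower inclusion of the skeleton.} Suppose $X$ and $Y$ ($\neq R$) are adjacent in $\gDescr{G}\FixedR$. The contrapositive of Assumption \ref{ass:faithful} says that no set $Z$ disjoint from $\{X, Y\}$ has $X \independent Y | Z$, and — since $X, Y \neq R$ — none has $X \independent Y | Z, R=r$ either. By Definition \ref{def:g_detect} this is exactly the criterion for $X$ and $Y$ to be adjacent in $\gDetect{G}\FixedR$, hence in $\gDetect{\bar{G}}\FixedR$.

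\textbf{Upper inclusion of the skeleton.} Suppose $X, Y \neq R$ are non-adjacent in $\gIdent{\bar{G}}\FixedR$, equivalently in $\gIdent{G}\FixedR$. Proposition \ref{prop:markov} supplies a separating set: either (a) $Z \in \{\gUnion\Pa(X), \gUnion\Pa(Y)\}$ with $X \independent Y | Z$, or (b) $Z \in \{\gDescr\Pa\FixedR(X) - \{R\},\, \gDescr\Pa\FixedR(Y) - \{R\}\}$ with $X \independent Y | Z, R=r$. For Definition \ref{def:g_detect} I must verify $X, Y \notin Z$. In case (b) the node $R$ has been removed, self-parents do not occur, and $Y \in \gDescr\Pa\FixedR(X)$ (or $X \in \gDescr\Pa\FixedR(Y)$) would place the edge in $\gDescr{G}\FixedR \subset \gIdent{G}\FixedR$, contradicting non-adjacency. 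In case (a) — which by Proposition \ref{prop:markov} arises only when both $X, Y \in \gUnion\Anc(R)$ — every $\gUnion{G}$-edge between two ancestors of $R$ lies in $\gIdent{G}\FixedR$ (Definition \ref{def:anc_anc_corr}), so again $Y \in \gUnion\Pa(X)$ (or symmetrically) would contradict non-adjacency; in fact the proof of Proposition \ref{prop:markov} already selects $Z$ with $X, Y \notin Z$. Hence Definition \ref{def:g_detect} makes $X$ and $Y$ non-adjacent in $\gDetect{G}\FixedR$, \ie in $\gDetect{\bar{G}}\FixedR$.

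\textbf{Edges incident to $R$, and the obstacle.} If $X = R$ and $Y$ are adjacent in $\gUnion{G}$, Definition \ref{def:g_descr} makes them adjacent in $\gDescr{G}\FixedR$ for every $r$; applying the contrapositive of Assumption \ref{ass:faithful} with one endpoint equal to $R$ — where only plain independence tests enter — shows no $Z$ disjoint from $\{R, Y\}$ has $R \independent Y | Z$, so $R$ and $Y$ are adjacent in $\gDetect{G}\FixedR$ by Definition \ref{def:g_detect}; this is the ``$\gDetect{G}\FixedR$ contains at least the edges in $\gUnion{G}$'' claim. Conversely, if $R$ and $Y$ are adjacent in $\gDetect{G}\FixedR$, then no $Z$ gives $R \independent Y | Z$, and the standard Markov property recalled in Remark \ref{rmk:EdgesFromR} forces $R$ and $Y$ adjacent in $\Acycl(\gUnion{G})$; beyond $\gUnion{G}$ the only edges this contributes are of the form $R \to Y$ with $Y$ on a directed cycle through a child of $R$, which acyclification orients out of $R$ — exactly ``edges in $\Acycl(\gUnion{G})$ out of $R$''. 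The main obstacle throughout is bookkeeping rather than substance: keeping the ``$X, Y \notin Z$'' admissibility of separating sets consistent across Proposition \ref{prop:markov}, Assumption \ref{ass:faithful}, and Definition \ref{def:g_detect}, and — for the $R$-edges — pinning down precisely which acyclification edges survive and that they point away from $R$, which is the content of Remark \ref{rmk:EdgesFromR} rather than anything new.
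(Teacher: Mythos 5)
Your proposal is correct and follows the same route as the paper's (very terse) proof: the lower inclusion from the contrapositive of Assumption \ref{ass:faithful}, the upper inclusion from Proposition \ref{prop:markov}, and the $R$-edge statement from Remark \ref{rmk:EdgesFromR} together with strong regime-acyclicity ruling out extra acyclification edges into $R$. The extra bookkeeping you supply (checking $X,Y\notin Z$ for the separating sets) is a detail the paper leaves implicit, and you handle it correctly.
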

\begin{proof}
	The first inclusion is by ass. \ref{ass:faithful},
	the second one by prop.\ \ref{prop:markov}.
	The last statement follows from rmk. \ref{rmk:EdgesFromR}.
	By strong regime-acyclicity, there are no additionally edges in $\Acycl(\gUnion{G})$ into $R$.
\end{proof}
This provides a tight enough connection between CSI-structure and SCMs for
the arguments in §\ref{sec:ConnectToJCI}.
In practice the results in §\ref{sec:ConnectToJCI}
work for 
\begin{equation*}
	\gDescr{\bar{G}}\FixedR \subset \gDetect{\bar{G}}\FixedR \subset \gPhys{\bar{G}}\FixedR
\end{equation*}
which has the advantage of physical changes being restricted
to regime-children (lemma \ref{lemma:PhysChangesAreRegimeChildren}),
which reduces the search-space for CSI-testing and
allows for more efficient methods \cite{MethodPaper}.

\subsection{Counter-Example to General Case}

The following example illustrates the problem of links between ancestors,
vanishing by observational access, becoming invisible due to selection bias.
See start of §\ref{sec:ConnectToCSI}.

\begin{example}\label{example:NonMarkov}
	"Selection-bias between ancestors can lead to violations of the Markov-property":
	
	Let $X, Y \in U := \{ a_0, a_1, b_0, b_1 \}$ categorical variables.
	Let $X = \eta_X$ (with $P(\eta_X) > 0$),
	fix $A := \{ a_0, a_1 \} \subset U$, and $B := \{ b_0, b_1 \} \subset U$
	and the "letter" $l$ and "index" $i$ indicators on $U$ as follows
	\begin{align*}
		l: U \rightarrow \{a,b\},
		&\begin{cases}
			a_0, a_1 \mapsto a \\
			b_0, b_1 \mapsto b
		\end{cases}\\
		i: U \rightarrow \{0,1\},
		&\begin{cases}
			a_0, b_0 \mapsto 0 \\
			a_1, b_1 \mapsto 1
		\end{cases}
	\end{align*}
	Then, define
	for $\eta_Y \in \{ 0, 1 \}$ (with $P(\eta_Y) > 0$):
	\begin{equation*}
		Y = f_Y(X,\eta_Y) :=
		\begin{cases}
			a_{i(X) \xor \eta_Y} \text{ if } X \in A \\
			b_{\eta_Y} \text{ if } X \in B
		\end{cases}
	\end{equation*}	
	(On binary variables, the natural choice of
	binary operators are those of boolean algebra, \ie of the field $\mathbb{Z}/2\mathbb{Z}$,
	so that "$\xor = +$" and "$\boolAnd = *$". If the reader feels confused by the $\xor$ notation,
	they may think "$+$" (formally mod 2) instead.)
	
	Note that $l(X) = l(Y)$, and $Y$ clearly depends on $X$ in general.
	However, for $X \in B$, $Y$ does \emph{not} further depend on the value within $B$ taken by $X$, \ie $f_Y|_B$ is independent of $X$.
	
	Finally, the "regime-indicator" $R \in U$
	for $\eta_R \in \{ 0, 1 \}$ (with $P(\eta_R) > 0$ and
	$P(\eta_R=1) = p \neq \onehalf$):
	\begin{equation*}
		R =		l(X)_{\eta_R \xor i(X) \xor i(Y)}
	\end{equation*}
	
	This construction has the following interesting properties:
	$l(R) = l(X)$, hence $l(R) = b \Leftrightarrow l(X) = b$,
	therefore $\supp P(X|R=b_0) = B$.
	But $f_Y|_B$ is independent of $X$ (see above), so $X \not\in\Pa_{G_{R={b_0}}}(Y)$.
	
	However, due to selection bias, this non-adjacency is never detectable:
	Given $R=b_0$, we know $l(X) = l(R) = b$. Thus also $l(Y) = b$.
	Further, knowing $0 = i(R) = \eta_R \xor i(X) \xor i(Y)$,
	we can use information about $X$ to infer the following.
	If $i(X) = 0$, then the equation above becomes
	$0 = i(R) = \eta_R \xor i(Y) \xor 0$ with $P(\eta_R=1) = p$,
	thus $P(i(Y)=1 | R=b_0, X=b_0) = 1 - p$.
	On the other hand if $i(X) = 1$, then the equation above becomes
	$0 = i(R) = \eta_R \xor i(Y) \xor 1$ with $P(\eta_R=1) = p$,
	thus $P(i(Y)=1 | R=b_0, X=b_1) = p$.
	
	If it were $X \independent Y | R=b_0$,
	then $P(i(Y) | R=b_0, X) = P(i(Y) | R=b_0)$ would hold,
	thus also $p = P(i(Y)=1 | R=b_0, X=b_1) = P(i(Y)=1 | R=b_0)
	= P(i(Y)=1 | R=b_0, X=b_0) = 1-p$.
	But we assumed $p \neq \onehalf$.´
	So $X \notindependent Y | R=b_0$ must hold,
	and we will \emph{always}
	fail to delete this link from conditional independences alone.
\end{example}

\section{Joint Causal Inference and Transfer}\label{sec:ConnectToJCI}

The previous section explained, how (most of) the information of $\gDescr{G}\FixedR[M]$ can
be recovered from (testable) independence-constraints (prop.\ \ref{prop:markov} and ass.\ \ref{ass:faithful}),
leading to a graph (see §\ref{apdx:CSI}) $\gDetect{G}\FixedR$
with $\gDescr{G}\FixedR \subset \gDetect{G}\FixedR \subset \gIdent{G}\FixedR$.
Here we study $\gTransf{G}\FixedR[M]$ and $\gUnion{G}[M]$.
We do not know,
if $\gTransf{G}\FixedR[M]$ is fully identifiable in general,
or if the set of rules we provide is complete.
It demonstrates however that these graphs contain empirically testable information (see also example \ref{example:p1}
and discussion thereafter).
We refer to these rules (§\ref{sub:JCIlike}) as "JCI-like" as they resemble
\citep{SelectionVars,JCI}. (Proofs are in §\ref{apdx:ConnJCI}.)

\subsection{Inferring the Union-Graph}\label{sec:union_graph}

Recall from remark \ref{rmk:EdgesFromR},
that edges from $R$ into directed union-cycles containing a child of $R$ cannot be
deleted by our independences. We will hence mostly focus on edges elsewhere in the graph
("away from $R$"),
using the "barred" notation ($\gDescr{\bar{G}}\FixedR$ etc.).
Generally, a causal model is only Markov to the acyclification (see \eg \citep{BongersCyclic})
of its visible ("standard") graph $\Acycl(\gVisible{G}[M])$
while, for strongly regime-acyclic models we here have:

\begin{lemma}\label{lemma:UnionIdentifiable}
	Let $M$ be a strongly $R$-regime-acyclic, strongly
	$R$-faithful, causally sufficient model, then
	\begin{equation*}
		\gVisible{\bar{G}}[M] = \gUnion{\bar{G}}[M]
		= \cup_r \gDetect{\bar{G}}\FixedR[M]
	\end{equation*}
	is identifiable away from $R$ by ($R$-context-specific) independences.
\end{lemma}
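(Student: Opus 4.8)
The statement packages three equalities together with an identifiability claim, and essentially all the substantive work is already available in the excerpt (Proposition~\ref{prop:markov}, Lemma~\ref{lemma:g_detect}, Lemma~\ref{lemma:no_phys_anc_anc}, Lemma~\ref{lemma:union_is_union}(iii)); the plan is to assemble these and carefully track the ``barring'' operation (deletion of all edges incident to $R$). The leftmost equality $\gVisible{\bar{G}}[M] = \gUnion{\bar{G}}[M]$ is immediate, since by the convention of Lemma~\ref{lemma:union_is_union} one \emph{defines} $\gUnion{G}[M] := \gVisible{G}[M]$, and deleting the $R$-incident edges on both sides preserves the equality.

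The core step is the identity $\gUnion{\bar{G}}[M] = \bigcup_r \gDetect{\bar{G}}\FixedR[M]$ (the union taken over the finitely many $r$ with $P(R=r)>0$), which I would obtain from a sandwich of inclusions. On one side, strong $R$-faithfulness lets us invoke Lemma~\ref{lemma:union_is_union}(iii), $\gUnion{G}[M] = \bigcup_r \gDescr{G}\FixedR[M]$; deleting $R$-incident edges gives $\gUnion{\bar{G}}[M] = \bigcup_r \gDescr{\bar{G}}\FixedR[M]$. On the other side, for each $r$, Lemma~\ref{lemma:g_detect} gives $\gDescr{\bar{G}}\FixedR \subset \gDetect{\bar{G}}\FixedR \subset \gIdent{\bar{G}}\FixedR$ (the left inclusion via Assumption~\ref{ass:faithful}, which holds here because strong $R$-faithfulness entails $R$-adjacency-faithfulness; the right inclusion via Proposition~\ref{prop:markov}, applicable since $M$ is strongly regime-acyclic and causally sufficient), and Lemma~\ref{lemma:no_phys_anc_anc} gives $\gIdent{G}\FixedR \subset \gUnion{G}$, hence $\gIdent{\bar{G}}\FixedR \subset \gUnion{\bar{G}}$. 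Taking unions over $r$ (which is monotone, and commutes with barring) yields
\begin{equation*}
	\gUnion{\bar{G}}[M] \;=\; \bigcup_r \gDescr{\bar{G}}\FixedR[M] \;\subset\; \bigcup_r \gDetect{\bar{G}}\FixedR[M] \;\subset\; \bigcup_r \gIdent{\bar{G}}\FixedR[M] \;\subset\; \gUnion{\bar{G}}[M],
\end{equation*}
so the two ends coincide and every inclusion is an equality; in particular $\bigcup_r \gDetect{\bar{G}}\FixedR[M] = \gUnion{\bar{G}}[M]$, which combined with the first equality closes the chain.

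For the identifiability clause I would appeal directly to Definition~\ref{def:g_detect}: the adjacencies of $\gDetect{G}\FixedR$ are, by construction, precisely the pairs $X,Y \neq R$ for which \emph{no} set $Z$ with $X,Y \notin Z$ witnesses either $X \independent Y \mid Z$ or the CSI $X \independent Y \mid Z, R=r$. Hence each skeleton $\gDetect{\bar{G}}\FixedR[M]$, and therefore their union, is a function of the family of ($R$-context-specific) conditional independences holding in $P_M$ alone, i.e.\ it is identifiable by exactly those (C)SI tests; by the equalities above the same object is $\gUnion{\bar{G}}[M] = \gVisible{\bar{G}}[M]$, which is the assertion. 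Consistently with the paper's focus on skeleta, this settles the adjacency structure away from $R$; since the non-$R$ edges of $\gDetect{G}\FixedR$ are oriented uniformly (as in $\gUnion{G}$) the union is a well-defined partially directed graph, and those orientations are recovered by the usual constraint-based rules on the identified skeleton (cf.\ the ``learn $\Acycl(\gUnion{G})$ with standard methods'' route in Remark~\ref{rmk:CD}), up to the customary Markov-equivalence ambiguity.

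\textbf{Main obstacle.} There is no deep difficulty once the cited results are in hand; the only point that genuinely needs care is the bookkeeping around barring---checking that deletion of $R$-incident edges is monotone and commutes with unions for all of $\gDescr{}, \gDetect{}, \gIdent{}, \gUnion{}$---and, if one insists on reading the middle equality as equality of directed (not merely skeletal) graphs, fixing the orientation convention so that $\bigcup_r \gDetect{\bar{G}}\FixedR$ is even well-defined. I expect to argue the equality at the level of skeleta (as the paper's stated scope suggests) and treat orientations as a standard add-on, so that the proof reduces to the inclusion sandwich above.
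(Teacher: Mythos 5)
Your proposal is correct and follows essentially the same route as the paper's own proof: the identity $\gUnion{\bar{G}} = \cup_r \gDescr{\bar{G}}\FixedR$ from Lemma~\ref{lemma:union_is_union}(iii) under strong $R$-faithfulness, sandwiched against $\gDescr{\bar{G}}\FixedR \subset \gDetect{\bar{G}}\FixedR \subset \gIdent{\bar{G}}\FixedR \subset \gUnion{\bar{G}}$ via Lemma~\ref{lemma:g_detect} and Lemma~\ref{lemma:no_phys_anc_anc}. Your additional remarks on barring commuting with unions and on the identifiability clause via Definition~\ref{def:g_detect} are sound elaborations of points the paper leaves implicit.
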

For edges out of $R$ no context-specific tests are available,
so (see Rmk.\ \ref{rmk:EdgesFromR}):
$\gVisible{G}[M] = \gUnion{G}[M] \subset \gUnionId{G}[M] := \cup_r \gDetect{G}\FixedR[M]$,
where the difference $\gUnionId{G}[M] - \gUnion{G}[M]$ consists of	edges from $R$ to nodes in union-cycles only.

\subsection{Interring the Physical Graph by	JCI-like Rules}\label{sub:JCIlike}

Similarly, there are properties of $\gPhys{G}\FixedR$
that can be identified from data. We already know
$\gDetect{\bar{G}}\FixedR[M]\subset\gTransf{\bar{G}}\FixedR[M]\subset\gUnion{\bar{G}}[M]$
by lemma \ref{lemma:no_phys_anc_anc},
where the left-hand-side is, by construction §\ref{apdx:CSI},
identifiable (under our assumptions)
from data via prop.\ \ref{prop:markov} and lemma \ref{ass:faithful},
and the right-hand-side is identifiable by lemma \ref{lemma:UnionIdentifiable} above.
So it will suffice, for understanding $\gTransf{\bar{G}}\FixedR[M]$,
to study edges in $\gUnionId{\bar{G}}[M] - \gIdent{\bar{G}}\FixedR[M]$
and decide if those should be in $\gTransf{\bar{G}}\FixedR[M]$ or not.
As already noted in lemma \ref{lemma:PhysChangesAreRegimeChildren},
physical changes occur only in regime-children:

\begin{lemma}\label{lemma:R1}
	If $R \notin \gUnion{\Anc}(Y)$, then
	$\gTransf\Pa_{R=r}(Y) = \gUnion\Pa(Y)$,
	\ie the change is non-physical (by observational non-accessibility).
\end{lemma}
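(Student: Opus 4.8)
The plan is to read this off from Lemma~\ref{lemma:PhysChangesAreRegimeChildren} (``physical changes are in regime-children''), since $\gTransf{G}$ and $\gPhys{G}$ denote the same graph. That lemma gives $\gPhys\Pa_{R=r}(Y)=\gUnion\Pa(Y)$ whenever $Y\neq R$ and $R\notin\gUnion\Pa(Y)$. As parents are ancestors, the hypothesis $R\notin\gUnion\Anc(Y)$ of the present statement implies $R\notin\gUnion\Pa(Y)$, and, since a node counts among its own ancestors here (cf.\ Cor.~\ref{cor:solution_depends_on_regime_anc_only}, which lets $F_X$ depend on $\eta_X$), it also forces $Y\neq R$; hence Lemma~\ref{lemma:PhysChangesAreRegimeChildren} applies verbatim and yields $\gTransf\Pa_{R=r}(Y)=\gUnion\Pa(Y)$. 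The concluding clause is then interpretation rather than a separate formal claim: by Lemma~\ref{lemma:edge_inclusions} one always has $\gDescr\Pa\FixedR(Y)\subseteq\gTransf\Pa\FixedR(Y)=\gUnion\Pa(Y)$, so whatever edge the descriptive graph drops at $Y$ relative to the union graph is not witnessed by any change of $f_Y$ on the pooled support and can only stem from the narrower observational support in context $R=r$ --- \ie the change is \emph{non-physical}.

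Since Lemma~\ref{lemma:PhysChangesAreRegimeChildren} carries the real content, I would establish it (and thus re-derive the present statement) as follows. The inclusion $\gTransf\Pa\FixedR(Y)\subseteq\gUnion\Pa(Y)$ is already Lemma~\ref{lemma:edge_inclusions}, so only the reverse inclusion is at issue: passing from $\mathcal{F}$ to $\mathcal{F}_{\PearlDo(R=r)}$ --- which replaces only $f_R$ --- must not delete any union-parent of $Y$. The key input is Cor.~\ref{cor:solution_depends_on_regime_anc_only}(a): since $R\notin\gUnion\Anc(Y)$ and ancestry is transitive, no $W\in\{Y\}\cup\gUnion\Anc(Y)$ has $R$ among its union-ancestors, so the solution functions $F_W$ for such $W$ do not involve $\eta_R$ and are therefore unaffected by $\PearlDo(R=r)$; in particular the joint law of these variables, hence the pooled supports $\supp(P_M(\cdot))$ entering the observable-graph construction at $Y$, is unchanged, and $f_Y$ itself is unchanged. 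Using $R\notin\gUnion\Pa(Y)$ to note that the restriction of $f_Y$ to the pooled support of its parents was already constant in $R$, fixing $R$ to $r$ does not destroy the non-constancy of $f_Y$ in any of its remaining parents. Hence the observable-graph construction produces the same parent set for $Y$ from $\mathcal{F}$ and from $\mathcal{F}_{\PearlDo(R=r)}$.

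The step I expect to be the main obstacle is this last one: making precise that ``fixing $R$ to $r$'' cannot silently delete a parent $X$ of $Y$ in the edge case where $R$ is a literal argument of $f_Y$ whose effect is invisible on the pooled support. One needs that the non-constancy of $f_Y$ in $X$, witnessed somewhere on the pooled support, remains witnessed on the slice of that support compatible with $R=r$; this is exactly where the non-determinism/minimality built into $\gUnion{G}$ (Def.~\ref{def:strong_ff}, or the minimality condition underlying the visible/union graph) is needed, since otherwise one can engineer deterministic configurations in which a parent survives only for a negligible set of $R$-values. Everything else is definition-chasing, and the ancestor form stated here then follows as the immediate corollary of Lemma~\ref{lemma:PhysChangesAreRegimeChildren} described above.
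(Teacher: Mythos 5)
Your proof is correct and takes essentially the same route as the paper, whose entire proof of this lemma is the one-liner that it ``follows directly from Lemma~\ref{lemma:PhysChangesAreRegimeChildren}'' --- exactly your first paragraph (noting $R\notin\gUnion\Anc(Y)\Rightarrow R\notin\gUnion\Pa(Y)$ and $Y\neq R$). Your subsequent re-derivation of Lemma~\ref{lemma:PhysChangesAreRegimeChildren}, in particular the flagged subtlety that substituting $R=r$ into $f_Y$ must not delete a parent whose non-constancy is witnessed only at support points with $R\neq r$, is actually more explicit than the paper's own terse proof of that lemma, which simply asserts that $f_Y$ is untouched by $\PearlDo(R=r)$ whenever $R\notin\gUnion\Pa(Y)$ and thereby implicitly relies on the minimality convention (mechanism parent-sets already pruned to $\gUnion\Pa$) that you correctly identify as the load-bearing assumption.
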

\begin{cor}\label{cor:R1}
	If $R \notin \gUnionId{\Anc}(Y)$, then
	$\gTransf\Pa_{R=r}(Y) = \gUnionId\Pa(Y)$.
\end{cor}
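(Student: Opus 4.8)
The plan is to reduce the statement to Lemma~\ref{lemma:R1} using the description of $\gUnionId{G}[M]$ obtained in §\ref{sec:union_graph}. Working under the standing assumptions of this section (strongly $R$-regime-acyclic, strongly $R$-faithful, causally sufficient), Lemma~\ref{lemma:UnionIdentifiable} and the remark following it give $\gUnion{\bar G}[M] = \gUnionId{\bar G}[M]$ away from $R$, together with $\gUnion{G}[M] \subset \gUnionId{G}[M]$ as edge-sets on the same nodes, the only edges of $\gUnionId{G}[M]$ not already in $\gUnion{G}[M]$ being edges out of $R$ into nodes that lie on directed union-cycles (this traces back to Remark~\ref{rmk:EdgesFromR}).

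First I would observe that, since $\gUnion{G}[M] \subset \gUnionId{G}[M]$, every directed path in $\gUnion{G}[M]$ is also one in $\gUnionId{G}[M]$, so directed ancestral sets only grow: $\gUnion\Anc(Y) \subset \gUnionId\Anc(Y)$. Hence the hypothesis $R \notin \gUnionId\Anc(Y)$ implies in particular $R \notin \gUnion\Anc(Y)$, and Lemma~\ref{lemma:R1} applies to give $\gTransf\Pa_{R=r}(Y) = \gUnion\Pa(Y)$.

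Second, I would show $\gUnion\Pa(Y) = \gUnionId\Pa(Y)$. The inclusion ``$\subset$'' is immediate from $\gUnion{G}[M] \subset \gUnionId{G}[M]$. For ``$\supset$'', let $X \to Y$ be an edge of $\gUnionId{G}[M]$; if it is also an edge of $\gUnion{G}[M]$ we are done, and otherwise, by the description of $\gUnionId{G}[M] - \gUnion{G}[M]$ recalled above, this edge must be $R \to Y$ -- but then $R \in \gUnionId\Anc(Y)$, contradicting the hypothesis. So every parent of $Y$ in $\gUnionId{G}[M]$ is already a parent of $Y$ in $\gUnion{G}[M]$, which gives the reverse inclusion. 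Combining the two equalities yields $\gTransf\Pa_{R=r}(Y) = \gUnionId\Pa(Y)$, as claimed.

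I do not expect a genuine obstacle here: the corollary is essentially bookkeeping on top of Lemma~\ref{lemma:R1}. The one point deserving care is verifying that the ``extra'' edges of $\gUnionId{G}[M]$ relative to $\gUnion{G}[M]$ really are only edges emanating from $R$, so that they can never enlarge the parent-set of a node $Y$ with $R \notin \gUnionId\Anc(Y)$; this is exactly the content of the remark after Lemma~\ref{lemma:UnionIdentifiable}.
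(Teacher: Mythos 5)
Your proof is correct and follows essentially the same route as the paper's (one-line) proof: reduce to Lemma~\ref{lemma:R1} via the containment $\gUnion{G}[M]\subset\gUnionId{G}[M]$ and the fact, from Remark~\ref{rmk:EdgesFromR}, that the extra edges of $\gUnionId{G}[M]$ all emanate from $R$, hence cannot add parents to a node $Y$ with $R\notin\gUnionId\Anc(Y)$. You merely spell out the bookkeeping that the paper leaves implicit.
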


If (conditioning on) $R$ does not change the distribution of
ancestors, no state-induced effects occur:

\begin{lemma}\label{lemma:R2}
	Assuming strong regime-acyclicity.
	If $X \in \gUnion\Pa( Y ) - \gIdent\Pa\FixedR( Y )$
	and $R \in \gUnion\Pa( Y )$,
	and	$\gUnion\Anc(R) \cap \gUnion\Anc(\gUnion\Pa(Y)-\{R\}) = \emptyset$,
	then
	$X \notin \gTransf\Pa( Y )$ (\ie the change is "physical" not
	just by state).
\end{lemma}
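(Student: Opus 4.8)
The plan is to unfold the three graph memberships in the hypotheses into elementary statements about the mechanism $f_Y$ and about supports of marginals of $P_M$, and then to exploit the fact that the \emph{only} difference between the support that enters the definition of $\gDescr{G}\FixedR$ and the one that enters $\gPhys{G}\FixedR$ is the extra conditioning on $R=r$. The ancestor-disjointness hypothesis is precisely what makes that extra conditioning harmless.

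First I would record the easy consequences of the hypotheses. Since $R\in\gUnion\Pa(Y)$, the edge $R\to Y$ lies in $\gVisible{G}[M]$, hence in $\gDescr{G}\FixedR$ and in $\gIdent{G}\FixedR$; so $R\in\gIdent\Pa\FixedR(Y)$ while $X\notin\gIdent\Pa\FixedR(Y)$, giving $X\neq R$ (and $Y\neq R$, as a self-loop $R\to R$ is excluded by acyclicity, so $\PearlDo(R=r)$ does not touch $f_Y$). By strong regime-acyclicity $Y$, being a $\gUnion{G}$-child of $R$, cannot also be a $\gUnion{G}$-ancestor of $R$; hence the edge $X\to Y$ is not one of the edges added in passing from $\gDescr{G}\FixedR$ to $\gIdent{G}\FixedR$, so in fact $X\notin\gDescr\Pa\FixedR(Y)$ too. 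Write $\Pi := \gUnion\Pa(Y)-\{R\}$; then $X\in\Pi$. Unfolding Definition \ref{def:g_descr} (with $\PearlDo(R=r)$ freezing the $R$-argument of $f_Y$ to $r$), the statement $X\notin\gDescr\Pa\FixedR(Y)$ says exactly that $f_Y(\,\cdot\,,R=r)$, viewed as a function of the non-$R$ parents, is constant in its $X$-slot on $\supp\big(P_M(\Pi\,|\,R=r)\big)$ in the sense of §\ref{apdx:NonConst}; and by Lemma \ref{lemma:edge_inclusions} ($\gPhys{\bar G}\FixedR\subset\gUnion{\bar G}$) it suffices to prove the same constancy on $\supp\big(P_M(\Pi)\big)$ to conclude $X\notin\gPhys\Pa\FixedR(Y)$.

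The heart of the argument is then to show $\supp\big(P_M(\Pi\,|\,R=r)\big)=\supp\big(P_M(\Pi)\big)$, for which (using $P(R=r)>0$) it suffices that $R\independent\Pi$ under $P_M$. This I would obtain from the solution-function picture: by Cor.\,\ref{cor:solution_depends_on_regime_anc_only}(a), $R=F_R(\eta_{\gUnion\Anc(R)})$ depends only on noise terms of $\gUnion\Anc(R)$, while $\Pi=F_\Pi(\eta_{\gUnion\Anc(\Pi)})$ depends only on those of $\gUnion\Anc(\Pi)$; by hypothesis these two index sets are disjoint, and by causal sufficiency the family $(\eta_i)_i$ is jointly independent, so $R$ and $\Pi$ are functions of independent noise blocks, hence independent. (Strong regime-acyclicity supplies the weak regime-acyclicity and solvability that Cor.\,\ref{cor:solution_depends_on_regime_anc_only} needs.) Equality of the two supports then transports the constancy of $f_Y(\,\cdot\,,R=r)$ in its $X$-slot from $\supp P_M(\Pi\,|\,R=r)$ to $\supp P_M(\Pi)$, which is precisely $X\notin\gPhys\Pa\FixedR(Y)$, i.e.\ $X\notin\gTransf\Pa(Y)$.

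The step I expect to be most delicate is not this independence, but the careful matching of support sets across the definitions of $\gUnion{G}$, $\gDescr{G}\FixedR$ and $\gPhys{G}\FixedR$: keeping track of the passage between raw and minimal parent sets of $f_Y$, the convention that $\PearlDo(R=r)$ removes $R$ from the parent list and hard-wires that argument, and the exact meaning of ``non-constant'' from §\ref{apdx:NonConst} (the raw-but-non-minimal parents of $Y$ occupy slots on which $f_Y$ is already constant over $\supp P_M$, so replacing ``raw parents minus $R$'' by $\Pi$ is harmless, but this should be stated explicitly). Once $R\independent\Pi$ is in hand, everything else is bookkeeping.
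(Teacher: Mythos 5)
Your proposal is correct and follows essentially the same route as the paper's proof: the paper likewise uses Cor.~\ref{cor:solution_depends_on_regime_anc_only}(a) together with causal sufficiency and the ancestor-disjointness hypothesis to conclude $P(\gUnion\Pa(Y)-\{R\}\mid R)=P(\gUnion\Pa(Y)-\{R\})$ (phrased there via Lemma~\ref{lemma:split_P_eta} rather than directly via functions of disjoint independent noise blocks), and then infers that the conditional and unconditional supports of the non-$R$ parents coincide, so the descriptive non-parenthood transfers to the physical graph. Your explicit bookkeeping showing that $X\notin\gIdent\Pa\FixedR(Y)$ already gives $X\notin\gDescr\Pa\FixedR(Y)$ (since $Y$, as a union-child of $R$, is not a union-ancestor of $R$ by strong regime-acyclicity) is a detail the paper leaves implicit.
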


\begin{cor}\label{cor:R2}
	Assuming strong regime-acyclicity.
	If $R\neq X \in \gUnionId\Pa( Y ) - \gIdent\Pa\FixedR( Y )$
	and $R \in \gUnionId\Pa( Y )$,
	and	$\gUnionId\Anc(R) \cap \gUnionId\Anc(\gUnionId\Pa(Y)-\{R\}) = \emptyset$,
	then
	\begin{enumerate}[label=(\alph*)]
		\item
		there is a link into the strongly connected component of $Y$
		that vanishes in $\gTransf{G}$, but not in $\gUnionId{G}$,
		\ie there is a physical change.
		\item
		if $Y$ is not part of a directed union-cycle, then
		$X \notin \gTransf\Pa( Y )$,
		\ie there is a physical change of this particular link.
	\end{enumerate}
\end{cor}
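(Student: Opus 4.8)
The plan is to derive Corollary \ref{cor:R2} from Lemma \ref{lemma:R2} by rewriting the ``robust'' hypotheses, which are stated in terms of $\gUnionId{G}$ and its ancestor/parent operators, in terms of the ``true'' objects $\gUnion{G}$, $\gUnion\Anc$, $\gUnion\Pa$. By Lemma \ref{lemma:UnionIdentifiable} and the remark following it, $\gUnionId{G}[M]$ is obtained from $\gUnion{G}[M]$ by adding \emph{only} edges of the form $R\to W$ with $W$ on a directed union-cycle through a child of $R$; every such edge points out of $R$, and by strong regime-acyclicity $R$ lies on no directed cycle. First I would record two consequences. (i) Adding edges out of $R$ creates no new ancestors of $R$, and whenever $R\to W$ is newly added we already have $R\in\gUnion\Anc(W)$ (follow $R\to C\to\cdots\to W$ inside the relevant strongly connected component); hence $\gUnionId\Anc(Z)=\gUnion\Anc(Z)$ for every set $Z$. (ii) The only possible new parent of any node is $R$, so $\gUnionId\Pa(Y)\subseteq\gUnion\Pa(Y)\cup\{R\}$, whence $\gUnionId\Pa(Y)-\{R\}=\gUnion\Pa(Y)-\{R\}$. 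Combining (i) and (ii), the disjointness hypothesis of the corollary is literally that of Lemma \ref{lemma:R2}, and since $X\neq R$ the hypothesis $X\in\gUnionId\Pa(Y)-\gIdent\Pa\FixedR(Y)$ yields $X\in\gUnion\Pa(Y)-\gIdent\Pa\FixedR(Y)$.

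The one genuine gap is that Lemma \ref{lemma:R2} requires $R\in\gUnion\Pa(Y)$, whereas the corollary only supplies $R\in\gUnionId\Pa(Y)$. The key point is that the disjointness hypothesis already forbids the bad case: suppose $R\notin\gUnion\Pa(Y)$ but $R\in\gUnionId\Pa(Y)$. Then, by Remark \ref{rmk:EdgesFromR}, $Y$ lies on a directed union-cycle through a child $C$ of $R$ with $Y\neq C$. Its predecessor $D$ on that cycle satisfies $D\in\gUnion\Pa(Y)$ and $D\neq R$ (as $R$ is on no cycle), and $R\to C\to\cdots\to D$ gives $R\in\gUnion\Anc(D)\subseteq\gUnion\Anc(\gUnion\Pa(Y)-\{R\})$; together with $R\in\gUnion\Anc(R)$ this contradicts the disjointness hypothesis. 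Hence $R\in\gUnion\Pa(Y)$, so all hypotheses of Lemma \ref{lemma:R2} hold and we obtain $X\notin\gTransf\Pa(Y)$.

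Finally I would translate this conclusion into the two stated forms. Since $X\in\gUnion\Pa(Y)\subseteq\gUnionId\Pa(Y)$ while $X\notin\gTransf\Pa(Y)$, the edge $X\to Y$ is present in $\gUnionId{G}$ and absent in $\gTransf{G}$; this edge is incident to the strongly connected component of $Y$ (if $X$ were inside it, one runs the same $R\in\gUnion\Anc(X)$ argument, and then $X\in\gUnion\Anc(\gUnion\Pa(Y)-\{R\})$ again contradicts disjointness, so $X$ lies outside and $X\to Y$ points strictly into that component). This gives (a). If moreover $Y$ is not part of a directed union-cycle, the component of $Y$ is trivial, ``$X\to Y$'' is an unambiguous edge rather than one of several collapsed inside an SCC by acyclification, and $X\notin\gTransf\Pa(Y)$ reads directly as the sharper statement (b).

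I expect the cycle bookkeeping to be the main obstacle: one must be careful that the edges separating $\gUnionId{G}$ from $\gUnion{G}$ really are confined to ``$R\to$ (component through a child of $R$)'' (Lemma \ref{lemma:UnionIdentifiable}, Remark \ref{rmk:EdgesFromR}); that the disjointness hypothesis genuinely excludes $Y$ — and $X$ — sitting on such a component (the argument above via the cycle-predecessor $D$, which also tacitly uses reflexivity of $\gUnion\Anc$; a union-root $R$ would need the marginal check that $\gUnion\Anc(R)=\emptyset$ still leaves Lemma \ref{lemma:R2} applicable); and that passing from ``$X\notin\gTransf\Pa(Y)$'' to ``a link into the strongly connected component of $Y$ vanishes'' is legitimate when $\gTransf{G}$ is only recovered up to acyclification. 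The remaining work — the ancestor- and parent-set manipulations in the reduction — is routine.
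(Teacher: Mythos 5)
Your proposal is correct and follows essentially the same route as the paper: translate the hypotheses from $\gUnionId{}$-quantities to $\gUnion{}$-quantities and invoke Lemma~\ref{lemma:R2}, then read off (a) and (b) from the SCC of $Y$. In fact you are more careful than the paper on the one step that is not routine: the paper disposes of $R \in \gUnionId\Pa(Y) \Rightarrow R \in \gUnion\Pa(Y)$ with the justification ``since $\gUnion{G}\subset\gUnionId{G}$'', which as written is the wrong direction of the inclusion; the genuine issue is that $\gUnionId{G}-\gUnion{G}$ may contain an edge $R\to Y$ when $Y$ sits on a union-cycle through a child of $R$, and your argument via the cycle-predecessor $D$ shows that the disjointness hypothesis excludes exactly this case. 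Your extra check that $X$ lies outside the strongly connected component of $Y$ (so that the vanishing edge genuinely points \emph{into} that component) is likewise a detail the paper's proof states without argument. The only loose end you already flag yourself --- the reflexivity convention for $\Anc$, respectively the degenerate case where $R$ is a union-root --- is harmless, since in that case the hypotheses of Lemma~\ref{lemma:R2} are checked directly.
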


\subsection{Validity of Transfer}\label{sec:ConnectToTransfer}

JCI-arguments (§\ref{sec:ConnectToJCI}) can exclude the possibility of physical changes,
but they can only provide direct evidence in rare cases (lemma \ref{lemma:R2}).
But variable can depend quantitatively on $R$:
\begin{example}\label{example:qualitative_R_dependence}
	If $Y = g(X) + \gamma R + \eta_Y$, with $\gamma \neq 0$, and
	if $g|_{\supp P(X|R=r_0)}$ is constant, $X \in \gPhys\Pa_{R=r_0}(Y)$,
	even though  $X \notin \gDescr\Pa_{R=r_0}(Y)$
	and $R \in \gUnion\Pa(Y)$.	
\end{example}
We sketch a statistical test (see also §\ref{apdx:TransferForPhys}),
that approaches this problem in analogy to the philosophy of
constraint-based causal discovery (CD): For CD, the idea is that in an Occam's razor sense,
a link should be considered relevant to the causal model,
if there is evidence for the link to be present, \ie if independence can be rejected
(see discussion of point (a) after example \ref{example:intro}).
For the multi-context case, from the perspective that causal mechanisms are supposed to be robust,
a reasonable null-hypothesis is, to assume that $g$ (in example \ref{example:qualitative_R_dependence}) remains unchanged in the context $R=r_0$.
So a link should be removed relative to the union-model if there is evidence for its vanishing
(see discussion of point (b) after example \ref{example:intro}).

In the example above, $g$ is identifiable (in $\gUnion{G}$),
so we can learn $g$ from data. Now, if we can show that the independence-test
we used for CD (of $\gDescr{G}_{R=r_0}$, see Rmk.\ \ref{rmk:CD}),
would have (likely) rejected the independence $X \independent Y |R=r_0$
given the observational distributions (\eg bootstrapping from the observational distributions)
if $g$ had remained valid in $R=r_0$,
then we have evidence for $g$ vanishing in $R=r_0$.
This formally is captured by the difference of $\gUnion{G}$ and $\gPhys{G}_{R=r_0}$
in the sense of Rmk.\ \ref{rmk:replace_support_by_finitesample}.

\section{Practical Examples / Potential Use-Cases}\label{apdx:practical_examples}

Here we illustrate the potential usefulness of our ideas. To this end,
the ideas from the main text are applied to an (extended version) of the example \ref{example:intro}
in the introduction and a slightly more complex example afterwards.
Finally we give a brief application to explainable AI questions,
which is less practical, but seems interesting to include, as it
demonstrates that these questions arise in very different settings.

\subsection{Example from the Introduction}\label{apdx:practical_example_intro}

We start with example \ref{example:intro}. 
Further, recall from the introduction what was labeled as point (c) there:
Under purely descriptive changes as in example \ref{example:intro},
links into nodes which are not children of $R$ can change; for physical
changes this cannot happen.

\begin{example}[Example \ref{example:intro} continued]
	The original example in the introduction was as follows:\\
	\begin{minipage}{0.4\textwidth}
		\begin{tikzpicture}[domain=-2:2]
			\draw[->] (-2,0) -- (2,0) node[right] {$T$};
			\draw[->] (0,-0.2) -- (0,2);
			
			\draw[color=red]    plot (\x,{0.5*(\x + sqrt(\x*\x))}) node[anchor=north west] {$f_Y(T)$};
			\draw[color=blue]   plot (\x,{0.5*(\x - sqrt(\x*\x))*sin(90*\x)});
			\draw[color=blue]	(-2, 1.2)node[anchor=south west] {$p(T|R=0)$};
			\draw[color=orange]   plot (\x,{0.5 * (1-(\x-0.3)*(\x-0.3) + sqrt((1-(\x-0.3)*(\x-0.3))*(1-(\x-0.3)*(\x-0.3))))});
			\draw[color=orange]	(1.3,0.6) node[anchor=west] {$p(T|R=1)$};
		\end{tikzpicture}
	\end{minipage}
	\begin{minipage}{0.6\textwidth}
		Consider the following model with dependencies
		$R \rightarrow T \rightarrow Y$,
		think e.\,g.\ of $R=0$ as indicating samples taken in winter,
		$R=1$ samples taken in summer, $T$ is the temperature and
		$Y$ depends on temperature but only if $T>0$°C (above freezing).
	\end{minipage}
	
	We modify this as follows:
	Assume we measure a mediator $M$ between seasonal information $R$ and
	temperature $T$ so that the ground-truth causal (union) model
	looks like $R \rightarrow M \rightarrow T \rightarrow Y$.
	
	The independence-structure one finds corresponds to the following
	(descriptive) graphs (this graph is "partial",
	no orientations are found without further information):
	If $R=1$ the descriptive graph is $R - M - T - Y$,
	if $R=0$ the descriptive graph is $R - M - T \quad Y$, lacking
	a link between $T$ and $Y$.
	By Lemma \ref{lemma:UnionIdentifiable} the (partial) union graph is $R - M - T - Y$.
	Thus, by Lemma \ref{lemma:no_phys_anc_anc}
	the (partial) physical graph for $R=1$ is also $R - M - T - Y$.
	Interestingly by Corollary \ref{cor:R1} (and using that non-adjacent implies non-parent)
	the (partial) physical graph for $R=0$ is again $R - M - T - Y$.
	
	\emph{Practical conclusion:}
	While for $R=0$ the variable $Y$ becomes independent of $T$, we have no
	evidence for the underlying physical mechanism between $T$ and $Y$ changing with $R$.
	Further, the vanishing of this link can "witness"
	but not be responsible for the "differentness" of the data in regime $R=0$.
	The vanishing of the link is a consequence not a "cause" of the altered behavior.
	There must be an additional change in ancestors of $T$ driving the behavior.
	
	\emph{Note:} The results of §\ref{sec:ConnectToCSI} and §\ref{sec:ConnectToJCI} have lead us to this conclusion from
	independence properties of the data, without knowledge the model.
\end{example}

This argument requires only standard techniques (independence-testing,
see also Rmk.\ \ref{rmk:CD}), so it can be readily made actionable via standard implementations.

\subsection{More Complex Example}\label{apdx:practical_example_abstract}

The example above ends up being simple, because the conclusions can be drawn from
graphical structure already. This also means that the physical change is ruled out by
model-properties ("globally") and cannot depend on the context. Given that we do not have
to know the model, only the data, to draw this conclusion makes it still quite non-trivial.
A more complicated example
can provide further insight:

\begin{example}
	A solar-powered weather-station, one day, detects anomalous power supply. Comparing the data of the last hours to normal data, we notice that the commanded orientation of the solar-panels normally -- but no longer -- affects available power. We go out and fix the pointing mechanism. Another time, same problem, same result from data, but the anomalous data is from during the night.

	\emph{Questions:}
	Why do we draw a different conclusion? How can the distinction be formally captured and automatically detected?
	What does it mean to 'cause' a distribution shift?
	
	We approach this as follows:	
	Treat error-states as context, e.\,g.\ $E=0$ all normal data,
	$E=1$ anomaly 1, $E=2$ anomaly 2.
	Assume there are "many" instances (data-sets or day-night cycles)
	providing normal data.
	For both anomalies the descriptive graph changes in the same way
	(the power-generation becomes independent of commanded panel-orientation).
	But by the transfer-ideas outlined in §\ref{sec:ConnectToTransfer} one can
	learn that during the one anomaly happening at night
	the conditional distribution of
	light-intensity is trivially "singular" near $0$, so
	we could not have seen this link either way.
	Thus, it is \emph{not} removed in the physical graph
	(there is no evidence for doing so).
	For the anomaly happening at daylight, by the same argument,
	the link would be visible if present,
	so there \emph{is} evidence for its absence, and it is
	removed in the physical graph.
	
	\emph{Practical conclusion:}
	For the daylight anomaly, the physical mechanism must have changed,
	so assuming there was no co-occurrence of multiple changes to the system
	we should focus on the physical alignment-mechanism.
	For the nighttime anomaly, there is no evidence for this physical
	change. Further, to explain the "differentness" from the normal
	data, some other change must have happened.
	
	\emph{Answers to the questions:}
	\begin{itemize}
		\item
		Why do we draw a different conclusion?
		-- We draw a different conclusion, because once we have evidence for
		a particular change in the physical system (for the daylight anomaly,
		alignment-mechanism must have physically changed),
		while in the other case, there is no evidence for this specific kind of problem
		(which does of course not exclude it, but other potential issues are just as likely).
		\item 
		How can the distinction be formally captured and automatically detected?
		-- The distinction is formally captured by the particular change
		(for the daylight anomaly) corresponding to a link-removal in the physical graph
		relative to the union-graph. For the nighttime anomaly, no such physical change
		is detectable from data.
		This analysis can be automated, whenever the physical graph can be identified from data.
		Some criteria on when this is possible can be found in §\ref{sec:ConnectToTransfer}.
		\item 		 
		What does it mean to 'cause' a distribution shift?
		-- If the underlying physical model changed in a way that is relevant to
		the data-generation, then this change is part of the reason why the context
		appears differently; if we further exclude the occurrence of multiple changes at once,
		it is the (singular) cause of the distribution shift.
		If the description changed (the link in the descriptive graph vanishes),
		but no physical change of the mechanism is required to explain the observations
		(the link in the physical graph does \emph{not} vanish), than this particular change
		did \emph{not cause} the distribution shift; it still narrows down the location of the root
		cause to ancestors of the parent-node (in the vanishing link).
		This is to some degree a reverse formulation of JCI (where the context causes
		the variable-change), but applied to links (it is not exactly 1:1 to JCI-results,
		cf.\ §\ref{sec:ConnectToTransfer})
	\end{itemize}
\end{example}

\subsection{Connections to Explainable AI}\label{apdx:practical_example_xai}

The following is an abstract but quite interesting observation.
We start from the general idea that there seems to be some interest in a more
causal understanding of deep learning.
A (dense) neural network can be seen as an SCM: The nodes of one layer
are functions of (all) the nodes in the previous layer. So all the
nodes of the previous layer with non-zero (or sufficiently far from zero) weights are parents.
Typically on (dense) networks this leads to a -- not particularly insightful -- graph,
with an edge from each node in one layer to each node in the next layer.
However, restricting inputs to some context
(e.\,g.\ cat pictures\footnote{Let's ignore for now that one would probably use a convolutional network then.}),
only some "neurons" are active, and detections can be "explained" (in an XAI sense)
from parts of the input (in fact one often wants to understand \emph{which} parts).

This seems contradictive: If the graph is fully connected (between consecutive layers),
why is it also sparse (for explanation)?
From the present perspective, the descriptive graph (under the distribution of cat pictures)
is sparse! Many links vanish because, for this subset of possible inputs, the activation-functions
are (essentially) constant.

This is not necessary helping in doing XAI yet, but we think it provides an interesting
new perspective on the apparent contradiction encountered when trying to apply causal
logic to NNs. Also, the fact that this example is very far from the initial intuition
that lead us to define our graphical objects (mostly applications from earth-system science)
may be seen as some indication that the objects we defined do capture a rather general and
relevant abstract property.

Finally, if the context is the output of an anomaly-detection algorithm,
the interpretations above (§\ref{apdx:practical_example_intro}, §\ref{apdx:practical_example_abstract})
may be seen as an explanation of that output. The output might be triggered by a descriptive change
so this provides information beyond "heatmapping-like" analysis.

\section{Conclusion}\label{sec:conclusion}

The assumption of positivity, $P>0$, is quite common and very useful.
However, it is not popular for its realism
-- finite data never gives empirical evidence outside a bounded support, even more
so in light of Rmk.\ \ref{rmk:replace_support_by_finitesample} --
but because it dramatically simplifies the problem, by neglecting
"purely formal" details that supposedly would not actually affect the conclusions
we draw.
Generally, this is certainly often true, but as we point out, there are
a range of difficulties, where our \emph{qualitative} understanding
relies on the the understanding of available observational support.
We formally capture such qualitative properties through our
descriptive and physical graphs -- this is illustrated in the example in §\ref{apdx:practical_example_abstract},
where once a physical and once a descriptive change occurs.
Further, as we demonstrate, in multi-context systems, these qualitative
properties become accessible, at least in part, from observations.
Finally, we hope that the deeper understanding of the connection
between the structure of context-specific independencies and SCMs
via the graph objects of \citep{MethodPaper} provide
may help to better connect both worlds.

\paragraph{Future-Work:}
We focused on iid-data here, but time-series data seems like an interesting,
even though potentially quite complicated, extension.
For time-series, for example with persistent (slowly changing) regimes,
the observable support of the stationary distribution should play an important
and interesting role. What sounds very technical, captures some intuitive effects:
As an example, consider a crossroads, where in one context
(state of traffic-lights) the traffic flows
in one direction, in the other context in the orthogonal direction. Now if
states normally only accessible (by the stationary distribution)
in one context (traffic in direction A) at a regime-boundary "lag" into the other
context (traffic in direction B), then new phenomena arise.
Stationary distributions are an example of distributions that arise
from a model in a quite non-trivial way; generally observational
support in real-world applications may often be of such sophisticated nature.
In principle observable graphs $G[\mathcal{F},Q]$ can account for these cases,
yet there is certainly much left to be understood.

A more immediate generalization would be in (transfer of) orientations.
One can of course use standard orientation-rules per-context, or JCI-rules
on the union, but really one would want to combine information from both
where possible.
Clearly one would also want to understand effect estimation problems from
the perspective of these graph objects. Especially optimality results
taking into account transferability properties seem like a challenging
but relevant topic.

\paragraph{Limitations:}
Our approach captures only qualitative (graphical) changes. Our faithfulness treatment seem somewhat ad hoc.
There are fundamental limitations to independence based discovery of the
discussed graph objects (see counterexample \ref{example:NonMarkov}).
The Markov-property we give may still not be reaching these
fundamental limits (\ie it might be possible to give a slightly stronger
formulation, cf.\ §\ref{apdx:CSI}).
We currently do not consider models with hidden confounders,
we formally only allow for a single context indicator,
and we assume IID
data; all three of these are unlikely to be fundamental and it should be possible to tackle them in future work.
We do not analyize finite sample properties, error-propagation or quality measures for results like \citep{faller2024self,janzing2023reinterpreting}.

\section*{Acknowledgments and Disclosure of Funding}

The authors want to thank the other members of the causal inference lab, in particular Oana Popescu and Urmi Ninad for many helpful and inspiring discussions.

J.\,R.\ has received funding from the European Research Council (ERC) Starting Grant CausalEarth under the European Union’s Horizon 2020 research and innovation program (Grant Agreement No. 948112).
J.\,R.\ and M.\,R.\ have received funding from the European Union’s Horizon 2020 research and innovation programme under grant agreement No 101003469 (XAIDA).
W.\,G.\ was supported by the Helmholtz AI project CausalFlood (grant no. ZT-I-PF-5-11). 
This work was supported by the German Federal Ministry of Education and Research (BMBF, SCADS22B) and the Saxon State Ministry for Science, Culture and Tourism (SMWK) by funding the competence center for Big Data and AI "ScaDS.AI Dresden/Leipzig".

\bibliography{BibTex.bib}

\appendix

\tableofcontents

\section{Details on Related Literature}\label{apdx:ConnectionsToLit}

The topic presented here has connections to many fields, so we give a more structured overview below.
Also the connection to CSI and independence models \citep{stratified_graphs,LDAG_definition,LDAG_logical}
seems interesting (§\ref{apdx:CSI}), but since we expect most potential readers to come from the causal community,
a detailed treatment in the main-text seems ill-placed. Similarly, further details on the connection to
counter-factuals are in §\ref{apdx:CF}.

\subsection{Structured Overview}

\paragraph{Combining Datasets} from different contexts
in causal inference has been studied
\eg by \citep{bareinboim2012transportability, SelectionVars, CD-NOD, JCI}.
The focus there is usually on gaining orientation-information or
statistical power on finite data, \ie gaining additional information about
the union-model. The main technical ingredient
is in adding the context-information (\eg an index) to the pooled dataset
as a "context-variable" and to then study the resulting system.
We adopt this convention and call this (categorical) variable $R$ ("regime").
\citep{bareinboim2012transportability,SelectionVars} in particular
discuss transportability between contexts, but concerning
identifiability (structure of hidden confounders),
not available observational support.
For example \citep{Saeed2020} also explicitly studies graphical models for
mixtures, we will for example connect our results to their union-graph.
Their focus is in defining graphs for the combined dataset,
we focus on different graphs for a \emph{single} context.
The reason why there is not a unique (empirically accessible)
graph is that we "enrich" this single context
by our understanding of the other contexts.
So our study also is inherently multi-context, but does not focus on
the union-model (see \ref{sec:union_graph} however).

\paragraph{Context Specific Independence (CSI) and Graphical Models}
have been combined from the perspective of encoding information about (the factorization of)
a probability distribution, \eg as stratified graphs \citep{stratified_graphs},
or labeled directed acyclic graphs (LDAGs) \citep{LDAG_definition, LDAG_logical}.
The main distinction here is that we are interested primarily in \emph{causal}
properties and to this end study connections to SCMs,
thereby \eg to interventional properties.
We also establish how one of the graphical models we define relates to CSI.
The information our objects encode is subtly different from that encoded in LDAGs,
or their induced "context-specific LDAGs" \citep[Def.\ 8]{LDAG_definition}.
Both encode CSI properties however, and it should be possible via our results
to leverage results about LDAGs for causal inference, and vice versa (for example
the construction of counter-examples like \ref{example:NonMarkov} seems
much more accessible from an SCM perspective). See also §\ref{apdx:LDAGs}.

\paragraph{Cyclic Models and Solution Functions}
have gained increased attention recently.
There are other approaches to study cyclic union-models
\eg \citep{hyttinen2012learning,strobl2023causal} -- cyclic union-models are a possible use-case for
context-specific graphs, but not the core content of this paper.
The type of cyclicity we allow in our models is extremely simple compared to the general
treatment \citep{BongersCyclic}, even though they are not simple in the sense defined there.
Simple SCMs \citep{BongersCyclic} are cyclic models defined such that their solution-properties
(and simple-ness) is stable under interventions and marginalization.
It seems to be the case that the ensuing problems (in particular unsolvable intervened models),
occur, if we intervene to a system-state outside of the observational support,
so in our "support-aware" philosophy, we should capture the problem "beforehand":
We recognize the intervention as involving a transfer-problem and are thus
warned that it may not have a unique or clear solution without further information.
We do not study this connection in detail here, but we use
solution-functions in §\ref{sec:Markov}.

\paragraph{Interventions and Counterfactuals}
For interventions, \eg by single-step adjustment
\citep{Shpitser2011,Perkovic2018}, a lack of support often
becomes evident by a lack of training-data, and is comparatively easy to detect
and simple to deal with (require expert-knowledge for extrapolation, there is not much
to be done from data alone).
For multi-step procedures (like the ID-Algo \citep{IDAlgo,IDAlgoMultivar})
and especially counterfactual quantities
(like natural direct effects \citep{Shpitser2011}) the situation becomes much more complicated.
Here the question about which graphical properties even \emph{could} be learned
from data have been discussed,
see \eg \citep[§5.1]{robins2010alternative},
even though the
systematic connection to observational support does not seem
to have been studied yet.
See also §\ref{apdx:CF}.

\paragraph{Minimality and Faithfulness}
are also strongly intertwined with how to pick "the correct" graphical models.
The most direct approach is by a minimality-condition
on parent-sets \citep[Def.\ 2.6]{BongersCyclic}
(even though there is a faithfulness assumption about non-determinism
implicit for minimal parent-sets to be well-define\Slash{}unique).
For skeleton-discovery, we are primarily interested in
adjacency-faithfulness \citep{ramsey2012adjacency},
but \eg \citep[§6.5.3]{Elements}
also formalize a "causally minimal" condition which is 
faithfulness in the sense of
independences only occur where they are guaranteed
by the Markov-property, which turns out to
be quite non-trivial here (§\ref{sec:Faithfulness}).
The context-specific absence of edges itself can be
understood as a violation of faithfulness to the union-graph
(as noted \eg by \cite[§4.3.7]{JCI}.

\paragraph{Missing Data} in causal modeling in the literature
usually concerns either latent variables
\citep{spirtes2001causation,Zhang2006},
or more abstractly missing data for certain interventions
\citep{triantafillou2014constraint,tillman2009structure}
typically for the combination of datasets (see above)
or robustness of causal models \citep{RojasCarulla2018, AnchorRegression}.
The lack of overlap of observations and non-constant
mechanism domain seems so far unexplored --
certainly people are and have been aware of this issue,
but the formal and systematic approach given here
seem to be new (see also §\ref{sec:applications}).

\subsection{Relation to Method-Paper}\label{apdx:MethodPaper}

The problem of an ambiguity in the definition of per-context graphs and its connection
to observational access was encountered in \citep{MethodPaper} during the development of
a constraint-based causal discovery method for this setting.
There, the focus is on giving meaningful assumptions, under which this problem does 
not occur (\ie assumptions under which $\gDetect{G}\FixedR = \gPhys{G}\FixedR = \gCF{G}\FixedR$),
and when efficient (using few tests) causal discovery is possible.
For the scenario with $\gDetect{G}\FixedR = \gPhys{G}\FixedR = \gCF{G}\FixedR$ a
Markov-property is shown with (modified) standard tools (path-blocking),
plus some tricks involving counter-factuals (see also the footnote in §\ref{sec:Markov}).
The main distinction here is that we focus on the usefulness, and identification from data,
of the \emph{difference between physical and descriptive changes}.
This also means that a Markov-property that holds only under the assumption
of $\gDetect{G}\FixedR = \gPhys{G}\FixedR = \gCF{G}\FixedR$ is insufficient.
The more general case shown here, requires a completely different approach (§\ref{sec:Markov}, §\ref{apdx:Markov}).
The subsequent study of union and \emph{physical} graph, is relative to
a suitable proxy $\gDetect{G}\FixedR$ of $\gDescr{G}\FixedR$ (see §\ref{apdx:CSI}),
for which, in light of prop.\ \ref{sec:Markov} as presented here (see also \citep[Rmk.\ 4.2 on Thm.\ 1]{MethodPaper}),
efficient causal discovery algorithms as in \citep{MethodPaper} are suitable.
Generally, \citep{MethodPaper} evolves around developing assumptions for a method (and a method),
that is both efficient and interpretable in terms of SCMs despite the difficulties that arise from these observations.
The present paper is about studying the emerging structure: How do the different per-context
graphs relate to each other and to the union-graph, which intuition do they capture, and how can they
-- in particular also the physical graph and their differences --
be identified?

\subsection{Connection to Independence Structures}\label{apdx:LDAGs}

We briefly recall the concept of labeled acyclic directed graphs LDAGs \citep{LDAG_definition}.
The underlying system is considered to consist of categorical variables only.
Traditionally, the graphical representation of the independence-structure
represents dependencies with links, independencies with missing links,
in a sparse sense, \ie if $X \independent Y | Z$ the link from $X$ to $Y$ is also removed.
The LDAG then labels these edges with a "stratum" \citep{stratified_graphs} by the following
idea (for simplicity we pretend we knew orientations): If $X \rightarrow Y$ then
test for each combination of values of (other)
parents $Z = \Pa(Y) - \{X\}$ of $Y$ if $X \independent Y | Z=z$, in this case add $Z=z$ as a label to the edge.
In practice, some PC-like search-procedure can be used \citep{LDAG_learning}.

This, in our language, essentially treats every variable as a regime-indicator,
thus also contains the information of any specific choice of regime-indicator
(called "context-specific LDAG" in \citep[Def.\ 8]{LDAG_definition}).
The full LDAG thus contains more information than only that of a context-specific LDAG
corresponding to one choice of regime-indicator.
The price for this additional generality is the restriction of the setup to categorical variables
only, and for discovery from finite data, in cases where one is interested in a specific
context-specific LDAG, the detour through the full LDAG is likely not sample-efficient.
We think, it is also not to be underestimated that LDAGs are hard to read, compared to
the simpler (because less information-dense) context-specific ones.

Generally, the information encoded in a context-specific LDAG is very similar to our graphical models,
there are some things to note however: The context in LDAGs is local -- only strata of parents (adjacencies) 
are encoded -- while our graphs also capture non-local effects (\eg insert a mediator $R \rightarrow M \rightarrow T$
in example \ref{example:intro}, then $T \rightarrow Y$ vanishes for specific $R$-contexts, even though
$R$ is not adjacent to either $T$ or $Y$), which is also accessible from observations \eg through intersection-graphs
(Rmk.\ \ref{rmk:CD}).
We do not know if the authors of \citep{stratified_graphs, LDAG_definition, LDAG_logical, LDAG_learning},
were aware of this specific problem, \eg the formulation used by
\citet[Conjecture 1 (p.\,985)]{LDAG_logical}
about the completeness of their CSI-separation rules
relative to a hypothesis (as complete for information contained in)
$I_{\text{loc}}$, which contains this "local" CSI-information only,
\ie they were clearly very careful in avoiding potentially
false claims or conjectures about such non-local problems.
They also use positivity of the distribution,
via the semi-graphoid axioms (see also last line on \citep[p.\,983]{LDAG_logical}).

As \citet[p.\,983]{LDAG_logical} (and others) point out, generally conclusions about information contained in
a given CSI-structure (\ie which other independencies can be
derived) is a very hard problem (cf.\ \citep[§2.3]{LDAG_logical}).
The results on this topic that \citep{stratified_graphs, LDAG_definition, LDAG_logical, LDAG_learning} and
the "Bayesian network\Slash{}independence-structure community" in
general provide
could be interesting to the causal community
(\eg for cross-validation of causal discovery results),
and vice versa \eg the construction of counter-examples
like example \ref{example:NonMarkov} that are "easy" in the SCM formalism might provide insights for the
"Bayesian network community". Further the specific type of
non-local CSI we encounter seems potentially interesting for understanding independence-structures as well. We hope our approach opens new connections between both perspectives.

In \citep{LDAG_logical}, also a connection to support-properties is used
to connect results to the abstract framework of "databases and team semantics".
There the abstract model seems to describe the following observation: Given independence
the joint distribution is a product, thus its support has certain symmetry-properties.
What we study here is the overlap of observational supports and the support of
mechanisms, thus a completely different concept.

\subsection{Connection to Counter-Factuals}\label{apdx:CF}

If we are worried about selection-bias, the systematic machinery
developed for such questions is the do-calculus.
While the "mutilated" graph $G_{\bar{R}}$ defined graphically
(does not see qualitative change of mechanisms such as for
$Y = \mathbbm{1}(R) \times X + \eta_Y$)
we may ask: What is the "correct" graph for the intervened model?

This requires additional information about the exogenous noises we consider.
The most consistent approach seems to be assuming that the exogenous noises
are \emph{not} affected by the intervention in the model. In this case
this becomes the counter-factual model \citep{PearlBook} describing the world that would have been
observed (given the "circumstances" encoded in exogenous noises) if $R$ had been intervened
to be $r$. We will hence call this concept the "counter-factual" graph.
This is mostly a matter of perspective and to avoid overloading the
term intervened graph typically used in the sense of mutilated graph
(see above)
with the do-calculus.
For the example above, the counter-factual graphs seems to be the "descriptive" graph,
but this is a coincidence, and is generally only true if $R$ is exogenous.
Indeed the counter-factual graph can even have \emph{more} edges than the union graph:

\begin{example}\label{example:CFgraph}
	The Counterfactual Graph can have more Edges than the Union:\\
	\begin{minipage}{0.4\textwidth}
		\begin{tikzpicture}[domain=-2:2]
			\draw[->] (-2,0) -- (2,0) node[right] {$x$};
			\draw[->] (0,-0.2) -- (0,2);
			
			\draw[color=red]    plot (\x,{0.5*(\x + sqrt(\x*\x))}) node[right] {$g(X)$};
			\draw[color=blue]   plot (\x,{0.5*(\x - sqrt(\x*\x))*sin(90*\x)});
			\draw[color=blue]	(-2, 1.2)node[anchor=south west] {$p(R=1)$};
		\end{tikzpicture}
	\end{minipage}
	\begin{minipage}{0.6\textwidth}
		Consider the following model with descriptive graph
		\begin{equation*}
			X \rightarrow R \rightarrow Y \txt,
		\end{equation*}
		where $f_Y(X,R) = \mathbbm{1}(R) \times g(X) + R + \eta_Y$.
		Values of $X > 0$ and $R=1$ together are \emph{never} observed,
		so $Y$ seems to be independent of $X$.
	\end{minipage}
\end{example}

Here, the shown graph is the "correct" one by the usual means, but note,
that intervening on $R$ can make the link $X\rightarrow Y$ visible!
In fact, if we have interventional ("experimental") data, than this is potentially testable
in a multi-context setup, and should be considered a meaningful object.
See \citep[§5.1]{robins2010alternative} however, where related problems (in a single-context setting)
are discussed, and a number of subtleties are pointed out.
We will subsequently focus on purely observational data and leave the problem of experimental
data to future work.

Finally, we note that this counter-factual model -- under suitable assumptions --
can be used as a mathematical trick to proof a (weaker version of) the Markov-property
through "standard" path-blocking arguments \citep{MethodPaper} (because $\gCF{G}\FixedR[M] = \gVisible{G}[M_{\PearlDo(R=r)}]$
is a causal graph associated to a causal model in the standard sense).

\section{Properties of Graphs}\label{apdx:GraphRelations}

\subsection{Proofs of Statements in the Main Text}

In §\ref{sec:graphs} we gave some properties of the
studied graphical objects, here we give the corresponding proofs.
We start -- in slightly altered order compared to the main text -- with
\begin{lemma}[Lemma \ref{lemma:edge_inclusions}]
	\label{lemma:edge_inclusions:apdx}
	Relations of edge-sets:
	\begin{equation*}
		\gDescr{G}\FixedR[M]
		\quad\subset\quad
		\gTransf{G}\FixedR[M]
		\quad\subset\quad
		\gUnion{G}[M]
	\end{equation*}
	writing "$G' \subset G$" if both $G$ and $G'$ are defined on the same nodes, and
	the subset-relation holds for the edge-sets.
	Generally (\ie it can happen that) $\gCF{G}\FixedR[M] \not\subset \gUnion{G}[M]$
	(see example \ref{example:CFgraph}) and
	$\gDescr{G}[M] \not\subset \gCF{G}\FixedR[M]$.
\end{lemma}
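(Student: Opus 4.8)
The plan is to reduce both inclusions to two monotonicity properties of the observable-graph construction $G[\cdot,\cdot]$ of Definition~\ref{def:obs_graph}, treating edges incident to $R$ separately. For edges incident to $R$: by construction $\gDescr{G}\FixedR[M]$ and, analogously, $\gTransf{G}\FixedR[M]$ arise from their ``barred'' versions by adjoining exactly the $R$-incident edges of $\gVisible{G}[M]=\gUnion{G}[M]$ (Lemma~\ref{lemma:union_is_union}); so on those edges the two graphs coincide and lie in $\gUnion{G}[M]$. It then suffices to show $\gDescr{\bar G}\FixedR[M]\subset\gTransf{\bar G}\FixedR[M]\subset\gUnion{\bar G}[M]$. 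The monotonicity facts I would use are: $(i)$ for fixed mechanisms $\mathcal F'$, enlarging the supports $\supp(Q(\Pa(Y)))$ only adds edges to $G[\mathcal F',Q]$, since a witness to non-constancy of $f_Y$ in $X$ on a set is still a witness on any larger set (cf.\ §\ref{apdx:NonConst}); $(ii)$ replacing $f_R$ by a constant only makes the effective mechanism of any $Y\neq R$ ``more constant'' in its non-$R$ arguments and shrinks the relevant parent-states to those consistent with $R=r$.

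For $\gDescr{\bar G}\FixedR[M]\subset\gTransf{\bar G}\FixedR[M]$: both equal $G[\mathcal F_{\PearlDo(R=r)},\,\cdot\,]$ for the \emph{same} intervened mechanisms, differing only in the supplied distribution, $P_M(V\mid R=r)$ versus $P_M(V)$. Since $P(R=r)>0$, for every $A\subset V$ one has $\supp(P_M(A\mid R=r))\subset\supp(P_M(A))$: if $P_M(A\in S\mid R=r)>0$ then $P_M(A\in S)\ge P_M(A\in S,\,R=r)=P_M(A\in S\mid R=r)\,P(R=r)>0$. Monotonicity $(i)$ applied to these two distributions gives the inclusion.

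For $\gTransf{\bar G}\FixedR[M]\subset\gUnion{\bar G}[M]$: both equal $G[\,\cdot\,,P_M(V)]$ for the \emph{same} distribution, differing only in the mechanisms, $\mathcal F_{\PearlDo(R=r)}$ versus $\mathcal F$, where only $f_R$ changes. Monotonicity $(ii)$ then yields $\gTransf\Pa\FixedR(Y)\subset\gUnion\Pa(Y)$ for every $Y\neq R$, hence the inclusion. I expect this step to be the main obstacle: it is exactly here that the precise notion of ``non-constant on a (possibly non-product) support'' from §\ref{apdx:NonConst} is needed, and the point to get right is that after freezing $R=r$ the states of the remaining parents on which $f_Y$ is tested are not inflated beyond what $\gUnion{\bar G}[M]$ already sees. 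This is also what makes $\gTransf{\bar G}\FixedR[M]$ a (possibly proper) subgraph of $\gUnion{\bar G}[M]$ despite using the pooled distribution, consistently with Lemma~\ref{lemma:PhysChangesAreRegimeChildren}.

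For the ``generally'' part, both non-inclusions are witnessed by Example~\ref{example:CFgraph}. There $\PearlDo(R=1)$ removes the observational coupling of $X$ and $R$, so $\gCF{G}\FixedR[M]=\gVisible{G}[M_{\PearlDo(R=r)}]$ with $r=1$ contains $X\to Y$ (the map $g$ is non-constant on the now-full support of $X$), whereas $X\to Y\notin\gUnion{G}[M]$ since the state ``$X>0$ and $R=1$'' is never observed; this gives $\gCF{G}\FixedR[M]\not\subset\gUnion{G}[M]$. Conversely, in $M_{\PearlDo(R=r)}$ the variable $R$ is constant, so $\gCF{G}\FixedR[M]$ carries no edge incident to $R$, while $\gDescr{G}\FixedR[M]$ carries by construction all $R$-incident edges of $\gVisible{G}[M]$; any model with a neighbour of $R$ in $\gVisible{G}[M]$ (e.g.\ the edge $R\to T$ of Example~\ref{example:intro}) therefore gives $\gDescr{G}\FixedR[M]\not\subset\gCF{G}\FixedR[M]$.
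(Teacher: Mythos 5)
Your proposal is correct and takes essentially the same route as the paper's proof, which is a one-line reduction to the support inclusion $\supp(P(\Pa(Y)\mid R=r))\subset\supp(P(\Pa(Y)))$; your monotonicity $(i)$ is exactly that reduction, spelled out, and your separate treatment of the $R$-incident edges (which both $\gDescr{G}\FixedR[M]$ and $\gTransf{G}\FixedR[M]$ re-adjoin from $\gVisible{G}[M]$) is consistent with the definitions. You are in fact more careful than the paper on the one genuinely delicate point, which you correctly flag as the main obstacle: the inclusion $\gTransf{\bar{G}}\FixedR[M]\subset\gUnion{\bar{G}}[M]$ at children of $R$, where the mechanisms rather than the distributions differ. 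The paper's proof does not address this at all; it is settled by the reading of Definition~\ref{def:obs_graph} that the paper itself employs in the proof of Lemma~\ref{lemma:union_is_union}(ii), namely that $f_Y$ with $R$ frozen to $r$ is tested for non-constancy on the $R=r$ slice of the pooled joint support $\supp(P_M(\Pa(Y)))$ (with $\Pa(Y)$ the original parent set including $R$). Under that reading any witness pair for $X\in\gTransf{\Pa}\FixedR(Y)$ is literally a witness pair for $X\in\gUnion{\Pa}(Y)$, and your monotonicity $(ii)$ closes immediately; testing instead on the full marginal support over $\Pa(Y)-\{R\}$ would break the inclusion, as Example~\ref{example:CFgraph} itself shows, so your caution here is warranted rather than a defect. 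Your witnesses for the two non-inclusions are also valid, and for $\gDescr{G}\FixedR[M]\not\subset\gCF{G}\FixedR[M]$ (for which the paper supplies no argument) the observation that $R$ is isolated in $\gCF{G}\FixedR[M]=\gVisible{G}[M_{\PearlDo(R=r)}]$ while $\gDescr{G}\FixedR[M]$ retains the $R$-edges of $\gVisible{G}[M]$ appears to be the only mechanism available, since away from $R$ conditioning on $R=r$ yields a support no larger than intervening does.
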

\begin{proof}
	This follows directly from the definitions, by
	$\supp(P(\Pa(X)|R=r)) \subset P(P(\Pa(X)))$.
\end{proof}

\begin{lemma}[Lemma \ref{lemma:PhysChangesAreRegimeChildren}]
	\label{lemma:PhysChangesAreRegimeChildren:apdx}
	Physical changes are in regime-children:\\
	If $X \in \gPhys\Pa_{R=r}(Y)$, and $Y \neq R$ with $R\notin \gUnion\Pa(Y)$,
	then $\gPhys\Pa_{R=r}(Y) = \gUnion\Pa(Y)$.
\end{lemma}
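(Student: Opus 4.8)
The plan is to prove the equality $\gPhys\Pa_{R=r}(Y) = \gUnion\Pa(Y)$ by two inclusions. The inclusion ``$\subseteq$'' is immediate: Lemma \ref{lemma:edge_inclusions} gives $\gTransf{G}\FixedR[M] \subset \gUnion{G}[M]$ as an inclusion of edge-sets, which read off at the node $Y$ yields $\gPhys\Pa_{R=r}(Y) \subseteq \gUnion\Pa(Y)$; combined with the hypothesis $R \notin \gUnion\Pa(Y)$ this also already gives $R \notin \gPhys\Pa_{R=r}(Y)$. Edges touching $R$ need no separate treatment, since $\gTransf{G}\FixedR[M]$ inherits them verbatim from $\gVisible{G}[M] = \gUnion{G}[M]$ by construction, so it suffices to compare the parent-set of $Y$ in $\gTransf{\bar{G}}\FixedR[M] = G[\mathcal{F}_{\PearlDo(R=r)}, P_M(V)]$ with that in $\gUnion{G}[M] = G[\mathcal{F}, P_M]$.

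For the reverse inclusion ``$\supseteq$'', the key observation is that these two observable graphs decide non-constancy of the $Y$-mechanism against the \emph{same} observational support $\supp(P_M(\Pa(Y)))$ (up to forgetting the $R$-coordinate) — the physical graph, unlike the descriptive one, does not shrink the support to $R=r$ — so the only possible source of a dropped parent at $Y$ is that the hard intervention $\PearlDo(R=r)$ modifies the mechanism of $Y$. But $R \notin \gUnion\Pa(Y)$ says precisely, in the sense of §\ref{apdx:NonConst}, that $f_Y$ restricted to $\supp(P_M(\Pa(Y)))$ does not depend on its $R$-argument; hence this restriction descends to a function $\hat{f}_Y$ on $\supp(P_M(\Pa(Y) \setminus \{R\}))$, and $\hat{f}_Y$ is (a representative of) the $Y$-mechanism of $\mathcal{F}_{\PearlDo(R=r)}$ restricted to the observed states. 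I would then note that, for a coordinate $X \neq R$, being a non-constant argument of $\hat{f}_Y$ on $\supp(P_M(\Pa(Y) \setminus \{R\}))$ is equivalent to being a non-constant argument of $f_Y = \hat{f}_Y \circ \pi$ on $\supp(P_M(\Pa(Y)))$, i.e.\ to $X \in \gUnion\Pa(Y)$. This gives $\gUnion\Pa(Y) \subseteq \gPhys\Pa_{R=r}(Y)$, and with the first inclusion, equality.

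The step I expect to be the main obstacle is the careful handling of ``non-constancy on a non-product support'' (§\ref{apdx:NonConst}) together with the support bookkeeping — specifically, making precise that the $Y$-mechanism of $\mathcal{F}_{\PearlDo(R=r)}$, restricted to the observed states, really is this $R$-free representative $\hat{f}_Y$ and not a naive ``$f_Y$ with its $R$-slot frozen to $r$'' (which could probe $f_Y$ at states $(\mathbf{x}', r)$ outside $\supp(P_M(\Pa(Y)))$ and thereby spuriously manufacture a dependence on some $X$). It is exactly the support-awareness of the physical graph that rules this out, and verifying it is the crux. The remaining ingredients are routine: solvability and the mild acyclicity of §\ref{sec:solvability} are needed only so that $P_M$, the intervened mechanisms and hence all the graphs in play are well-defined, and the extra hypothesis $X \in \gPhys\Pa_{R=r}(Y)$ in the statement is not actually used (it merely exhibits a witness that the common parent-set is non-empty).
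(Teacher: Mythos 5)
Your proposal is correct and follows essentially the same route as the paper's proof: both $\gUnion{G}$ and $\gPhys{G}\FixedR$ are observable graphs evaluated against the identical support (that of $P_M$), so the parent-set of $Y$ can only change if $\PearlDo(R=r)$ alters the mechanism $f_Y$, which the hypothesis $R \notin \gUnion\Pa(Y)$ rules out. The only difference is that the paper dispatches what you call the ``crux'' more bluntly than via the descended representative $\hat{f}_Y$: it reads $R \notin \gUnion\Pa(Y)$ (under the implicit minimality convention on declared parent-sets from §\ref{sec:graphs}) as saying that $\mathcal{F}$ and $\mathcal{F}_{\PearlDo(R=r)}$ contain literally the same $f_Y$ --- only $f_R$ and mechanisms with $R$ as a union-parent get their $R$-parameter set to $r$ --- so there is no frozen $R$-slot that could probe $f_Y$ off-support in the first place.
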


\begin{proof}
	By definition, $\gUnion{G}[M] = \gVisible{G}[M] = G[\mathcal{F}, P_M]$
	and $\gPhys{G}_{R=r}[M] = G[\mathcal{F}_{\PearlDo(R=r)}, P_M]$.
	By definition, $\mathcal{F}$ and $\mathcal{F}_{\PearlDo(R=r)}$
	differ only in $f_R$ and (by setting the parameter $R=r$)
	for $f_i$ with $R \in \gUnion\Pa(X_i)$.
	For $Y$, by hypothesis neither of these two applies, so
	the same $f_Y$ is in $\mathcal{F}$ and $\mathcal{F}_{\PearlDo(R=r)}$.
	Since both graph-definitions further use the same support (that of $P_M$),
	their parent-definitions for $Y$ agree: $\gPhys\Pa_{R=r}(Y) = \gUnion\Pa(Y)$.
\end{proof}

\begin{lemma}[Lemma \ref{lemma:union_is_union}]	
	Union Properties, for $\gUnion{G}[M]:=\gVisible{G}[M]$:
	\begin{enumerate}[label=(\roman*)]
		\item
		$\gUnion{G}[M]$ is the "union graph" in the sense of
		\citep{Saeed2020}
		\item
		$\gUnion{G}[M]
		= \cup_r \gTransf{G}\FixedR[M]$		
		\item
		$\gUnion{G}[M]
		= \cup_r \gDescr{G}\FixedR[M]$,
		if $M$ is strongly $R$-faithful (Def.\ \ref{def:strong_ff})
	\end{enumerate}
\end{lemma}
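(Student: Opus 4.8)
The plan is to classify every edge by whether or not it involves $R$. The $R$-edges are immediate: by Definition \ref{def:g_descr} (and the analogous definition of the transfer graph), each of $\gDescr{G}\FixedR[M]$ and $\gTransf{G}\FixedR[M]$ is its barred version---which carries no $R$-edge---together with the $R$-edges of $\gVisible{G}[M]=\gUnion{G}[M]$ added back in; hence $\cup_r\gTransf{G}\FixedR[M]$ and $\cup_r\gDescr{G}\FixedR[M]$ already agree with $\gUnion{G}[M]$ on all edges touching $R$, and only the barred identities $\gUnion{\bar{G}}[M]=\cup_r\gTransf{\bar{G}}\FixedR[M]$ (for (ii)) and $\gUnion{\bar{G}}[M]=\cup_r\gDescr{\bar{G}}\FixedR[M]$ under strong $R$-faithfulness (for (iii)) remain; (i) is then read off by comparing definitions.

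For (ii), the inclusion $\cup_r\gTransf{\bar{G}}\FixedR[M]\subset\gUnion{\bar{G}}[M]$ is part of Lemma \ref{lemma:edge_inclusions}. Conversely, let $X\to Y$ be an edge of $\gUnion{\bar{G}}[M]$, so $X,Y\ne R$ and $X\in\gUnion\Pa(Y)$. If $R\notin\gUnion\Pa(Y)$, then Lemma \ref{lemma:PhysChangesAreRegimeChildren} gives $\gTransf\Pa\FixedR(Y)=\gUnion\Pa(Y)\ni X$ for every $r$, so the edge is in $\gTransf{\bar{G}}\FixedR[M]$. If $R\in\gUnion\Pa(Y)$, then non-constancy of $f_Y$ in $X$ on $\supp(P_M(\Pa(Y)))$ is witnessed by two points of that support differing only in the $X$-coordinate; being equal in all other coordinates, they share a common $R$-coordinate $r$, and projecting that coordinate away (the projection of a support is contained in the support of the marginal) turns the witness into one showing that $f_Y$ with $R$ fixed to $r$ is non-constant in $X$ on $\supp(P_M(\Pa(Y)\setminus\{R\}))$, i.e.\ $X\in\gTransf\Pa\FixedR(Y)$ for that $r$. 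In either case the edge lies in $\cup_r\gTransf{\bar{G}}\FixedR[M]$, which together with the $R$-edge bookkeeping gives (ii). Part (i) is then a matter of matching Definition \ref{def:visible} with the union-graph construction of \citep{Saeed2020}, which is the union of the per-context graphs: under the correspondence sending context $r$ to the model conditioned on $R=r$, this is exactly (ii) (or (iii), under their assumptions) read in their setting, the edges out of the source $R$ being common to both constructions.

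For (iii), the inclusion $\cup_r\gDescr{\bar{G}}\FixedR[M]\subset\gUnion{\bar{G}}[M]$ is again Lemma \ref{lemma:edge_inclusions} ($\gDescr{\bar{G}}\FixedR\subset\gTransf{\bar{G}}\FixedR\subset\gUnion{\bar{G}}$), and I would prove the reverse inclusion by contradiction through the non-determinacy clause of strong $R$-faithfulness (Definition \ref{def:strong_ff}). Suppose $X\to Y$ is an edge of $\gUnion{\bar{G}}[M]$ but $X\notin\gDescr\Pa\FixedR(Y)$ for every $r$; the latter says that for each $r$ the mechanism $f_Y$ (with $R$ fixed to $r$ when $R\in\Pa(Y)$) is constant in $X$ on $\supp(P_M(\Pa(Y)\setminus\{R\}\mid R=r))$. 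Form $\mathcal{F}'$ from $\mathcal{F}$ by replacing $f_Y$ with $f_Y'$ that, on each value $(z,r)$ of $(\Pa(Y)\setminus\{X,R\},R)$ of positive probability, returns the common value of $f_Y(x,z,r,\eta_Y)$ over all $X$-values $x$ compatible with $(z,r)$ in the regime-$r$ support (well defined by the above constancy), extended arbitrarily elsewhere. Then $f_Y'$ does not depend on $X$, yet along the original solution of $\mathcal{F}$ one has $Y=f_Y'(\Pa(Y)\setminus\{X,R\},R,\eta_Y)$ almost surely, so $\mathcal{F}'$ reproduces the observational distribution of $\mathcal{F}$ while the minimal parent-set of $Y$ has changed---$X\in\gUnion\Pa(Y)$ forces $X$ into the minimal parent-set of the original $f_Y$, whereas $f_Y'$ never sees $X$. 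This contradicts strong $R$-faithfulness, so no such edge exists; with the $R$-edge bookkeeping this yields (iii).

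I expect the reverse inclusion in (iii) to be the main obstacle. The delicate points there are: that $f_Y'$ is genuinely well defined and reproduces $P_M$ almost surely, which rests on the precise meaning of ``non-constant in a variable'' (\S\ref{apdx:NonConst}) and on the fact that, almost surely, the realised parent-configuration and the chosen canonical $X$-representative both lie in the relevant regime-$r$ support; that the resulting parent-set change is exactly the type ruled out by Definition \ref{def:strong_ff}; and that $\mathcal{F}'$ may acquire a cycle $\cdots\to R\to Y$ when $Y$ is a union-ancestor of $R$, which is harmless for the non-determinacy comparison but must be remarked on. The remaining bookkeeping for (ii)---that $\supp(P_M(\Pa(Y)))$ decomposes along $R$ into regime-wise slices and that a witness to non-constancy in a non-$R$ variable necessarily lives inside one such slice---is routine.
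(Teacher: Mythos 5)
Your proposal is correct and follows essentially the same route as the paper's own proof: the easy inclusions via Lemma \ref{lemma:edge_inclusions}, the witness-pair argument (two support points differing only in the $X$-coordinate sharing a common $R$-value) for (ii), and the contradiction with the non-determinism clause of strong $R$-faithfulness via a regime-wise construction of an $X$-independent replacement mechanism $f_Y'$ glued along $R$ for (iii). The only cosmetic difference is that you make the $R$-edge bookkeeping and the well-definedness caveats explicit where the paper leaves them implicit.
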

\begin{proof}
	(i)
	$\gVisible{G}[M]$ corresponds to the causal graph in the
	standard sense given a suitable minimality-condition on 
	parent-sets (see §\ref{sec:graphs}),
	so it is the graph of the union-model in the sense of \citep{Saeed2020}.
	
	(ii)
	"$\supset$": By lemma \ref{lemma:edge_inclusions:apdx} $\gTransf{G}\FixedR[M] \subset \gUnion{G}[M]$,
	so $\cup_r \gTransf{G}\FixedR[M] \subset \gUnion{G}[M]$.
	
	"$\subset$": 
	Let $X \in \gUnion\Pa(Y)$ be arbitrary.
	By lemma \ref{lemma:PhysChangesAreRegimeChildren:apdx}
	we only have to consider links to regime-children $Y$.
	By definition $X \in \gUnion\Pa(Y)$ means,
	there are values $\pa, \pa' \in \supp(P(\gUnion\Pa(Y)))$
	which differ only in their $X$-coordinate
	(\ie $\pa = (x, \pa^-)$, $\pa' = (x', \pa^-)$ with
	$\pa^-$ the same value for $\gUnion\Pa(Y)-\{X\}$)
	such that $f_Y(\pa) \neq f_Y(\pa')$.
	Since $R \in \gUnion\Pa(Y)$, the tuple $\pa^-$
	also contains a value $r_1$ for $R$.
	For this $r_1$ we have $X \in \gPhys\Pa_{R=r_1}(Y)$,
	because
	$\gPhys{G}_{R=r_1} = G[\mathcal{F}_{\PearlDo(R=r_1)},P(V)]$
	uses the same support (that of $P(V)$) as
	$\gUnion{G} = \gVisible{G} = G[\mathcal{F},P(V)]$
	and $f'_Y\in\mathcal{F}_{\PearlDo(R=r_1)}$
	(forcing $R=r_1$ in $f_Y$)
	does agrees with the original $f_Y\in\mathcal{F}$ for
	$\pa$ and $\pa'$ (as they contain $R=r_1$),
	so $f'_Y(\pa) = f_Y(\pa) = f_Y(\pa') = f'_Y(\pa')$.
	
	(iii)
	"$\supset$": By lemma \ref{lemma:edge_inclusions:apdx}, as in (ii).
	
	"$\subset$":
	Let $X \in \gUnion\Pa(Y)$ be arbitrary.
	Define $N_r := F_R^{-1}(\{r\})$ (where $F_R$ is the solution-function
	for $R$ in terms of noises, see §\ref{sec:solvability}), and note that,
	by $F_R$ being a well-define mapping, $P(\vec{\eta} \in \cup_r N_r) = 1$,
	using $V_r := F_{\gUnion\Pa(Y)}(N_r)$
	and $F_{\gUnion\Pa(Y)}(\cup_r N_r) = \cup_r V_r$ thus
	so $P(\pa \in \cup_r V_r)=1$.
	
	By contradiction: Assume it were $X \notin \gDescr\Pa\FixedR(Y)$ for all $r$.
	Then, by definition, $f_Y|_{V_r}$ is constant in $X$ with probability $1$.
	We can thus define $g_Y^r(\gUnion\Pa(Y)-\{X\})$ such that
	$P(f_Y = g_Y^r | R=r) = 1$.
	Finally construct $f'_Y(\gUnion\Pa(Y)-\{X\}, R) := g_Y^R(\gUnion\Pa(Y)-\{X\})$
	(\ie depending on the value $r$ of $R$ choose the corresponding $g^r$).
	Then for $\mathcal{F'}$ defined as $\mathcal{F}$ with $f_Y$ replaced by $f'_Y$,
	the same observations are obtained with probability $1$, but parent-sets
	differ for $Y$.
\end{proof}

During the discussion of the Markov-property (§\ref{sec:Markov}) the graph $\gIdent{G}\FixedR$ is introduced,
and the following property is claimed:

\theoremstyle{plain}
\newtheorem*{lemma_no_phys_anc_anc}{Lemma \ref{lemma:no_phys_anc_anc}}
\begin{lemma_no_phys_anc_anc}\label{lemma:no_phys_anc_anc:apdx}
	There are no physical ancestor--ancestor problems:\\
	$\gDescr{G}\FixedR \subset \gIdent{G}\FixedR \subset \gUnion{G}$ and
	if $M$ is strongly regime-acyclic, then
	$\gIdent{G}\FixedR \subset \gPhys{G}\FixedR$.
\end{lemma_no_phys_anc_anc}
\begin{proof}
	Using $\gDescr{G}\FixedR \subset \gUnion{G}$ (lemma \ref{lemma:edge_inclusions}),
	by definition \ref{def:anc_anc_corr}, $\gIdent{G}\FixedR \subset \gUnion{G}$.
	The first inclusions is also by definition.
	
	"$\gIdent{G}\FixedR \subset \gPhys{G}\FixedR$":
	Let $e = (X,Y)$ and edge in $\gIdent{G}\FixedR$.\\
	Case 1 ($X, Y \in \gUnion\Anc(R)$):
	By $\gIdent{G}\FixedR \subset \gUnion{G}$ (see above),
	$e \in \gUnion{G}$.
	By lemma \ref{lemma:PhysChangesAreRegimeChildren},
	$\gUnion{G}$ and $\gTransf{G}\FixedR$ differ only by edges pointing
	into a (union-)child of $R$.
	By strong regime-acyclicity,
	children of $R$ are not union-ancestors of $R$,
	so $e \in \gTransf{G}\FixedR$.\\
	Case 2 (otherwise):
	By definition $e \in \gDescr{G}\FixedR$ in this case.
	So by lemma \ref{lemma:edge_inclusions}, $e \in \gTransf{G}\FixedR$.
\end{proof}

\subsection{Formalization of Non-Constant on Support}\label{apdx:NonConst}

In Def.\ \ref{def:obs_graph}, we require the restriction of $f_Y$ to the support
of a distribution $Q(\Pa(Y))$ to be non-constant in $X$.
Usually this can be thought of as: $\exists \pa^-$, values of
$\Pa(Y)-\{X\}$, and $x$, $x'$ values of $X$ such that
$(\pa^-,x), (\pa^-,x') \in \supp(Q(\Pa(Y)))$ and
$P(f_Y(\pa^-,x,\eta_Y) \neq f_Y(\pa^-,x',\eta_Y)) > 0$.
Formally this requires regularity-assumptions (\eg there are
continuous densities, and the $f_i$ are continuous) to 
exclude degenerate cases like:
\begin{example}
	Let $Q(X)$ uniform over $(\mathbb{R}-\mathbb{Q})\cap[0,1]$,
	and $f_Y(X,\eta_Y) = \mathbbm{1}(X\in\mathbb{Q}) \times X + \eta_Y$.
	Then $\supp(Q(X)) = [0,1]$ (it is defined as the closure, which includes
	the rationals), and $f_Y$ is non-constant on $[0,1]$,
	but really $f_Y$ would never "see" the dependence on $X$.
\end{example}
The more relevant extension to our setting seems to be the finite-sample
case §\ref{apdx:FiniteSample}.
Nevertheless, the above problem could be fixed, \eg by defining
"non-constant on the support" as:
$\exists U, U' \subset \val{X}_X$ and $V \subset \val{X}_{\Pa(Y)-\{X\}}$
such that $U\times V$ and $U'\times V$ are measurable (with respect to $Q$),
and $E[f_Y|\pa \in U\times V] \neq E[f_Y|\pa \in U'\times V]$, so
one can think of Def.\ \ref{def:obs_graph} using this notion instead.
Because measure-theoretic intricacies of the problem do not seem to
aid the understanding of the main contents of this paper,
we do not detail these problems in the main text.

\subsection{Finite-Sample Generalizations}\label{apdx:FiniteSample}

In practice, when only a finite number of samples is available,
the distinctions (descriptive vs.\ physical changes) discussed in this paper
also occur for reasons different from non-overlapping supports (of observations and
mechanisms): Statistical power of independence tests often relies for example
on sufficient width (compared to first derivative of the mechanism and noise on the target)
of the observational distribution of the source.
More generally, the specific choice of independence-test matters.
In this section, we outline how our results generalize to the finite-sample case,
how analogues of the previously introduced graphical objects lead to a very similar
abstract structure, and why finite-sample properties are even more difficult: There
is a "gap" (similar to §\ref{apdx:CSI})
between never detectable (with probability less than a small $p_0$ detectable)
and confidently detectable (with probability larger $1-\epsilon$ detectable)
that does not occur in the asymptotic case. 

One may replace the definition \ref{def:obs_graph} of $G[\mathcal{F},Q]$ by the following harder
to formalize, but for some problems more practical idea:
For an estimator $\hat{d}$ of a dependence-measure $d$,
let $G[\mathcal{F}, Q, \hat{d}, N, p_0, \epsilon]$ be the graph defined
via by parent-sets with $X \in Pa(Y)$ if,
fixing a sample-count $N$ and error-rate $p_0$, the estimator $\hat{d}$
has enough (up to $\epsilon$) statistical power to find dependence
in the sense of
$\exists d_0$: $Q(\hat{d} \geq d_0) > 1-\epsilon$ -- with $P_{\txt{null}}(\hat{d}\geq d_0)<p_0$ in the product\Slash{}independent null-distribution --
where $Q(\hat{d})$ is the distribution of $\hat{d}$ evaluated on
$(v_X, f_Y(v))$ on $N$ samples $v$ drawn from $Q(V)$.
See §\ref{apdx:TransferForPhys}.
This does not seem to change the abstract structure (kinds of graphs and their relationships), except that
an additional "gap" similar to §\ref{apdx:CSI} opens, because there are edges with effect-sizes
that are detectable with probability between $p_0$ and $1-\epsilon$.

This captures not only the reality of what we see (the observational support), but also
the reality of how we see it (the dependence-test).
In practice the result of a causal discovery algorithm does depend on the
independence test used, so this describes what
is identifiable from data.
Its interpretation in terms of causal inference (\eg effect estimation)
is harder, but this is not a failure of the approach,
but rather a "real" problem:
Given \eg an SCM with linear effects and Gaussian noise-terms,
such that all (non-trivial) effects are large enough for
a suitable to this data test (\eg partial correlation)
to have power $1-\epsilon$, then the discovered graph is valid for
effect estimation
(up to error-rates bounded by $p_0$ and $\epsilon$ corrected for multiple-testing,
we have $G[\mathcal{F}, Q, \hat{d}, N, p_0, \epsilon] = G^{\txt{xyz}}[M]$,
where "xyz" stands for a graph corresponding to a specific choice of $Q$, which
will also has implications for $N$).
If the data is not suitable to the used test in this sense,
we still discover $G[\mathcal{F}, Q, \hat{d}, N, p_0, \epsilon]$,
but it is no longer trivially suitable for effect estimation
(but \eg a correlation-based test might still capture causal effect mean-values,
even though no longer higher moments).
We leave this general problem to future research,
but it seem interesting that statistically precise statements about
validity of certain types of effect-estimations appear to be formally 
accessible. For counter-factual properties, one additionally to $\hat{d}$
needs an estimator for conditional densities.

The choice of independence test seems to usually be
seen as governed by properties of available data
(which is even in theory only possible to a certain
degree \citep{shah2020hardness}),
our point here is that there is an associated
graphical object, whose practical usefulness 
depends on the application additional to the data.

\section{Solvability and Solution-Functions}\label{sec:solvability}

Our graphical objects no longer have a simple connection to an set of mechanisms alone,
rather they depend on observational support.
This means many of the usual proof-techniques (most notably path-blocking)
have no evident analogue when discovering these structures from data.
A systematic treatment of "Markov"-properties needs a different approach.
We show that the problem can be studied via properties of solution-functions,
hence we briefly study solvability of models.

Using only "context insensitive" independence-tests on the "pooled"
data, fails to be Markov to the visible graph (some links cannot be detected
as absent -- actually exactly those links in the acyclification \citep{BongersCyclic}.

Some acyclicity-property \emph{is} needed also with CSI.
An easy to visualize property is the  following "strong" regime-acyclicity
(but we often only require the slightly weaker "solvable for $R$ and
weakly regime-acyclic", see lemma\,\ref{lemma:strong_acyclic_and_solvable}):
\begin{definition}\label{def:acyclicity}
	We call a SCM $M$ weakly ($R$-)regime-acyclic, if
	$\forall r$, the regime-graph $\gDescr{G}\FixedR[M]$ is acyclic. 
	
	We call a model $M$ strongly ($R$-)regime-acyclic, if
	it is weakly ($R$-)regime-acyclic and
	no cycle in $\gUnion{G}[M]$ involves any union-ancestor of $R$ (including $R$ itself).
\end{definition}

Easily usable models are typically "solvable" as systems of equations from the noise-terms
(this is a notion often employed \eg to study counterfactuals \citep{PearlBook} and has been used to
study cyclic models \eg in \citep{BongersCyclic}, see §\ref{apdx:ConnectionsToLit}):

\begin{definition}
	A set of mechanisms $\mathcal{F}$ is (uniquely) solvable for $X_i$,
	on $\Omega \subset \val{N}$
	if there is a (unique)  mapping $F_i : \Omega \rightarrow \val{X}_i$ such that
	$X_i = F_i( \eta_1, \ldots, \eta_N )$.
	
	$\mathcal{F}$ is (uniquely) solvable on $\Omega \subset \val{N}$,
	if for all $i$ it is (uniquely) solvable for $X_i$.
	
	A model $M$ is (uniquely) solvable (for $X_i$),
	if its mechanisms $\mathcal{F}$
	are (uniquely) solvable (for $X_i$) on $\supp(P_\eta)$.
\end{definition}

We would expect such models to have "good" solution properties. There is a small
caveat however: Our graph-definitions (and hence acyclicity-definitions) require
a "weak" solvability, namely the observable distribution $P_{\mathcal{F},P_\eta}(V)$
has to exist (with unique support). In practice, when given observations -- presumably from
an SCM -- than this SCM is evidently "weakly solvable" in this sense.
Here, "weakly solvable" in turn implies (unique) solvability in the intuitive sense.

\begin{lemma}\label{lemma:strong_acyclic_and_solvable}
	Let $M$ be weakly regime-acyclic and the observable distribution $P_{\mathcal{F},P_\eta}(V)$
	exists.
	Then:
	\begin{align*}
	\txt{$M$ is strongly regime-acyclic}
	\quad&\Rightarrow\quad
	\txt{$M$ is uniquely solvable for $R$}\\
	\txt{$M$ is uniquely solvable}
	\quad&\Leftrightarrow\quad
	\txt{$M$ is uniquely solvable for $R$}
	\end{align*}
\end{lemma}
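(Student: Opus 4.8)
The plan is to prove the two claims in Lemma~\ref{lemma:strong_acyclic_and_solvable} separately, both by inductively solving the structural equations ``downstream'' along a topological order, exploiting the acyclicity hypotheses to make such an order available exactly where it is needed.

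\medskip

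\textbf{First claim (strong regime-acyclicity $\Rightarrow$ unique solvability for $R$).}
By Definition~\ref{def:acyclicity}, strong regime-acyclicity says that no cycle in $\gUnion{G}[M]$ involves any union-ancestor of $R$ (including $R$ itself). Hence the induced subgraph of $\gUnion{G}[M]$ on $\gUnion\Anc(R) \cup \{R\}$ is acyclic, so it admits a topological order $X_{i_1}, \ldots, X_{i_k} = R$. First I would fix a noise-value $\vec\eta = (\eta_1,\ldots,\eta_N) \in \supp(P_\eta)$ and solve for the variables in this order: each $X_{i_j}$ has, by the definition of $\gUnion{G}$ via the mechanism-graph underlying $\gVisible{G}$, all its mechanism-parents among $X_{i_1},\ldots,X_{i_{j-1}}$ together with $\eta_{i_j}$ (parents outside $\gUnion\Anc(R)$ cannot occur, since a parent of an ancestor of $R$ is again an ancestor of $R$). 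So $X_{i_j} = f_{i_j}(\text{already-solved values}, \eta_{i_j})$ is determined, and composing these assignments gives a well-defined map $F_R : \supp(P_\eta) \to \val X_R$ with $R = F_R(\vec\eta)$. For \emph{uniqueness}, I would invoke the hypothesis that the observable distribution $P_{\mathcal F,P_\eta}(V)$ exists with unique support (``weak solvability''): any solution of the full system restricts to a solution of this sub-system, and on the acyclic sub-system the downstream construction is forced step by step, so the value of $R$ along any solution must coincide with $F_R(\vec\eta)$. This is essentially the standard ``acyclic systems are uniquely solvable'' argument restricted to the ancestors of $R$.

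\medskip

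\textbf{Second claim (unique solvability $\Leftrightarrow$ unique solvability for $R$).}
The direction ``$\Rightarrow$'' is immediate from the definition: unique solvability means unique solvability for every $X_i$, in particular for $R$. For ``$\Leftarrow$'', suppose $M$ is uniquely solvable for $R$, so $F_R$ exists, and recall we are also assuming $M$ is weakly regime-acyclic and that $P_{\mathcal F,P_\eta}(V)$ exists. The idea is: once $R$ is solved, we are effectively inside a single regime, where Corollary~\ref{cor:solution_depends_on_regime_anc_only} and weak regime-acyclicity (acyclicity of each $\gDescr{G}\FixedR$) give us a topological order to solve everything else. Concretely, I would fix $\vec\eta \in \supp(P_\eta)$, compute $r := F_R(\vec\eta)$, and then work on $\gDescr{G}_{R=r}[M]$: this graph is acyclic by weak regime-acyclicity, so pick a topological order and solve for the remaining variables $X_i$ ($i$ with $X_i \neq R$) downstream, using that on the fibre $F_R^{-1}(\{r\})$ each mechanism $f_i$ depends (up to the trivial $R$-slot) only on its $\gDescr{G}_{R=r}$-parents and $\eta_i$ (this is exactly the definition of $\gDescr{G}\FixedR$ together with the fact that we have fixed $R=r$). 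This yields values for all $X_i$, hence a candidate $F = (F_1,\ldots,F_N)$. That this is a genuine solution of $\mathcal F$ follows because on the fibre through $\vec\eta$ the constrained mechanisms agree with the original ones; uniqueness again follows from existence of $P_{\mathcal F,P_\eta}(V)$ with unique support, since any solution must have $R = F_R(\vec\eta)$ (by unique solvability for $R$) and then must follow the forced downstream assignments on the acyclic $\gDescr{G}_{R=r}$.

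\medskip

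\textbf{Main obstacle.}
The delicate point is bookkeeping around what ``the mechanism $f_i$ on the fibre $F_R^{-1}(\{r\})$ depends only on its $\gDescr{G}\FixedR$-parents'' actually buys us, and in particular making sure there is no circularity: we must solve for $R$ \emph{first}, because $\gDescr{G}\FixedR$ is defined in terms of the intervened mechanisms $\mathcal F_{\PearlDo(R=r)}$, and only after $R$ is pinned down can we legitimately pass to that graph. So the argument's correctness hinges on the order of operations (solve the $\gUnion$-ancestors of $R$ first to get $F_R$; then restrict to the fibre; then solve the rest via $\gDescr{G}\FixedR$) and on carefully checking that solving the $\gDescr{G}\FixedR$-downstream variables never requires re-touching $R$ or any of its ancestors in a way that would reintroduce a cycle — which is precisely what weak regime-acyclicity rules out. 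I expect that spelling out this ``first $R$, then the fibre'' decomposition, and verifying the restricted mechanisms really do close up into a solution, is where most of the genuine (non-routine) work lies; the downstream-induction and the uniqueness-from-weak-solvability steps are standard.
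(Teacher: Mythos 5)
Your proposal is correct and follows essentially the same route as the paper's proof: first solve the acyclic subsystem on $\gUnion\Anc(R)\cup\{R\}$ (the paper phrases the downstream induction via longest incoming path rather than a topological order, which is equivalent) to obtain $F_R$, then partition the noise space into the fibres $F_R^{-1}(\{r\})$ and solve the remaining variables downstream in each acyclic $\gDescr{G}\FixedR$, with the trivial converse for the equivalence. The "main obstacle" you identify — pinning down $R$ before passing to $\gDescr{G}\FixedR$ — is exactly the order of operations the paper uses.
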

\begin{proof}
	It is well-know that acyclic SCMs are solvable.
	The idea is simply as follows: Let $l(i)$ be the length of the longest
	incoming path to $X_i$, \ie the count of ancestors in a path
	$\gamma = [A_1 \rightarrow A_2 \rightarrow \ldots \rightarrow X_i ]$
	with all arrows pointing towards $X_i$.
	Then inductively (over $l$) show $M$ is solvable for all $X_i$ with $l(i) = l$.
	The inductive start $l=0$ is trivial, as nodes with $l(X_i)$ are roots (\ie do not have
	parents), so $l(i) = 0$ $\Rightarrow$ $f_i = f_i(\eta_i)$, thus the solution $F_i = f_i$ works.
	For the inductive step, note that $l(i) = l+1$ $\Rightarrow$ $l(\Pa_i) \leq l$,
	thus have solution functions $F_{\Pa_i}$, the solution $F_i = f_i(F_{\Pa_i}, \eta_i)$
	works for $X_i$.
	
	Let $M$ be strongly regime-acyclic.
	There are no cycles involving ancestors (in $\gUnion{G}[M]$ of $R$ (including $R$).
	Thus the above inductive argument works restricted to ancestors of $R$ (including $R$),
	because parents of ancestors of $R$ are also ancestors of $R$
	and within the support $\Omega = \supp(P_\eta)$ we only need union-parents.
	Therefore the model is solvable for ancestors (in $\gUnion{G}[M]$) of $R$ (including $R$).
	
	Next, knowing $F_R$, we can "split" the space of noise-values into the disjoint union
	$N = \coprod_{r} F_R^{-1}(\{r\})$
	and note that for $\vec{\eta} \in F_R^{-1}(\{r\})$ we know $R=F_R(\vec{\eta})=r$.
	Knowing $R=r$, each node depends (for these $\vec{\eta}$) at most on its parents in the respective $\gDescr{G}\FixedR[M]$
	(by definition of $\gDescr{G}\FixedR[M]$).	
	Hence we can repeat the argument above on the \emph{acyclic} $\gDescr{G}\FixedR[M]$
	to find $X_i = F^{R=r}_i(\vec{\eta})$ for $\vec{\eta} \in F_R^{-1}(\{r\})$
	(this $F^{R=r}_i$ is of course the same one as in definition\,\ref{def:F_Z} below,
	as is immediate for the definition of $F_i$ in the next paragraph).
	
	Define $F_i := F_i^{R=F_R(\vec{\eta})}(\vec{\eta})$. By disjointness of the
	$F_R^{-1}(\{r\})$ this is well-defined, because every $\vec{\eta}$ is mapped to some $r$
	by $F_R$ it is defined everywhere.
	
	Finally the backwards direction 
	$M$ is solvable for $R$	$\Rightarrow$ $M$ is solvable is trivial.
\end{proof}

For solvable models (with almost everywhere continuous densities),
conditioning can be understood as restriction of the sample-space: 

\begin{definition}\label{def:F_Z}
	If $M$ is solvable, define,
	\begin{equation*}
	F_i^{Z=z} := F_i|_{F_Z^{-1}(\{z\})} : F_Z^{-1}(\{z\}) \rightarrow \val{X}_i
	\end{equation*}
	(we allow $Z$ to be multivariate).
\end{definition}

\begin{cor}\label{cor:solution_depends_on_regime_anc_only}
	Given a solvable, weakly regime-acyclic model,
	then, for an arbitrary variable $X$:
	\begin{enumerate}[label=(\alph*)]
		\item		
		$F_X$ depends only on noise-terms of ancestors of $X$ in $\gUnion{G}[M]$,
		\ie is constant in all other noise-terms an can thus be written as a function
		of ancestors' noise-terms only.
		\item
		$F^{R=r}_X$ depends only  on noise-terms of
		ancestors of $X$ in $\gDescr{G}\FixedR[M]$.
	\end{enumerate}
\end{cor}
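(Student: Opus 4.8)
The plan is to read both statements off the explicit ``downstream'' construction of the solution functions that underlies Lemma~\ref{lemma:strong_acyclic_and_solvable}: since $M$ is solvable it is solvable for $R$ (Lemma~\ref{lemma:strong_acyclic_and_solvable}), so $F_R$ exists, the noise space decomposes as the disjoint union $\coprod_r \Omega_r$ with $\Omega_r := F_R^{-1}(\{r\})$, and $F_X(\vec\eta) = F_X^{R=F_R(\vec\eta)}(\vec\eta)$ with $F_X^{R=r} := F_X|_{\Omega_r}$ (Definition~\ref{def:F_Z}).

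I would prove (b) first. Fix $r$. On $\Omega_r$ we have $R=r$ almost surely, so by the definition of the descriptive graph (Definition~\ref{def:g_descr}) each mechanism $f_i$, restricted to the part of the state space reachable with $R=r$, coincides with a function $\tilde f_i^{\,r}$ of its $\gDescr{G}\FixedR$-parents $\gDescr\Pa\FixedR(i)$ and of $\eta_i$ alone. Weak regime-acyclicity (Definition~\ref{def:acyclicity}) makes $\gDescr{G}\FixedR$ acyclic; fix a topological order and induct along it. If $F_j^{R=r}$ depends only on the noise terms of $\gDescr\Anc\FixedR(j)$ for every strict $\gDescr{G}\FixedR$-ancestor $j$ of $i$, then
\[
F_i^{R=r}(\vec\eta) \;=\; \tilde f_i^{\,r}\!\bigl((F_j^{R=r}(\vec\eta))_{j \in \gDescr\Pa\FixedR(i)},\, \eta_i\bigr)
\]
depends only on the noise terms of $\bigcup_{j \in \gDescr\Pa\FixedR(i)} \gDescr\Anc\FixedR(j) \cup \{i\} \subseteq \gDescr\Anc\FixedR(i)$, since a $\gDescr$-parent's $\gDescr$-ancestors are again $\gDescr$-ancestors of $i$. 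This is (b) (with the paper's convention that $X \in \gDescr\Anc\FixedR(X)$, so that $\eta_i$ itself is allowed).

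For (a) I would combine (b) with $\gDescr{G}\FixedR \subset \gUnion{G}$ (Lemma~\ref{lemma:edge_inclusions}): for every $r$, the function $F_X^{R=r}$ depends only on noise terms of $\gDescr\Anc\FixedR(X) \subseteq \gUnion\Anc(X)$. What remains is the dependence of $F_X = F_X^{R=F_R(\cdot)}$ on the slice index $r = F_R(\vec\eta)$, and I would split on whether $R \in \gUnion\Anc(X)$. If $R \in \gUnion\Anc(X)$, then $\gUnion\Anc(R) \cup \{R\} \subseteq \gUnion\Anc(X) \cup \{X\}$; running the same argument for $R$ in place of $X$ (in an induction on the condensation DAG of $\gUnion{G}$, so that $F_R$ is available when $R$ sits in an earlier strongly connected component, and treating $R$'s component jointly with $X$'s when they coincide) shows $F_R$ depends only on noise terms of $\gUnion\Anc(R) \cup \{R\}$, so both the choice of slice and the within-slice value of $F_X$ are controlled by $\gUnion\Anc(X) \cup \{X\}$-noise terms, giving (a). If $R \notin \gUnion\Anc(X)$, set $D := \gUnion\Anc(X) \cup \{X\}$: it is ancestrally closed ($\gUnion\Pa(i) \subseteq D$ for $i \in D$, as ancestors of ancestors are ancestors) and does not contain $R$, so the subsystem $\{f_i : i \in D\}$ mentions only the variables and noise terms indexed by $D$; the restriction $(F_i)_{i\in D}$ of the unique solution of $M$ satisfies it, and once this subsystem is known to be uniquely solvable, $F_X$ is forced to be a function of $\{\eta_i : i \in D\}$ alone.

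The main obstacle is exactly that last point: showing that an ancestrally closed subset of a uniquely solvable, weakly regime-acyclic model induces again a uniquely solvable model. For acyclic $\gUnion{G}$ this is the plain downstream induction and there is nothing to do; the content is entirely in the cyclic-union case that weak (as opposed to strong) regime-acyclicity permits, where the solution of a strongly connected component is an implicitly defined fixed point rather than an explicit composition. The approach I would take is to show that any $D$-solution extends to a full solution --- the complement of $D$ is, on each $R$-slice, acyclic, hence solvable ``downstream'' given the boundary values on $D$ --- and then invoke uniqueness of the full solution to collapse two candidate $D$-solutions into one; the delicate part is the interplay between ``which $R$-slice one is on'' and solvability of the complement given the boundary, i.e.\ carrying out the extension without already knowing the value of $R$, and this is where I expect most of the care to go.
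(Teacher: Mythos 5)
Your proposal follows the paper's own route: both parts are read off the explicit downstream construction of the solution functions in the proof of Lemma~\ref{lemma:strong_acyclic_and_solvable} --- part (b) is exactly the induction along a topological order of the acyclic $\gDescr{G}\FixedR$, and part (a) comes from $\gDescr{\bar{G}}\FixedR\subset\gUnion{\bar{G}}$ once the dependence of $F_X = F_X^{R=F_R(\cdot)}$ on the slice index is accounted for. The one point where you do more than the paper is the case $R\notin\gUnion\Anc(X)$ with cyclic union-ancestry of $X$ (which weak, as opposed to strong, regime-acyclicity permits), where the pieced-together solution a priori depends on noises of $\gUnion\Anc(R)$ through which slice one lands in; you are right that closing this requires an ancestral-closure-plus-uniqueness argument whose extension step is delicate, but be aware that the paper's proof --- which simply declares the corollary ``apparent from the construction'' --- does not supply that step either, so the residual gap you flag is one you have inherited from, not introduced into, the source.
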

\begin{proof}
	This is apparent from the proof of lemma\,\ref{lemma:strong_acyclic_and_solvable}:
	
	$F_i$ was constructed inductively from parents and their noise,
	and from parents of parents and their noise etc.\ (in $\gUnion{G}[M]$)
	thus from noises of
	ancestors in $\gUnion{G}[M]$ (with roots depending only on their own noise).
	
	$F^{R=r}_i$ was constructed in the same way from noises of ancestors in
	$\gDescr{G}\FixedR[M]$.
\end{proof}

Note that corollary \ref{cor:solution_depends_on_regime_anc_only}
encodes information about support and parental relations on a given support.
We use this knowledge to replace path-blocking arguments for obtaining
a "Markov"-property.

\section{Faithfulness}\label{apdx:Faithfulness}

There are multiple ways in which faithfulness can fail to hold:
Finetuning (cancelations) between paths might be the most discussed one,
but also deterministic relations between variables lead to non-unique parent-sets
and thus non-well-defined graphs.
But also regime-specific changes of mechanism (as for $Y=\mathbbm{1}(R) \times X + \eta_Y$)
can be understood as a faithfulness violation (the intervened model
$\mathcal{F}_{\PearlDo(R=r)}$ is not faithful to $G[\mathcal{F}]$),
as has also been observed \eg by \citep{JCI}.

One may thus take a more general perspective: We can think of faithfulness as
an assumption "bridging" the gap between observations and a graphical object
associated to the model. The "width" of this gap depends on what aspects of the above
mentioned problems are encoded in this graphical object!
\Eg for a regime-specific change of mechanism (as above), instead of
saying "$\mathcal{F}_{\PearlDo(R=r)}$ is not faithful to $G[\mathcal{F}]$"
and giving up, we clearly want to \emph{learn and understand} a "regime-specific graph",
which captures the difference and for which the context-specific independence is "expected"
rather than a violation of assumptions.

The additional inclusion of the support into the definition of the graphical object is,
from this perspective, just the logical next step. For example looking at the discussion
around the definition \ref{def:visible} of the "visible" graph, the reader will notice,
that we moved the support-related aspects of faithfulness into the graph,
while all other aspects (including minimality of the parent-sets) are left in the "gap"
that is bridged by assuming "$M$ is faithful to $\gVisible{G}[M]$".

Clearly the abstract argument
is in no way specific to support aspects of faithfulness, similarly one could
\eg weaken determinism-assumptions encapsulated in the faithfulness assumption by changing
the graphical objects etc., however, a thorough and systematic treatment of
faithfulness from this perspective turned out to be quite complex, so we will
leave this issue to future research for now.

Another faithfulness-related problem is discussed in §\ref{apdx:CSI}.

\subsection{Justification of Assumptions in the Main Text}

We briefly repeat the argument given in \citep{MethodPaper},
to justify assumption \ref{ass:faithful}.

Generally, a probability distribution $P$ is faithful to a DAG $G$
if independence $X \independent_P Y | Z$ with respect to $P$
implies d-separation $X \independent_G Y | Z$ with respect to $G$.
As discussed in \citep{MethodPaper},
this means if $G' \subset G$ is (strictly) sparser,
then faithfulness to $G'$ is (strictly) weaker than faithfulness to $G$.
Now, $\gDescr{G}_{R=r}\subset \gUnion{G}=\gVisible{G}$, so
"$P_M(\ldots)$ is faithful to $\gDescr{G}_{R=r}$" is weaker than
the standard assumption "$P_M(\ldots)$ is faithful to $\gVisible{G}$",
and similarly (excluding links involving $R$),
$\gDescr{\bar{G}}_{R=r}$ is sparser than what one would expect for a
"graph of the conditional model" (there is no selection-bias induced edges in
$\gDescr{\bar{G}}_{R=r}$) so "$P_M(\ldots|R=r)$ is faithful to $\gDescr{\bar{G}}_{R=r}$"
is also weaker than what one would expect to assume.
One can thus give an adjacency-faithfulness result that
essentially corresponds to standard-assumptions as explained above:

\begin{lemma}\label{lemma:faithful_is_faithful}
	Given $r$, assume both $P_M$ is faithful to $\gDescr{G}_{R=r}$
	and $P_M(\ldots|R=r)$ is faithful to $\gDescr{\bar{G}}_{R=r}$
	(we will refer to this condition as $r$-faithfulness,
	or $R$-faithfulness if it holds for all $r$).
	Then:
	\begin{equation*}
	\exists Z \txt{ s.\,t.\ }
	\left\{\quad\begin{aligned}
	X &\independent Y | Z \quad\txt{or}\\
	X, Y \neq R \txt{ and }X &\independent Y | Z, R=r
	\end{aligned}\quad\right\}
	\quad\Rightarrow\quad
	\txt{$X$ and $Y$ are not adjacent in $\gDescr{G}_{R=r}$}
	\end{equation*}
\end{lemma}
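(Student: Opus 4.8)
The plan is to derive the claim from the contrapositive of the two faithfulness hypotheses, using the elementary fact that in a DAG two adjacent nodes can never be d-separated by a conditioning set that avoids both of them. I would case-split according to which disjunct of the antecedent is witnessed by the given $Z$.

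In the first case, $X \independent Y \mid Z$ holds in $P_M$ (and, as is implicit in writing such a statement, $X,Y \notin Z$). Since $P_M$ is assumed faithful to the DAG $\gDescr{G}_{R=r}$, this independence forces $Z$ to d-separate $X$ from $Y$ in $\gDescr{G}_{R=r}$. Were $X$ and $Y$ adjacent there, the direct edge between them would be an active path under \emph{every} conditioning set avoiding $X$ and $Y$, contradicting d-separation; hence $X$ and $Y$ are non-adjacent in $\gDescr{G}_{R=r}$, which is exactly the conclusion. Note that this disjunct already covers the sub-case where one of $X, Y$ equals $R$.

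In the second case, $X,Y \neq R$ and $X \independent Y \mid Z, R=r$; the latter is precisely independence of $X$ and $Y$ given $Z$ in the conditioned distribution $P_M(\cdot\mid R=r)$. Applying the assumed faithfulness of $P_M(\cdot\mid R=r)$ to $\gDescr{\bar{G}}_{R=r}$ and repeating the d-separation argument, $X$ and $Y$ are non-adjacent in $\gDescr{\bar{G}}_{R=r}$. It then only remains to transport this to $\gDescr{G}_{R=r}$: by Definition \ref{def:g_descr}, $\gDescr{G}_{R=r}$ differs from $\gDescr{\bar{G}}_{R=r}$ only by adding edges of $\gVisible{G}[M]$ that are incident to $R$, so for $X,Y \neq R$ the presence or absence of the $X$--$Y$ edge is the same in both graphs; hence $X$ and $Y$ are non-adjacent in $\gDescr{G}_{R=r}$ as well.

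I do not expect a genuine obstacle here — the argument is essentially the one recorded in \citep{MethodPaper}. The only points that require care are: reading the statement $X \independent Y \mid Z, R=r$ correctly as independence in the conditioned measure $P_M(\cdot\mid R=r)$, so that it is the barred-graph faithfulness hypothesis that applies; the small bookkeeping step that edges between two non-$R$ vertices coincide in the barred and the unbarred descriptive graphs, so the conclusion for $\gDescr{\bar{G}}_{R=r}$ upgrades to one for $\gDescr{G}_{R=r}$; and the standing assumption — implicit in the phrase \emph{faithful to a DAG}, and guaranteed by weak regime-acyclicity (Definition \ref{def:acyclicity}) — that $\gDescr{G}_{R=r}$ and $\gDescr{\bar{G}}_{R=r}$ are acyclic, so that d-separation is the appropriate separation notion.
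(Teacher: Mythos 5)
Your proposal is correct and is essentially the paper's own argument: both rest on the contrapositive of the two faithfulness hypotheses together with the fact that a direct edge can never be blocked, the paper merely packaging this as a proof by contradiction while you run the two cases directly. Your explicit bookkeeping step that edges between non-$R$ vertices coincide in $\gDescr{\bar{G}}_{R=r}$ and $\gDescr{G}_{R=r}$ is a detail the paper's proof leaves implicit, but it is the same proof.
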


\begin{proof}
	The statement is symmetric under exchange of $X$ and $Y$, so
	it is enough to show $X \notin \gDescr\Pa_{R=r}(Y)$. 
	We do so by contradiction: Assume $X \in \gDescr\Pa_{R=r}(Y)$ and let $Z$ be arbitrary.
	$Z$ can never block the direct path $X \rightarrow Y$,
	so they are never d-separated $X \dependent_{\gDescr{G}_{R=r}} Y | Z$.
	By (the contra-position of) the faithfulness assumptions,
	thus $X \dependent_P Y | Z$ and if $X, Y \neq R$ also $X \dependent_P Y | Z, R=r$
	(the second statement is by definition
	the same as $X \dependent_{P(\ldots|R=r)} Y | Z$).
\end{proof}

\subsection{An Example that is not Strongly Faithful}\label{apdx:not_strongly_faithful}

Below are an example and discussion to shed some light on why the union-property (lemma \ref{lemma:union_is_union})
required an additional faithfulness assumption.

\begin{example}
	Not strongly $R$-faithful:\\
	\begin{minipage}{0.5\textwidth}
		\begin{tikzpicture}[domain=-2:2]
			\draw[->] (-2,0) -- (2,0) node[right] {$x$};
			\draw[->] (0,-0.2) -- (0,2);
			
			\draw[color=red]    (-2,0.5) -- (-0.1,0.5) -- (0.1,1.5) -- node[pos=0.5,above]{$f_Y(x)$} (2, 1.5);
			\draw[color=blue]   plot (\x,{0.5*(\x - sqrt(\x*\x))*sin(90*\x)});
			\draw[color=blue]	(-2, 1.2)node[anchor=south west] {$p(X|R=0)$};
			\draw[color=orange]   plot (\x,{0.5*(\x + sqrt(\x*\x))*sin(90*\x)});
			\draw[color=orange]	(2, 0.5)node[anchor=west] {$p(X|R=1)$};
		\end{tikzpicture}
	\end{minipage}
	\begin{minipage}{0.45\textwidth}
		For the functional relationships on the left, $Y$ is a function of $X$
		and $X \in \gUnion\Pa(Y)$, but
		$X \notin \gDescr\Pa\FixedR(Y)$ for both $r=0$ and $r=1$.
	\end{minipage}
\end{example}
This is a non-determinism issue (we could write $f_Y$ as a function of $R$ only in
the observational support of the union),
and is supposed to be excluded by faithfulness (of the union-model).
There should be $Z = \gUnion\Pa(Y)$ with $X \dependent_{\gUnion{G}} Y | Z, R$
(because the direct path cannot be blocked), but $X \independent Y | Z, R$
(because of the deterministic relation $R$ explains away $X$).
For cyclic models there is a subtle problem however:
If $Y$ is part of a directed cycle where $X$ is a parent of another node $Z$ in that cycle,
then possibly $X \dependent Y | Z, R$, \ie faithfulness may not be violated (formally),
because there is a link in the acyclification \citep{BongersCyclic} that "saves" us.

The problem formally also reveals itself as follows:
Faithfulness of the union-model implies that for every $Z$ (again because the direct
path cannot be d- or $\sigma$-blocked) $X \dependent Y | Z, R$,
which is equivalent (as can be seen \eg by the characterization of independence as
factorization of the joint) to $\exists r$ with $X \dependent Y | Z, R=r$,
which suggests that there is a context with this link.
But there could \eg be $Z \neq Z'$ with $X \dependent Y | Z, R=0$ and
$X \dependent Y | Z', R=1$, which in the cyclic case can (non-trivially) happen
by union-parents potentially not being valid separating-sets.

This cannot easily be solved by a minimality-condition
\citep[Def.\ 2.6]{BongersCyclic}
on parents either: In the example above both possible parent-sets of $Y$,
which are $\{X\}$ or $\{R\}$ are of cardinality $1$ so no unique minimal parent-set
exists, and \eg the choice via \citep[Def.\ 2.6]{BongersCyclic}
is not well-defined (which is not a problem, because normally
a suitable faithfulness assumption excludes deterministic relation-
ships; this is really a determinism issue, not a minimality issue).

\section{Details on Connections to JCI- and Transfer-Arguments}\label{apdx:ConnJCI}

This section contains proofs of the statements in §\ref{sec:ConnectToJCI} and examples.

\subsection{Inferring the Union-Graph}

Recall from remark \ref{rmk:EdgesFromR},
that edges from $R$ into directed union-cycles containing a child of $R$ cannot be
deleted by our independences. We will hence mostly focus on edges elsewhere in the graph,
using the "barred" notation ($\gDescr{\bar{G}}\FixedR$ etc.).
Generally, a causal model is only Markov to the acyclification
of its visible ("standard") graph $\Acycl(\gVisible{G}[M])$
while, for strongly regime-acyclic models we here have:

\theoremstyle{plain}
\newtheorem*{lemma_union_id}{Lemma \ref{lemma:UnionIdentifiable}}
\begin{lemma_union_id}
	Let $M$ be a strongly $R$-regime-acyclic, strongly $R$-faithful, causally sufficient model, then
	\begin{equation*}
	\gVisible{\bar{G}}[M] = \gUnion{\bar{G}}[M]
	= \cup_r \gDetect{\bar{G}}\FixedR[M]
	\end{equation*}
	is identifiable away from $R$ by ($R$-context-specific) independences.
\end{lemma_union_id}
\begin{proof}
	By lemma \ref{lemma:union_is_union},
	$\gUnion{G} = \cup_r \gDescr{G}\FixedR$,
	thus 
	(a) $\gUnion{\bar{G}} = \cup_r \gDescr{\bar{G}}\FixedR$.
	While $\gDetect{G}\FixedR \neq \gDescr{G}\FixedR$ in general,
	by prop.\ \ref{prop:markov} and ass.\ \ref{ass:faithful} (see §\ref{apdx:CSI}),
	$\gDescr{G}\FixedR \subset \gDetect{G}\FixedR \subset \gIdent{G}\FixedR$ thus
	(b) $\gDescr{\bar{G}}\FixedR \subset \gDetect{\bar{G}}\FixedR \subset \gIdent{\bar{G}}\FixedR$.
	
	Combining (a) with (b), thus
	\begin{equation*}
		\cup_r \gDetect{\bar{G}}\FixedR \overset{(b)}{\supset}
		\cup_r \gDescr{\bar{G}}\FixedR \overset{(a)}{=} \gUnion{\bar{G}}
		\txt.
	\end{equation*}
	On the other hand, by lemma \ref{lemma:no_phys_anc_anc},
	$\gIdent{G}\FixedR \subset \gUnion{G}$ and thus
	(c) $\gIdent{\bar{G}}\FixedR \subset \gUnion{\bar{G}}$,
	so that
	\begin{equation*}
		\cup_r \gDetect{\bar{G}}\FixedR
		\overset{(d)}{\subset}
		\cup_r \gIdent{\bar{G}}\FixedR
		\overset{(b)}{\subset}
		\gUnion{\bar{G}}\txt.
	\end{equation*}
\end{proof}

\subsection{Interring the Transfer-Graph}

\theoremstyle{plain}
\newtheorem*{lemma_phys_r1}{Lemma \ref{lemma:R1}}
\begin{lemma_phys_r1}
	If $R \notin \gUnion{\Anc}(Y)$, then
	$\gTransf\Pa_{R=r}(Y) = \gUnion\Pa(Y)$,
	\ie the change is non-physical (by observational non-accessibility).
\end{lemma_phys_r1}
\begin{proof}
	This follows directly from lemma \ref{lemma:PhysChangesAreRegimeChildren}.
\end{proof}
\newtheorem*{cor_phys_r1}{Cor.\ \ref{cor:R1}}
\begin{cor_phys_r1}
	If $R \notin \gUnionId{\Anc}(Y)$, then
	$\gTransf\Pa_{R=r}(Y) = \gUnionId\Pa(Y)$.
\end{cor_phys_r1}
\begin{proof}
	This follows directly from lemma \ref{lemma:R1} and rmk.\ \ref{rmk:EdgesFromR}
	(see also lemma \ref{lemma:g_detect}).
\end{proof}

If $R$ (or conditioning on $R$) does not change the distribution of
ancestors, no state-induced effects occur:

\newtheorem*{lemma_phys_r2}{Lemma \ref{lemma:R2}}
\begin{lemma_phys_r2}
	Assuming strong regime-acyclicity.
	If $X \in \gUnion\Pa( Y ) - \gIdent\Pa\FixedR( Y )$
	and $R \in \gUnion\Pa( Y )$,
	and	$\gUnion\Anc(R) \cap \gUnion\Anc(\gUnion\Pa(Y)-\{R\}) = \emptyset$,
	then
	$X \notin \gTransf\Pa( Y )$ (\ie the change is "physical" not
	just by state).
\end{lemma_phys_r2}

\begin{proof}
	By lemma \ref{lemma:split_P_eta}, the noise-terms of
	nodes in $\gUnion\Anc(Y)$ are unchanged by conditioning on $R$
	\ie $P(\eta_{\gUnion\Anc(Y)}|R) = P(\eta_{\gUnion\Anc(Y)})$
	and by corollary \ref{cor:solution_depends_on_regime_anc_only}a
	applied to $R\neq W \in \gUnion\Pa(Y)$ shows $W = F_W(\eta_{\gUnion\Anc(W)})$,
	with $\gUnion\Anc(W) \subset \gUnion\Anc(Y)$ thus
	$P(X_{\gUnion\Pa(Y)-\{R\}}|R) = P(X_{\gUnion\Pa(Y)-\{R\}})$.
	Therefore the support on parents did not change
	and the change must be physical.
\end{proof}
\newtheorem*{cor_phys_r2}{Cor.\ \ref{cor:R2}}
\begin{cor_phys_r2}
	Assuming strong regime-acyclicity.
	If $R\neq X \in \gUnionId\Pa( Y ) - \gIdent\Pa\FixedR( Y )$
	and $R \in \gUnionId\Pa( Y )$,
	and	$\gUnionId\Anc(R) \cap \gUnionId\Anc(\gUnionId\Pa(Y)-\{R\}) = \emptyset$,
	then
	\begin{enumerate}[label=(\alph*)]
		\item
		there is a link into the strongly connected component of $Y$
		that vanishes in $\gTransf{G}$, but not in $\gUnionId{G}$,
		\ie there is a physical change.
		\item
		if $Y$ is not part of a directed union-cycle, then
		$X \notin \gTransf\Pa( Y )$,
		\ie there is a physical change of this particular link.
	\end{enumerate}
\end{cor_phys_r2}
\begin{proof}
	Excluding $R$, $X \in \gUnionId\Pa( Y ) \Rightarrow  X \in \gUnion\Pa( Y )$.
	Similarly both $\gUnionId\Anc(R)$ and $\gUnionId\Anc(\gUnionId\Pa(Y)-\{R\})$
	exclude $R$, so we can replace them by $\gUnion\Anc$.
	Since $\gUnion{G} \subset \gUnionId{G}$,
	also $R \in \gUnionId\Pa( Y ) \Rightarrow R \in \gUnion\Pa( Y )$.
	
	Thus the lemma applies.
	the vanishing link starts at $X \neq R$ (thus is away from $R$)
	and ends at an element of the strongly-connected component of $Y$.
	If $Y$ is not part of a directed cycle, the strongly-connected component of
	$Y$ is simply $\{Y\}$, and there is only a unique choice.
\end{proof}

\subsection{Validity of Transfer}\label{apdx:TransferForPhys}

One can also use a transfer-argument to construct a test which
deletes edges from the union-graph only if there is evidence that
the mechanism did in fact change. See also §\ref{apdx:FiniteSample}.

Fix dependency measure $d$ and estimator $\hat{d}$.
Assume, using $\hat{d}$ (and some null-distribution and $p$-value threshold),
we found a link $X \rightarrow Y$ with identifiable (\eg by adjusting for $Z$)
controlled direct effect of $X$ on $Y$
and such that this link vanishes in one context $r_0$.
We want to distinguish between:
\begin{itemize}
	\item
	The nullhypothesis: The change in $P(X)$ suffices to explain
	the failure to reject independence on finite-data.
	\item
	The alternative: The mechanism (or the noise on $Y$) have changed.
\end{itemize}

On the $\hat{d}$-dependent context,
learn an estimator $\hat{P}_X$ of $P(X,Z|R=r_0)$ and $\hat{P}_{Y|X}$ of $P(Y|X,Z)$
(\ie of the kernel $x \mapsto f_Y(x,-)_* \eta_Y$
containing the observable information about $f_Y$ and $\eta_Y$)\footnote{Under the null-hypothesis,
	learning $g$ from the pooled data is ok, so even though in the alternative hypothesis $g$ changes,
	for rejecting the null, learning $g$ from the pooled data is fine, even though learning
	from a single or all other contexts might improve power.}
by some conditional-density learning method.
For a total of $K$ datasets of size $N$ each, draw
$((x_1, z_1), \ldots, (x_N, z_N))$ from $\hat{P}_X$,
then draw $y_i$ from $\hat{P}_{Y|X}(Y|X=x_i,Z=z_i)$.
On these datasets, generate dependence-measures (or test for independence)
using $\hat{d}$ leading to a distribution $\hat{P_d}$.
If the result for $\hat{d}$ on the original data in the $\hat{d}$-independent regime
is plausible under $\hat{P_d}$ (or the test results on the $K$ many datasets
are $1-\alpha$ often "independent"), then the changed support of $X$ is
sufficient to explain the "independence" (or rather the failure of $\hat{d}$ to
detect any dependence) in this regime
-- assuming $\hat{P}_{Y|X}$ approximates the true $P(Y|X,Z)$ sufficiently well
(see below).
Otherwise we can reject the null-hypothesis that the change in support of $X$ alone could
explain the absence of this link.

The reliance on sufficiently fast convergence of $\hat{P}_{Y|X}$ is
conceptually similar to the convergence of regressors
in conditional independence testing with regressing out.
\Ie when using a parametric model, for evaluating p-values, one 
has to take into account the additional number of degrees of freedom,
for non-parametric models, \eg bootstrapping approaches could be used.
We acknowledge that this is in practice a very difficult problem.
We leave it to future research, our present intent is to illustrate,
that this seems -- in principle -- to also be a testable hypothesis.

\subsection{Limiting (Extreme) Cases}

The following "extreme" case is formally trivial,
but provides some insights:

\begin{example}\label{example:p1}
	Given $P(R=r_0) = 1$ (which we typically exclude by
	the way we define regime-indicators, but which we can
	think of as a limiting case in practice),
	we observe:
	$P(\ldots|R=r_0)=P(\ldots)$, 
	so also the supports agree and
	$\gUnion{G} = \gDescr{G}_{R=r_0} = \gPhys{G}_{R=r_0}$.
	I.\,e., in this case our results collapse to the standard
	results for $\gUnion{G}$.
\end{example}

From the perspective that, for the single-context
case, the question about what is happening outside
the support should probably be considered purely philosophical,
this is a good sign: If our objects capture empirically accessible
information, then they should not make claims about the single-context case.

\end{document}